\def\eqref#1{equation~\ref{#1}}
\def\1{\bm{1}}
\def\va{{\bm{a}}}
\def\vf{{\bm{f}}}
\def\vg{{\bm{g}}}
\def\vh{{\bm{h}}}
\def\vm{{\bm{m}}}
\def\vp{{\bm{p}}}
\def\vq{{\bm{q}}}
\def\vs{{\bm{s}}}
\def\vu{{\bm{u}}}
\def\vx{{\bm{x}}}
\def\vy{{\bm{y}}}
\def\vsigma{{\bm{\sigma}}}
\def\mD{{\bm{D}}}
\def\mF{{\bm{F}}}
\def\mM{{\bm{M}}}
\def\mS{{\bm{S}}}
\def\mU{{\bm{U}}}
\def\mW{{\bm{W}}}
\DeclareMathAlphabet{\mathsfit}{\encodingdefault}{\sfdefault}{m}{sl}
\SetMathAlphabet{\mathsfit}{bold}{\encodingdefault}{\sfdefault}{bx}{n}
\newcommand{\R}{\mathbb{R}}
\newcommand{\hangin}{\goodbreak\hangindent=.35cm \noindent}
\theoremstyle{plain}
\newtheorem{theorem}{Theorem}[section]
\theoremstyle{definition}
\theoremstyle{remark}
\icmltitlerunning{Utility-based Perturbed Gradient Descent}
\begin{document}

\twocolumn[
\icmltitle{\large{Utility-based Perturbed Gradient Descent: An Optimizer for Continual Learning}}



\icmlsetsymbol{equal}{*}

\begin{icmlauthorlist}
\icmlauthor{Mohamed Elsayed}{school,amii}
\icmlauthor{A. Rupam Mahmood}{school,amii,cifar}
\end{icmlauthorlist}

\icmlaffiliation{school}{Department of Computing Science, University of Alberta, Edmonton, Canada}

\icmlaffiliation{amii}{Alberta Machine Intelligence Institute (Amii)}

\icmlaffiliation{cifar}{CIFAR AI Chair}

\icmlcorrespondingauthor{Mohamed Elsayed}{mohamedelsayed@ualberta.ca}
\icmlcorrespondingauthor{A. Rupam Mahmood}{armahmood@ualberta.ca}

\icmlkeywords{Machine Learning, ICML}

\vskip 0.3in
]




\printAffiliationsAndNotice{Preprint.\ Work in Progress.} 

\begin{abstract}
Modern representation learning methods often struggle to adapt quickly under non-stationarity because they suffer from catastrophic forgetting and decaying plasticity. Such problems prevent learners from fast adaptation since they may forget useful features or have difficulty learning new ones. Hence, these methods are rendered ineffective for continual learning. This paper proposes \emph{Utility-based Perturbed Gradient Descent} (UPGD), an online learning algorithm well-suited for continual learning agents. UPGD protects useful weights or features from forgetting and perturbs less useful ones based on their utilities. Our empirical results show that UPGD helps reduce forgetting and maintain plasticity, enabling modern representation learning methods to work effectively in continual learning.
\end{abstract}


\section{Introduction}
Learning online is crucial for systems that need to continually adapt to an ever-changing world. These learners perform updates as soon as the data arrives, allowing for fast adaptation. Such lifelong learners never stop learning and can run indefinitely. Thus, their required computation and memory should not grow when presented with more experiences. Moreover, it is desired if the learners are computationally cheap and maintain a small memory footprint, a criterion which we refer to as \emph{computational efficiency}. Therefore, desirable criteria for continual learning are fast online adaptation and computational efficiency.

The current gradient-descent methods prevent fast online adaption of continual learning systems. Such methods typically suffer from two main problems that slow down adaptation: decaying plasticity (Dohare et al.\ 2021) and catastrophic forgetting (McCloskey \& Cohen 1989). Decaying plasticity arises when the learner's ability to learn representations is hindered (e.g., when the number of saturated features increases, resulting in small gradients that obstruct fast changes to the weights). Catastrophic forgetting can be partly viewed as a result of shared representations, instead of sparse representations, in function approximation (French 1991, Liu et al.\ 2019). The weight updates are non-local---changes many weights---in the case of dense representations because many features are active for any given input, interfering with previously learned representations and potentially causing forgetting. Additionally, catastrophic forgetting is exacerbated by the inability of current gradient-descent methods to reuse previously learned useful features or protect them from change. It has been shown that current methods may destroy the most useful features under non-stationarity (Sutton 1986), making these methods de- and re-learn the features when the learners face similar or the same situations again.

Naturally, decaying plasticity and catastrophic forgetting co-occur in neural networks since neural networks use dense representations learned by gradient-based methods, and it might be hard to quickly change the functions they represent. Dense representation creates non-locality, resulting in forgetting, and many factors, such as vanishing gradients or correlated features, contribute to decaying plasticity.

Different approaches have been proposed to mitigate catastrophic forgetting. Typically, such methods are either replay-based methods (e.g., Chaudhry et al.\ 2019, Isele \& Cosgun 2018, Rolnick et al.\ 2019), regularization-based (e.g., Kirkpatrick et al.\ 2017, Aljundi et al.\ 2018, Aljundi et al.\ 2019), sparsity-inducing (e.g., Liu et al.\ 2019, Pan et al.\ 2021), or use dynamic architectures (e.g., Rusu et al.\ 2016, Schwarz et al.\ 2018). However, most methods are not able to satisfy our criteria of efficiency or fast online adaptation. Additionally, all methods addressing catastrophic forgetting still suffer from decaying plasticity, which can be mitigated by continual injection of noise (e.g., Ash \& Adams 2020, Zhou et al.\ 2019, Dohare et al.\ 2021,  Orvieto et al.\ 2022). However, such methods may still suffer from catastrophic forgetting.

Both problems of continual learning can be identified with the fact that gradient-descent methods are indifferent to how useful a feature is. When a feature becomes difficult to modify, contributing to decaying plasticity, the problem could be overcome by resetting or re-initializing that feature. Conversely, when a feature is useful and well-contributing to a task, it could be protected from further change and catastrophic forgetting. However, no existing gradient-descent methods contain such capabilities.

In this paper, we present Utility-based Perturbed Gradient Descent (UPGD), a novel online learning mechanism that protects useful weights or features and perturbs less useful ones based on their utilities. UPGD reuses useful features and builds new features based on existing ones, allowing fast adaptation. Moreover, UPGD estimates the utility of each weight or feature using linear computational complexity like SGD without storing past samples, making our method computationally scalable. Additionally, UPGD does not require the knowledge of task boundaries, making it a task-free continual learning method (Aljundi et al.\ 2019). 

The learning rule of UPGD contains two components helping in fast online adaptation: search and gradient. Each component is weighted by the utility estimates of the corresponding weights or features. The utility-informed search component helps find better weights and features through continual perturbation, which helps against decaying plasticity. The utility-based gradient is a modulated gradient signal---small for useful weights and large for less useful weights---that helps against catastrophic forgetting.


\section{Problem Formulation}
We use the online continual supervised learning setting where there is a stream of data examples. These data examples are generated from some non-stationary \emph{target function} ${f}_t$ mapping the input to the output, where the input-output pair at time $t$ is $(\vx_t, \vy_t)$. The \emph{learner} is required to predict the output given the input vector $\vx_t \in \R^d$ by estimating the target function ${f}_t$. We consider the case where ${f}_t$ is not arbitrarily non-stationary but rather locally stationary in time (e.g., slowly changing or can be divided into stationary tasks). We further consider target functions with regularities that can potentially reoccur. The performance is measured with some loss, $\mathcal{L}(\vy_t, \hat{\vy}_t)$, where $\vy_t\in \R^m$ is the target vector and $\hat{\vy}_t\in \mathbb{R}^m$ is the predicted output. Specifically, the mean squared error is used in regression, and cross-entropy is used in classification. The learner is required to reduce the loss by matching the target. The performance of the learner is measured based on the average online evaluation metric $\overline{E}$ over all time steps, which is given by
\begin{align}
\overline{E}(T) &=  \frac{1}{T}\sum_{t=1}^{T} E(\vy_t, \hat{\vy}_t), \label{equ:online-metric}
\end{align}
where $E$ is the sample evaluation metric (e.g., accuracy or loss), and $T$ is the total number of time steps.
We note here that this problem formulation has been introduced before by Caccia et al.\ (2020), where the performance is measured based on the online evaluation metric; however, it was assumed that we start with pre-trained learners. Here, we consider the setting where learners learn from scratch. Note that this online evaluation metric is similar to the cumulative sum of rewards in reinforcement learning.

Consider a neural network with $L$ layers that outputs the predicted output $\hat{\vy}$. The neural network is parametrized by the set of weights $\mathcal{W}=\{\mW_1,...,\mW_L\}$, where $\mW_l$ is the weight matrix at the $l$-th layer, and its element at the $i$th row and the $j$-th column is denoted by $W_{l,i,j}$. 
During learning, the parameters of the neural network are changed to reduce the loss. 
At each layer $l$, we get the activation output $\vh_{l}$ by applying the activation function $\boldsymbol\sigma$ to the activation input $\va_{l}$: $\vh_{l} = \boldsymbol\sigma(\va_{l})$.
We simplify notations by defining $\vh_0 \doteq \vx$.
The activation output $\vh_{l}$ is then multiplied by the weight matrix $\mW_{l+1}$ of layer $l+1$ to produce the next activation input: ${a}_{l+1,i} = \sum_{j=1}^{|\vh_{l}|} {{W}_{l+1,i,j}}{h}_{l,j}$. 
We assume here that the activation function is element-wise activation for all layers except for the final layer $L$ in classification tasks, where it becomes the softmax function.


\subsection{Decaying Plasticity}
Neural plasticity can be defined as the ability of neural networks to change in response to some stimulus (Konorski 1948, Hebb 1949). In artificial neural networks, we can think of plasticity as the ability to change the function represented by the neural network. Note that this definition is based on the ability to change the function, not the underlying weights. For example, a learner can update the weights in a constant-initialized neural network. Still, its ability to change the function is quite limited because of the feature symmetries, although the weights can be changed. We note here that the term plasticity is different from representation power or \emph{capacity}, which refers to the range of functions that can be represented by some architecture. In this paper, we use networks with the same capacity and show they lose plasticity when used with typical gradient-based methods.

Our online evaluation metric favors methods that adapt quickly to the changes since it is not a function of the final performance only. A plastic learner can adapt more quickly to changes than a less plastic one and therefore achieve higher cumulative average performance. Such an evaluation metric is natural to measure plasticity when the learner is presented with sequential tasks that require little transfer between them. Input-permuted tasks (e.g., permuted MNIST) satisfy this criterion since the representations learned in one task are somewhat irrelevant to the next task. When current gradient-based methods are presented with sequential input-permuted tasks, their performance degrades. Such degradation is partially explained by many hypotheses, such as increased numbers of saturated features incapable of fast adaptation (Dohare et al.\ 2021, Abbas et al.\ 2023) or the changing loss landscape under non-stationarity, making function adjustment harder (Lyle et al.\ 2023).


\subsection{Catastrophic Forgetting}

Catastrophic forgetting is the tendency of a neural network to forget past information after learning from new experiences. For example, when an agent learns two tasks sequentially, A and B, its ability to remember A would degrade catastrophically after learning B. We extend that definition to be concerned about learned features instead of tasks.

Forgetting is underexamined in the online setting where agents never stop learning. Our metric allows us to naturally have the relearning-based metric (Hetherington \& Seidenberg 1989) to measure forgetting by looking at how long it takes to relearn a new task again. This is suitable for online learning since we do not stop the agent from learning to evaluate its performance offline, which is required by the retention-based metric (McCloskey \& Cohen 1989).

Separating catastrophic forgetting from decaying plasticity is challenging since it is unclear how to create a network that does not lose plasticity. Most works addressing forgetting use networks that lose plasticity, which adds to performance degradation coming from forgetting. We also show that even linear neural networks (e.g., using identity activations) can lose plasticity too, which makes the problem of separating them more challenging.

Studying catastrophic forgetting can be done using a problem that requires transfer from one task to another. An ideal learner should be able to utilize previously learned useful features from one task to another. However, a typical gradient-based method (e.g., SGD) keeps re-learning useful features again and again because of forgetting, resulting in performance degradation.

In this paper, we present a method to reduce forgetting and maintain plasticity in a principled way by protecting useful weights or features and removing less useful ones.


\section{Method}
Estimating the utility of weights and features can help protect useful weights and features and get rid of less useful ones. The utility of a weight or a feature in a neural network used by a continual online learner from time step $t$ to time step $t+h$ can be defined as the averaged instantaneous utility over these time steps, which can be written as follows:
\begin{align*}
\mathcal{U}_{t:t+h} & \doteq \frac{1}{k} \sum_{k=0}^{h} \mathcal{U}_{t+k}.
\end{align*}
Since continual online systems run indefinitely, we are interested in the averaged instantaneous utility over all future time steps starting from time step $t$, which is given by 
\begin{align*}
\overline{\mathcal{U}}_t & \doteq \lim_{k \rightarrow \infty} \mathcal{U}_{t:t+k}.
\end{align*}
However, the average of future utilities is unknown at time step $t$. The learner can only compute the instantaneous utility $\mathcal{U}_{t}$. One can estimate such an unknown quantity by averaging over the past instantaneous utilities (e.g., utility traces), hoping that the past resembles the future. In our analysis, we focus on computing the instantaneous utility, but we use utility traces in our experiments.

The true instantaneous utility of a weight in a neural network can be defined as the change in the loss after removing that weight (Mozer \& Smolensky 1988, Karnin 1990). An important weight, when removed, should increase the loss. The utility matrix corresponding to $l$-th layer connections is $\mU_l(Z)$, where $Z=(X, Y)$ denotes the sample. We drop the time notation for ease of writing but emphasize the instantaneity of utility by writing it as a function of the sample. The true utility of the weight $i,j$ in the $l$-th layer for the sample $Z$ is defined as follows:
\begin{align}
U_{l,i,j}(Z) & \doteq \mathcal{L}(\mathcal{W}_{\neg[l,i,j]}, Z) - \mathcal{L}(\mathcal{W},Z),
\end{align}
where $\mathcal{L}(\mathcal{W}, Z)$ is the sample loss given the set of parameters $\mathcal{W}$, and $\mathcal{W}_{\neg[l,i,j]}$ is the same as $\mathcal{W}$ except the weight $W_{l,i,j}$ is set to 0.

Similarly, we define the true utility of a feature $i$ at layer $l$, which is represented as the change in the loss after the feature is removed. The utility of the feature $i$ in the $l$-th layer is given by
\begin{align}
u_{l,j}(Z) & \doteq \mathcal{L}(\mathcal{W}, Z | h_{l,j} = 0) - \mathcal{L}(\mathcal{W} , Z),
\end{align}
where $h_{l,j} = 0$ denotes setting the activation of the feature to zero (e.g., by adding a mask set to zero).

Note that both these utility measures are global measures, and they provide a total ordering for weights and features according to their importance. However, computing such a true utility is prohibitive since it requires additional $N_w$ or $N_f$ forward passes, where $N_w$ is the total number of weights and $N_f$ is the total number of features. Approximating the true utility helps reduce the computation needed to compute the utility such that no additional forward passes are needed.


\subsection{Approximated Weight Utility}
We approximate the true utility of weights by a second-order Taylor approximation. To write the utility $u_{l,i,j}$ of the connection $ij$ at layer $l$, we expand the true utility around the current weight of the connection $ij$ at layer $l$ and evaluate it at the value of that weight being zero. The quadratic approximation of $U_{l,i,j}(Z)$ can be written as
\begin{align*}
    U_{l,i,j}(Z) &= \mathcal{L}(\mathcal{W}_{\neg[l,i,j]}, Z) - \mathcal{L}(\mathcal{W} , Z)\\
    &\approx \mathcal{L}(\mathcal{W} , Z) + \frac{\partial \mathcal{L}(\mathcal{W}, Z)}{\partial W_{l,i,j}}(0 - W_{l,i,j}) \\ 
    & \quad +\frac{1}{2}\frac{\partial^2 \mathcal{L}}{\partial W^2_{l,ij}} (0-W_{l,ij})^2 - \mathcal{L}(\mathcal{W} , Z) \\
    &= -\frac{\partial \mathcal{L}(\mathcal{W}, Z)}{\partial W_{l,i,j}} W_{l,i,j} + \frac{1}{2}\frac{\partial^2 \mathcal{L}(\mathcal{W}, Z)}{\partial W^2_{l,ij}} W^2_{l,ij}.
\end{align*}
We refer to the utility measure containing the first term as the \textit{first-order approximated weight utility}, and the utility measure containing both terms as the \textit{second-order approximated weight utility}. The computation required for the second-order term has quadratic complexity. Therefore, we further approximate it using HesScale (Elsayed \& Mahmood 2022; see Appendix \ref{appendix:hesscale}), making the computation of both of these approximations have linear complexity. Moreover, we present a way for propagating our approximated utilities by the \textit{utility propagation theorem} in Appendix \ref{appendix:utility-propagation}.

\subsection{Approximated Feature Utility}
Here, we derive the global utility of a feature $j$ at layer $l$, which is represented as the difference between the loss when the feature is removed and the original loss. To have an easier derivation, we add a mask (or gate) on top of the activation output: $\bar{\vh_{l}} = \vm_{l} \circ \vh_{l}$. Note that the weights of such masks are set to ones and never change throughout learning. The quadratic approximation of $u_{l,j}(Z)$ can be written as
\begin{align*}
u_{l,j}(Z) &= \mathcal{L}(\mathcal{W}, Z | m_{l,j} = 0) - \mathcal{L}(\mathcal{W} , Z) \nonumber \\
&\approx \mathcal{L}(\mathcal{W} , Z) + \frac{\partial \mathcal{L}}{\partial m_{l,i}}(0 - m_{l,j}) \\
& \quad + \frac{1}{2}\frac{\partial^2 \mathcal{L}}{\partial m^2_{l,i}} (0 - m_{l,j})^2 \nonumber - \mathcal{L}(\mathcal{W} , Z)\\
&= -\frac{\partial \mathcal{L}}{\partial m_{l,i}}+ \frac{1}{2}\frac{\partial^2 \mathcal{L}}{\partial m^2_{l,i}}.
\end{align*}

We further approximate it using HesScale, making the computation have linear computational complexity.

When we use an origin-passing activation function, we can instead expand the loss around the current activation input $a_{l,i}$ without the need to use a mask. Note that derivatives with respect to the activation inputs are available from backpropagation. The feature utility can be simply:
\begin{align}
u_{l,j}(Z) &= \mathcal{L}(\mathcal{W}, Z | a_{l,j} = 0) - \mathcal{L}(\mathcal{W} , Z) \nonumber \\
&\approx  -\frac{\partial \mathcal{L}}{\partial a_{l,i}} a_{l,i} + \frac{1}{2}\frac{\partial^2 \mathcal{L}}{\partial a^2_{l,i}} a^2_{l,i}. \nonumber
\end{align}
Moreover, the approximated feature utility can be computed using the approximated weight utility, which gives rise to the \textit{conservation of utility} property. We show such a relationship and its proof in Appendix \ref{appendix:feature-weight-connection}.


\subsection{Utility-based Perturbed Search}
Searching in the weight space or the feature space help find better weights and features. We devise a novel learning rule based on utility-informed search. Specifically, we protect useful weights or features and perturb less useful ones. We show the rule of Utility-based Perturbed Search (UPS) to update the weights. We emphasize here that utility-based perturbed search does not move in the same direction as the gradient; hence, it is not a gradient-descent update. The update rule of UPS is defined as follows:
\begin{align}
w_{l, i,j} &\leftarrow w_{l, i,j} - \alpha\xi(1-\Bar{U}_{l,i,j}),
\end{align}
where $\xi$ is a noise sample, $\alpha$ is the step size, and $\Bar{U}_{l,i,j}$ is a scaled utility with a minimum of $0$ and a maximum of $1$. Such scaling helps protect useful weights and perturb less useful ones. For example, a weight with $\Bar{U}_{l,i,j}=1$ is not perturbed, whereas a weight with $\Bar{U}_{l,i,j}=0$ is perturbed by the whole noise sample. The noise type and amount play an important role in search. Moreover, search difficulty is a function of the search space size, meaning that we can expect searching in the feature space is easier than in the weight space. We can retrieve the weight-wise and feature-wise UPS algorithms by dropping the gradient information from the update equation in Algorithm \ref{alg:upgd-weight-global} and Algorithm \ref{alg:upgd-feature-global}. UPS learning rule resembles an evolutionary process on the weights or the features where each time step resembles a new population of weights or features, and the scaled utility resembles the fitness function that either keeps the useful (fittest) weights or features or perturbs the less useful ones.


\subsection{Utility-based Perturbed Gradient Descent}
Our aim is to write an update equation to protect useful weights and replace less useful ones using search and gradients.  We show here the rule of Utility-based Perturbed Gradient Descent (UPGD) to update the weights. The update equation is given by
\begin{align}
    w_{l, i,j} \leftarrow w_{l, i,j} - \alpha\left(\frac{\partial \mathcal{L}}{\partial w_{l, i,j}} + \xi \right) \left(1-\Bar{U}_{l,i,j}\right).\label{equ:upgd}
\end{align}

The utility information in UPGD works as a gate for perturbed gradients. For important weights with utility $\bar{U}_{l,i,j}=1$, the weight is not updated, whereas unimportant weights with utility $\bar{U}_{l,i,j}=0$ get updated by the whole perturbed gradient information. We note here that UPGD is able to introduce plasticity by perturbing less useful weights and reduce forgetting by protecting useful weights.

Another variation of UPGD, we call \textit{non-protecting UPGD}, is to add the utility-based perturbation to the gradient as:
\begin{align}
    w_{l, i,j} \leftarrow w_{l, i,j} - \alpha\left[\frac{\partial \mathcal{L}}{\partial w_{l, i,j}} + \xi (1-\Bar{U}_{l,i,j})\right]. \label{equ:nonprotecting-upgd}
\end{align}
However, such an update rule can only help against decaying plasticity, not catastrophic forgetting. This is because useful weights are not protected from change by gradients.

Utility scaling is important for the UPGD and UPS update equations. We devise two kinds of scaling: global and local. The global scaled utility requires the maximum utility of all weights or features at every time step. The scaled utility for the global-utility UPGD and UPS is given by $\Bar{U}_{l,i,j} = \phi({U}_{l,i,j} / \eta)$ for weights, where $\eta$ is the maximum utility of the weights and $\phi$ is the scaling function (e.g., Sigmoid). For feature-wise UPGD and UPS, the scaled utility is given by $\Bar{u}_{l,j} = \phi({u}_{l,j} / \eta)$. We show the pseudo-code of our method using the global scaled utility in Algorithm \ref{alg:upgd-weight-global} for weight-wise UPGD and in Algorithm \ref{alg:upgd-feature-global} for feature-wise UPGD. The local UPGD and UPS do not require the global max operation since it normalizes the outgoing weight vector for each feature. Specifically, the scaled utility is given by $\Bar{U}_{l,i,j} = \phi\big({U}_{l,i,j}/\sqrt{\sum_{j}{U}^2_{l,i,j}}\big)$ for weight-wise UPGD and UPS. For features, the scaled utility is given by $\Bar{u}_{l,j} = \phi\big({u}_{l,j}/\sqrt{\sum_{j}{u}^2_{l,j}}\big)$. We show the pseudo-code of our method using the local scaled utility in Algorithm \ref{alg:upgd-weight-layerwise} for weight-wise UPGD and in Algorithm \ref{alg:upgd-feature-layerwise} for feature-wise UPGD.


\subsection{Perturbed Gradient Descent Methods}
One can perturb all weights evenly, which can give rise to a well-known class of algorithms called Perturbed Gradient Descent (PGD). The learning rule of PGD is given by:
\begin{align}
    w_{l, i,j} \leftarrow w_{l, i,j} - \alpha\left[\frac{\partial \mathcal{L}}{\partial w_{l, i,j}} + \xi \right].\label{equ:pgd}
\end{align}
where $\xi$ is a noise perturbation that can be uncorrelated, giving PGD (Zhou et al.\ 2019), or anti-uncorrelated, giving Anti PGD (Orvieto et al.\ 2022). Anti-correlated perturbation at time $t+1$ is given by $\xi_{t+1} = \zeta_{t+1} - \zeta_{t}$, where $\zeta_{t+1}$ and $\zeta_{t}$ are perturbations sampled from $\mathcal{N}(0,1)$, whereas uncorrelated perturbation at time $t+1$ is given by $\xi_{t+1} = \zeta_{t+1}$. In this paper, we care about online methods, so we only consider stochastic versions of these methods. When all scaled utilities are zeros in UPGD (see Eq.\ \ref{equ:upgd} and Eq.\ \ref{equ:nonprotecting-upgd}), UPGD reduces to PGD methods.

Many works have shown the role of noise in improving generalization and avoiding bad minima (e.g., PGD, Anti-PGD) in stationary problems. A stochastic version of perturbed gradient descent (Neelakantan et al.\ 2015) helps escape bad minima and improve performance and generalization.

It has not been shown before that these PGD methods can help improve plasticity; however, recent work by Dohare et al.\ (2021) showed that noise injection by resetting features helps maintain plasticity. We hypothesize that stochastic perturbed gradient descent methods might help too.

On the other hand, it has been shown that a stochastic PGD with a weight decay algorithm, known as \emph{Shrink and Perturb} (Ash \& Adams 2020), can help maintain plasticity in continual classification problems (Dohare et al.\ 2022). The learning rule of shrink and perturb can be written as
\begin{align}
     w_{l, i,j} \leftarrow \rho w_{l, i,j} - \alpha\left[\frac{\partial \mathcal{L}}{\partial w_{l, i,j}} + \xi \right].
\end{align}
where $\rho =1-\lambda\alpha$ and $\lambda$ is the weight decay factor. When no noise is added, the update reduces to SGD with weight decay (Loshchilov \& Hutter 2019), known as SGDW.

\subsection{Utility-based Perturbed Gradient Descent with Weight Decay}
Combining weight decay with our UPGD learning rules, one can write the UPGD with weight decay as follows:
\begin{align}
    w_{l, i,j} \leftarrow \rho w_{l, i,j} - \alpha\left(\frac{\partial \mathcal{L}}{\partial w_{l, i,j}} + \xi \right) \left(1-\Bar{U}_{l,i,j}\right).
\end{align}

Similarly, the non-protecting UPGD learning rule with weight decay can be written as
\begin{align}
    w_{l, i,j} \leftarrow \rho w_{l, i,j} - \alpha\left[\frac{\partial \mathcal{L}}{\partial w_{l, i,j}} + \xi (1-\Bar{U}_{l,i,j})\right].
\end{align}

\begin{algorithm}[ht]
\caption{Weight-wise UPGD with global scaled utility}\label{alg:upgd-weight-global}
\begin{algorithmic}
\STATE {\bfseries Require:} Neural network $f$ with weights $\{\mW_1,...,\mW_L\}$ and a stream of data $\mathcal{D}$.
\STATE {\bfseries Require:} Step size $\alpha$ and decay rate $\beta \in [0,1)$.
\STATE {\bfseries Require:} Initialize $\{\mW_1,...,\mW_L\}$
\STATE {\bfseries Require:} Initialize time step $t\leftarrow0$; $\eta\leftarrow-\infty$
\FOR{$l$ in $\{L,L-1,...,1\}$} 
\STATE $\mU_{l} \leftarrow \mathbf{0}$; $\hat{\mU}_{l} \leftarrow \mathbf{0}$
\ENDFOR
\FOR{$(\vx, \vy)$ in $\mathcal{D}$}
\STATE $t\leftarrow t+1$
\FOR{$l$ in $\{L,L-1,...,1\}$}
\STATE $\mF_{l},\mS_{l}\leftarrow$\texttt{GetWeightDerivatives}($f, \vx, \vy, l)$
\STATE $\mM_{l} \leftarrow \sfrac{1}{2}\mS_{l}\circ\mW^2_{l} - \mF_{l}\circ\mW $
\STATE $\mU_{l} \leftarrow \beta\mU_{l} + (1-\beta)\mM_l $
\STATE $\hat{\mU}_{l} \leftarrow \mU_{l}/(1-\beta^{t})$
\IF{$\eta < \texttt{max}(\hat{\mU}_{l})$}
\STATE $\eta \leftarrow \texttt{max}(\hat{\mU}_{l})$
\ENDIF
\ENDFOR
\FOR{$l$ in $\{L,L-1,...,1\}$}
\STATE Sample noise matrix $\boldsymbol{\xi}$
\STATE $\Bar{\mU}_{l} \leftarrow \phi{(\hat{\mU_{l}}/\eta)}$
\STATE $\mW_{l} \leftarrow \mW_{l} - \alpha (\mF_{l} + \boldsymbol{\xi})\circ(1-\Bar{\mU}_{l})$
\ENDFOR
\ENDFOR
\end{algorithmic}
\end{algorithm}


\section{Experiments}

\label{section:experiments}
In this section, we design and perform a series of experiments to estimate the quality of our approximated weight or feature utilities. In addition, we showcase the effectiveness of searching in weight or feature space. We then evaluate weight-wise and feature-wise UPGD in mitigating decaying plasticity and catastrophic forgetting issues using non-stationary problems based on MNIST (LeCun et al.\ 1998), EMNIST (Cohen et al.\ 2017), and CIFAR-10 (Krizhevsky 2009) datasets.

Although UPS and UPGD can be used in principle in the offline learning setting, the focus of our paper is continual online learning. The performance of continual learners is evaluated using the metric in Eq.\ \ref{equ:online-metric}, where we use the average online loss for regression tasks and average online accuracy for classification tasks.

In each of the following experiments, a hyperparameter search is conducted. Our criterion was to find the best set of hyperparameters for each method in that search space which minimizes the area under the loss curve (in regression tasks) or maximizes the area under the accuracy curve (in classification tasks). Unless stated otherwise, we averaged the performance of each method over $20$ independent runs.


\subsection{Quality of Approximated Utility Ordering}
A high-quality approximation should give a similar ordering of weights or features. We use the ordinal correlation measure of Spearman to quantify the quality of our utility approximations. We start by showing the correlation between different approximated weight utilities and the true utility. A network of a single hidden layer containing $50$ units is used. The network has five inputs and a single output. The target of an input vector is the sum of two inputs out of the five inputs. The inputs are sampled from $U[-0.5, 0.5]$. The weights are initialized by Kaiming initialization (He et al.\ 2015). SGD is used to optimize the parameters of the network to minimize the mean squared error with a total of $2000$ samples. At each time step, the Spearman correlation metric is calculated for first-order and second-order approximated global weight utility against the random utility and the weight-magnitude utility. The Spearman correlation is measured per sample based on $5\times 50+50+50+1=351$ items coming from weights and biases. We report the global correlations between the true utility and approximated weight utilities in Fig.\ \ref{fig:global-weight-utility} and defer the local correlations to Appendix \ref{appendix:utility-additional}. We use ReLU (Nair \& Hinton 2010) activations here and report the results with Tanh and LeakyReLU (Maas et al.\ 2013) in Appendix \ref{appendix:utility-additional}.

The correlation is the highest for the second-order approximated utility during learning up to convergence. On the other hand, the first-order approximated utility becomes less correlated when the learner converges since the gradients become very small. The weight-magnitude utility shows a small correlation to the true utility.  We use the random utility as a reference which maintains zero correlation with the true utility, as expected.

\begin{figure}[ht]
\centering
\subfigure[Weight-wise]{
    \includegraphics[width=0.46\columnwidth]{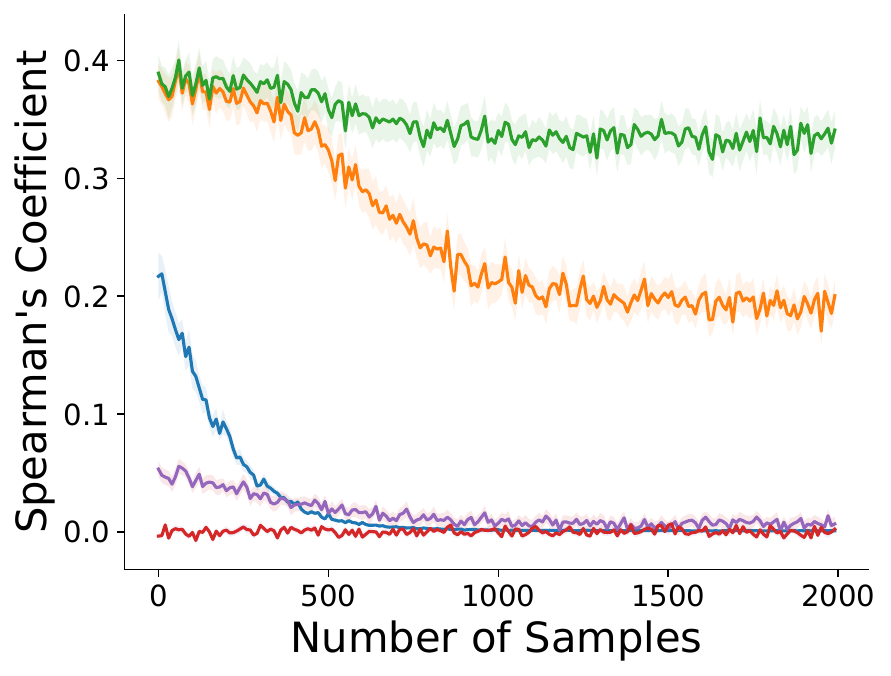}
    \label{fig:global-weight-utility}
}
\subfigure[Feature-wise]{
    \includegraphics[width=0.46\columnwidth]{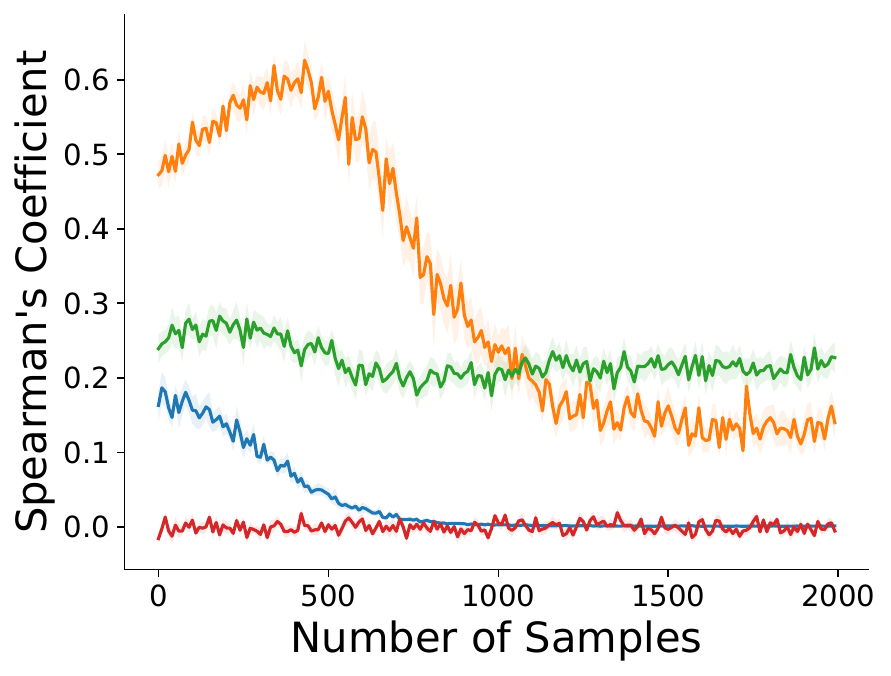}
    \label{fig:global-feature-utility}
}
\includegraphics[width=0.95\columnwidth]{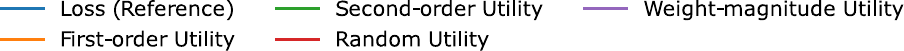}
\vspace{-0.2cm}
\caption{Spearman rank correlation between the true utility and global approximated utilities.}
\label{fig:global-utility}
\vspace{-0.2cm}
\end{figure}

We repeat the experiment to compute the approximated global feature utilities against the true utility and report the results in Fig.\ \ref{fig:global-feature-utility}. The network used has two hidden layers, each containing 50 units. The Spearman correlation is measured at each sample based on $50\times2=100$ items coming from features. We use ReLU activations here, and we report the results with Tanh and LeakyReLU in Appendix \ref{appendix:utility-additional} in addition to the local correlations. The results are similar to the weight-wise results. However, the approximated first-order and second-order both have a low correlation. This result suggests that second-order Taylor's approximation may not be sufficient to approximate the true feature utility. For that reason, We can expect that the weight-wise UPGD methods would perform better than the feature-wise ones.


\subsection{Utility-based Search Performance on MNIST}
In this experiment, we evaluate our utility-based search method in minimizing a loss using utility-informed search. We use MNIST as our stationary task. The learner is trained online for a 1M time step where only one input-output pair is presented at each time step. A network of two hidden layers, the first has 300 units, and the second has 150 units is used. The weights are initialized by Kaiming initialization. We report the results in Fig.\ \ref{fig:mnist-search} with ReLU activations. The type of noise used controls the performance of search optimizers. We noticed that the anti-correlated noise helps in first-order and second-order approximated utilities more than uncorrelated noise (shown in Appendix \ref{appendix:additional-mnist-ups}). The results show the effectiveness of utility-based search in reducing the loss against the baseline that perturbs all weights or features evenly (e.g., uses random utility). We note that utility-informed search alone, although being able to reduce the loss, is not as effective as using gradient information. When UPGD is used with MNIST, we notice an improvement in performance compared to SGD (see Appendix \ref{appendix:additional-mnist-upgd}).

\begin{figure}[ht]
\centering
\subfigure[Weight-wise]{
    \includegraphics[width=0.46\columnwidth]{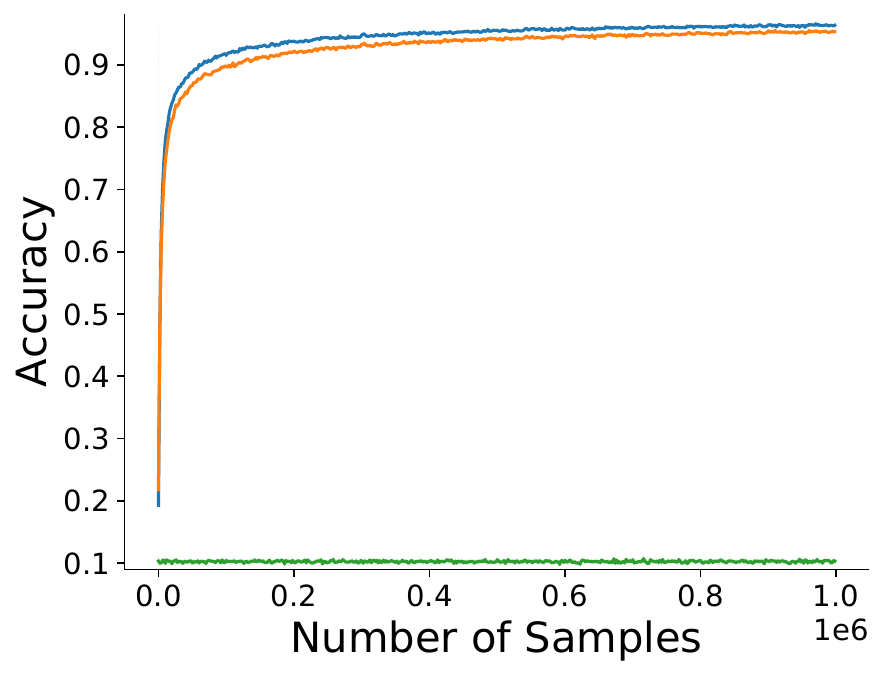}
    \label{fig:mnist-weight-search-anticorr}
}
\subfigure[Feature-wise]{
    \includegraphics[width=0.46\columnwidth]{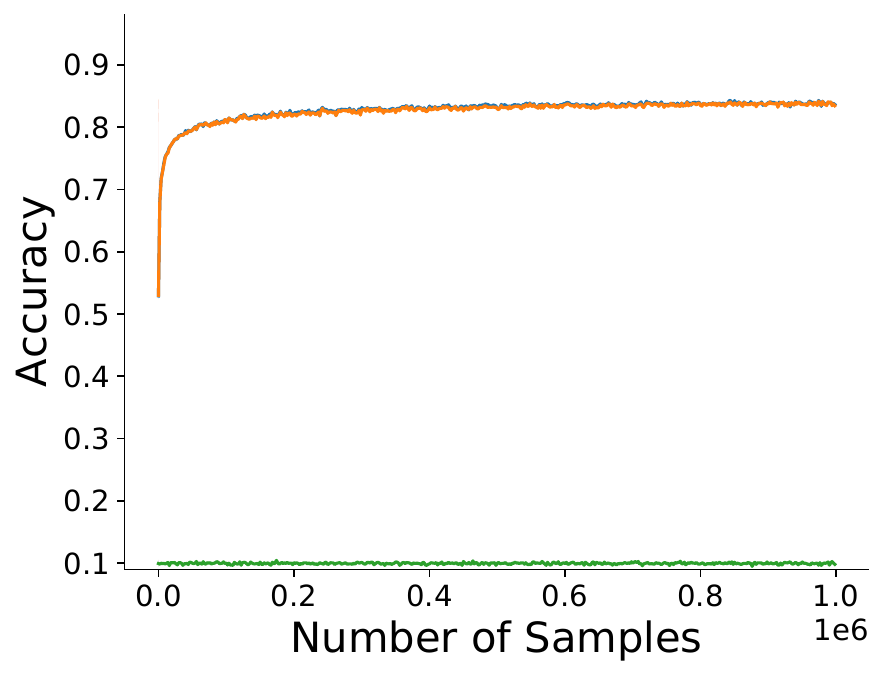}
    \label{fig:mnist-feature-search-anticorr}
}
\includegraphics[width=0.9\columnwidth]{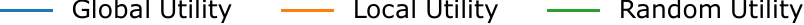}
\vspace{-0.2cm}
\caption{Performance of Utility-based Search on MNIST with first-order global and local utilities against a random utility.}
\label{fig:mnist-search}
\vspace{-0.2cm}
\end{figure}


\subsection{UPGD Performance on Nonstationary Toy Problem}
We present a simple regression problem for which the target at time $t$ is given by $y_t = a \sum_{i\in \mathcal{S}} x_{t,i}$, where $x_{t,i}$ is the $i$th entry of input vector at time $t$, $\mathcal{S}$ is the input set, and $a\in \mathbb{R}$. We introduce non-stationarity using two ways: changing the multiplier $a$ or changing the input set $\mathcal{S}$. In this problem, the task is to add two inputs out of 16 inputs. The learner is required to match the targets by minimizing the mean-squared error. The learner uses a multi-layer linear network that has two hidden layers containing $300$ and $150$ units, respectively. The network is linear since the activation used is the identity activation ($\sigma(\vx)=\vx$). Here, we show the results and give the experimental details in Appendix \ref{appendix:details-toy}.

\begin{figure}[ht]
\centering
\subfigure[Weight-wise]{
    \includegraphics[width=0.46\columnwidth]{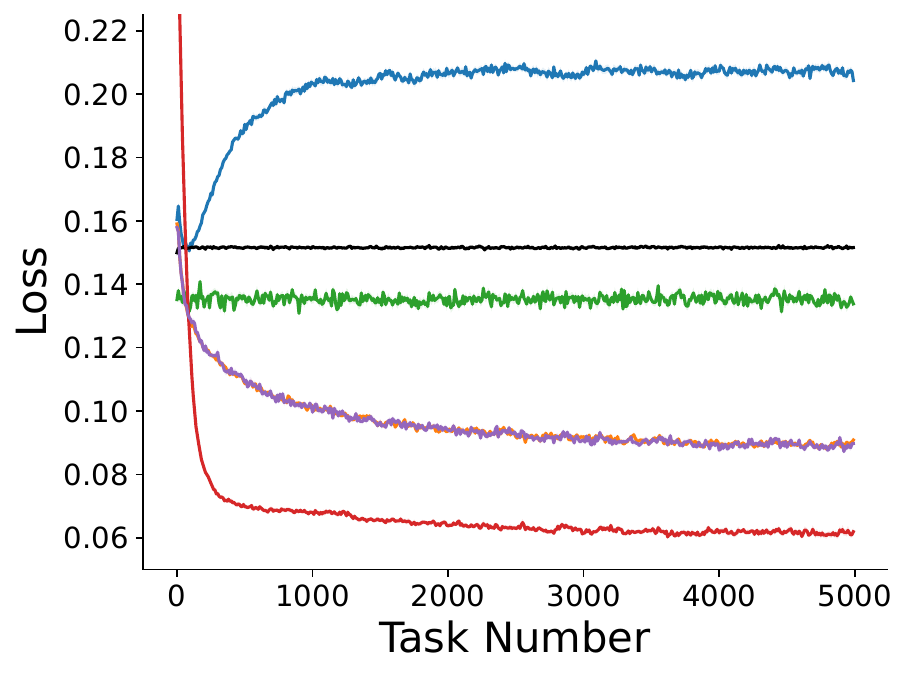}
    \label{fig:ChangingAvgInputs-FO-global-weight}
 }
\subfigure[Feature-wise]{
    \includegraphics[width=0.46\columnwidth]{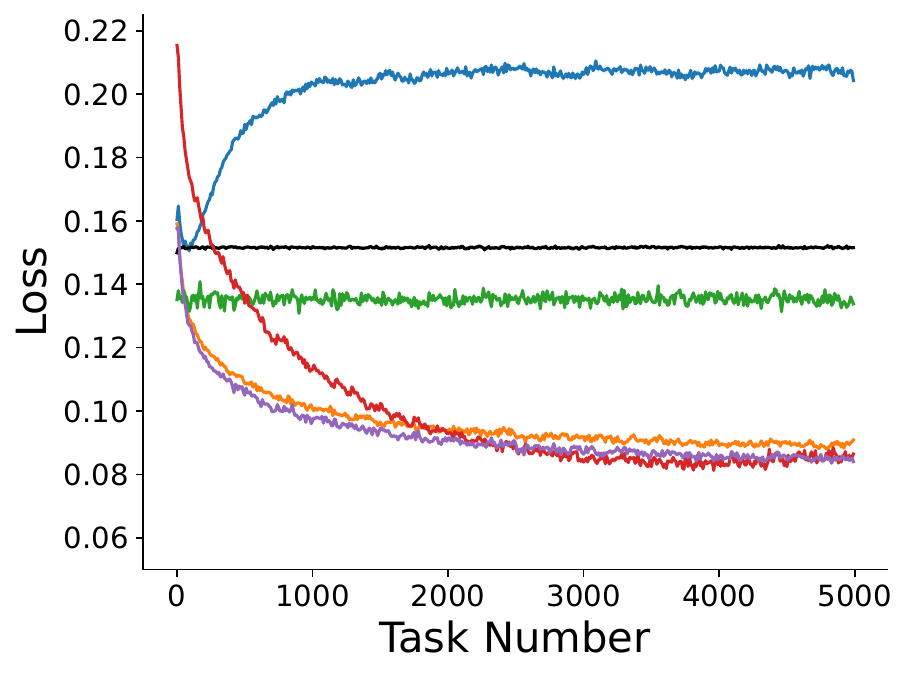}
    \label{fig:ChangingAvgInputs-FO-global-feature}
}
\includegraphics[width=0.9\columnwidth]{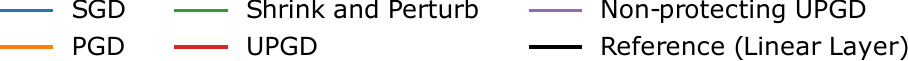}
\vspace{-0.2cm}
\caption{Performance of UPGD on the toy problem with a changing input set using the first-order approximated utility against SGD, PGD, and Shrink \& Perturb.}
\label{fig:ChangingAvgInputs-FO-global}
\vspace{-0.2cm}
\end{figure}

In the first variation of the problem, the input set $\mathcal{S}$ has a size of two, but the elements change every $200$ time steps by a shift of two in the input indices. For example, if the first task has $\mathcal{S}=\{1, 2\}$, the next would be $\{3, 4\}$ and so on. Since the tasks have little transfer between them, we expect the continual learners to learn as quickly as possible and maintain their plasticity. We compare UPGD against SGD, PGD, Shrink \& Perturb, and Non-protecting UPGD. Moreover, we use a baseline that has one linear layer mapping the input to the output. Note that we use first-order approximated utility here and defer the results with the second-order one in Appendix \ref{appendix:so-utility-toy-problem}. We report the results of this experiment in Fig.\ \ref{fig:ChangingAvgInputs-FO-global-weight} for weight-wise UPGD and in Fig.\ \ref{fig:ChangingAvgInputs-FO-global-feature} for feature-wise UPGD. The performance of SGD degrades with changing targets, indicating SGD loses plasticity every time the targets change. This may suggest that the outgoing weights to some features get smaller, hindering the ability to change the features' input weights. On the other hand, Shrink \& Perturb can maintain some plasticity over the linear-layer baseline. PGD and Non-protecting UPGD perform better than Shrink \& Perturb, indicating that weight decay is not helpful in this problem, and it is better to just inject noise without shrinking the parameters. UPGD is able to maintain its plasticity. Moreover, the performance keeps improving with changing targets compared to other methods.

\begin{figure}[ht]
\centering
\subfigure[Weight-wise]{
    \includegraphics[width=0.46\columnwidth]{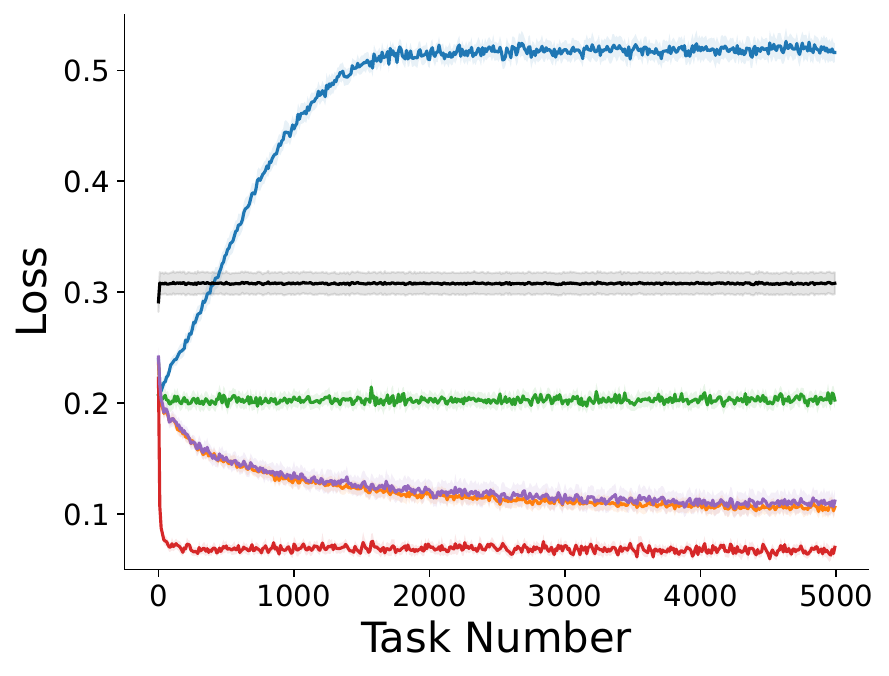}
    \label{fig:ChangingAvgSign-FO-global-weight}
 }
\subfigure[Feature-wise]{
    \includegraphics[width=0.46\columnwidth]{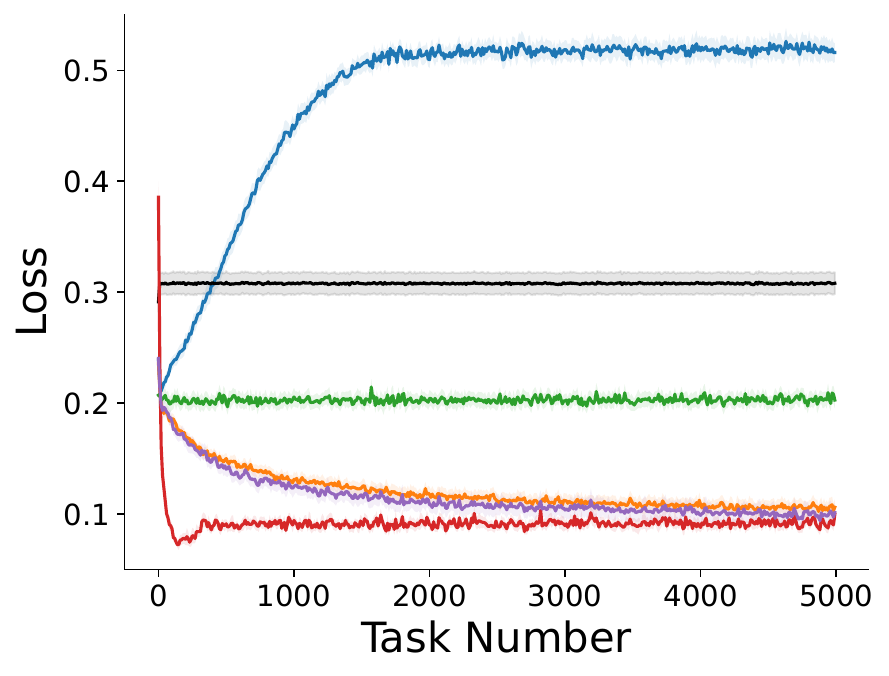}
    \label{fig:ChangingAvgSign-FO-global-feature}
}
\includegraphics[width=0.9\columnwidth]{figures/Legends/main_legend.pdf}
\vspace{-0.2cm}
\caption{Performance of UPGD on the toy problem with changing outputs using the first-order approximated utility against SGD, PGD, and Shrink \& Perturb.}
\label{fig:ChangingAvgSign-FO-global}
\vspace{-0.2cm}
\end{figure}
In the second variation, the sign of the target sum is flipped every $200$ time steps by changing $a$ from $1$ to $-1$ and vice versa. We expect continual learning agents to learn some features during the first $200$ steps. After the target sign change, we expect the learner to change the sign of only the output weights since the learned features should be the same. The frequency of changing $a$ is high to punish learners for re-learning features from scratch. Note that we use first-order approximated utility here and defer the results with second-order approximated utility in Appendix \ref{appendix:so-utility-toy-problem}. We report the results of this experiment in Fig.\ \ref{fig:ChangingAvgSign-FO-global-weight} for weight-wise UPGD and in Fig.\ \ref{fig:ChangingAvgSign-FO-global-feature} for feature-wise UPGD. The performance of SGD degrades with changing targets, indicating that it does not utilize learned features and re-learn them every time the targets change. Shrink \& Perturb, Non-protecting UPGD, and PGD maintain plasticity, but they are not able to protect useful weights; therefore, their performance is worse than UPGD. UPGD is able to protect useful weights or features and utilize them each time the targets change. Moreover, the performance keeps improving with changing targets compared to the other methods.


\subsection{UPGD Performance on Input-Permuted MNIST}
We can study plasticity when the learner is presented with sequential tasks that require little transfer between them. Input-permuted MNIST satisfies this criterion since the representations learned in one task are not relevant to the other tasks. We permute the inputs every $5000$ time steps and present the learners with $1$ million examples, one example per time step. The learner is required to maximize online accuracy by matching the target. The learner uses a multi-layer linear network with ReLU activations that has two hidden layers containing $300$ and $150$ units, respectively. 

\begin{figure}[ht]
\centering
\includegraphics[width=0.7\columnwidth]{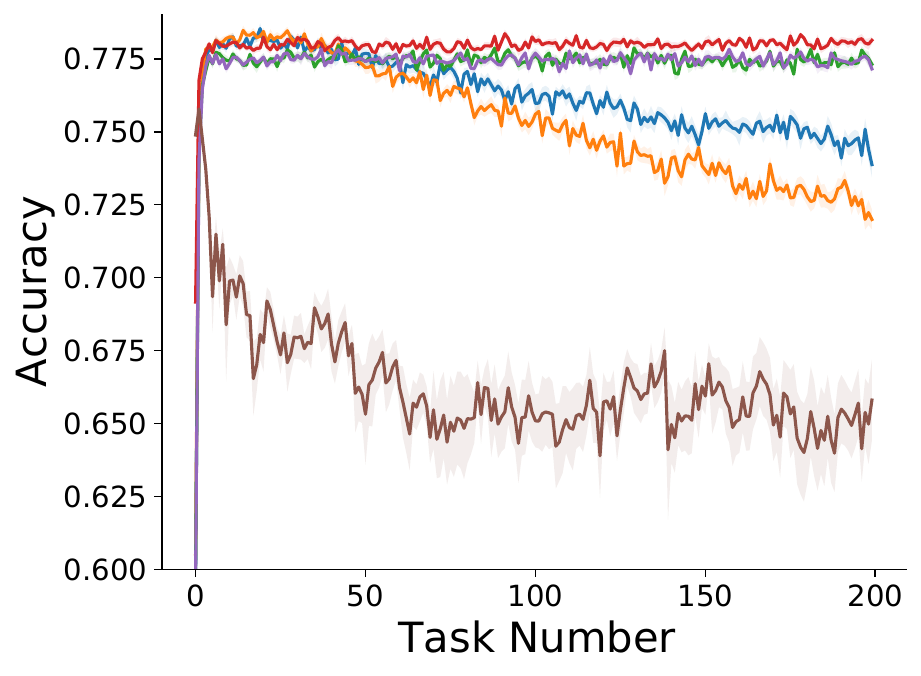}
\vspace{0.2cm}

\includegraphics[width=0.95\columnwidth]{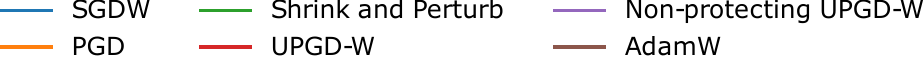}
\vspace{-0.2cm}
\caption{Performance of UPGD, SGD, and Adam with weight decay, PGD and Shrink \& Perturb on Input-Permuted MNIST. A global first-order utility is used with the two UPGDs.}
\label{fig:inputpermuted-mnist-upgd-global}
\vspace{-0.2cm}
\end{figure}

Dohare et al.\ (2022) have shown that weight decay helps in input-permuted MNIST and suggested that weight decay promotes plastic weights since maintaining small weights prevents weights from overcommitting, making them easy to change. Therefore, we compare SGD with weight decay (SGDW), Shrink \& Perturb, Adam with weight decay (Loshchilov \& Hutter 2019) known as (AdamW), UPGD with weight decay (UPGD-W) and Non-protecting UPGD-W. We note here that we do not compare with the method of Continual Backprop (2021), which maintain plasticity since it requires a complex learning mechanism, not a simple update rule. Additionally, Continual Backprop has a similar performance to Shrink \& Perturb (Dohare et al.\ 2022). Fig.\ \ref{fig:inputpermuted-mnist-upgd-global} shows the average online accuracy with the task number. All methods are tuned extensively (shown in Appendix \ref{appendix:details-input-permuted-mnist}). When more tasks are presented, the online accuracy of PGD, SGDW, and AdamW degrades with time. Only UPGD-W, Non-protecting UPGD-W, and Shrink \& Perturb could maintain plasticity. We also note that UPGD-W has slightly better performance than Non-protecting UPGD-W and Shrink \& Perturb. In Appendix \ref{appendix:ablation-study}, we present an ablation study for the effect of weight decay and perturbation on performance in Shrink \& Perturb, UPGD-W, and Non-protecting UPGD-W. We repeat this experiment for feature-wise UPGD in Appendix \ref{appendix:additional-input-permuted-mnist}.


\subsection{UPGD Performance on Output-permuted EMNIST}

Here, we study the interplay of catastrophic forgetting and decaying plasticity with Output-permuted EMNIST. This adds one level of complexity on top of the decaying plasticity shown in the previous section. The EMNIST dataset is an extended form of MNIST that has $47$ classes and has both digits and letters. We chose EMNIST instead of MNIST since Ouput-permuted MNIST was a very easy problem such that all methods achieved very high accuracy. In our Output-permuted EMNIST, the labels are permuted every $2500$ time steps. Such a change should not make the agent change its learned representations since it can simply change the weights of the last layer to adapt to that change. This makes the Output-permuted MNIST task suitable for studying catastrophic forgetting and decaying plasticity.

\begin{figure}[ht]
\centering
\includegraphics[width=0.7\columnwidth]{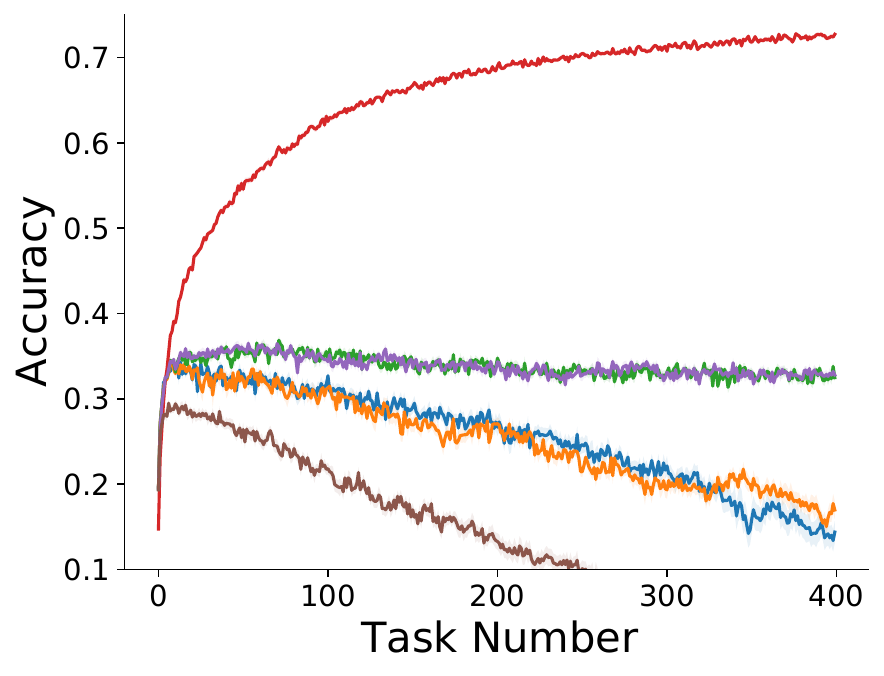}

\vspace{0.2cm}
\includegraphics[width=0.95\columnwidth]{figures/Legends/test_legend.pdf}
\vspace{-0.2cm}
\caption{Performance of UPGD, SGD, and Adam with weight decay, PGD and Shrink \& Perturb on Output-Permuted EMNIST. A global first-order utility is used with the two UPGDs.}
\label{fig:labelpermuted-emnist-upgd-global}
\vspace{-0.2cm}
\end{figure}

We compare SGDW, Shrink \& Perturb, AdamW, UPGD-W, and Non-protecting UPGD-W. All methods are tuned extensively (shown in Appendix \ref{appendix:details-output-permuted-emnist}). Fig.\ \ref{fig:labelpermuted-emnist-upgd-global} shows the average online accuracy with the task number. When more tasks are presented, the online accuracy of PGD, SGDW, and AdamW degrades with time. Non-protecting UPGD-W and Shrink \& Perturb could maintain their plasticity, but they slowly lose it as their online accuracy gradually goes down. In contrast, UPGD-W has significantly better performance than other algorithms. This suggests that UPGD-W maintains plasticity and reduces forgetting such that at every task, it can improve its representations. We found that UPGD-W performs the best when it has no weight decay in this task. In Appendix \ref{appendix:ablation-study}, we present an ablation study for the effect of weight decay and perturbation on performance. We repeat this experiment for feature-wise UPGD in Appendix \ref{appendix:additional-output-permuted-emnist}.


\subsection{UPGD Performance on Output-permuted CIFAR10}

Here, we study the interplay of catastrophic forgetting and decaying plasticity with Output-permuted CIFAR-10. The labels are permuted every $2500$ time step. Such a change should not make the agent change its learned representations since it can simply change the weights of the last layer to adapt to that change. In this problem, we make learners use a network with two convolutional layers with max-pooling followed by two fully connected layers with ReLU activations. Here, we show the results and give the experimental details in Appendix \ref{appendix:details-output-permuted-cifar10}.

\begin{figure}[ht]
\centering
\includegraphics[width=0.7\columnwidth]{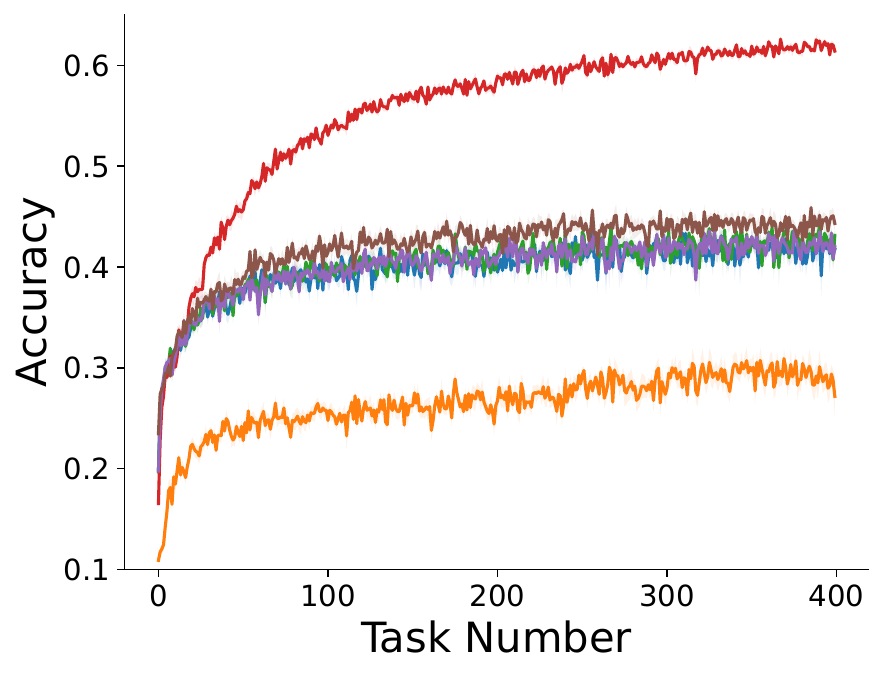}

\vspace{0.2cm}
\includegraphics[width=0.95\columnwidth]{figures/Legends/test_legend.pdf}
\vspace{-0.2cm}
\caption{Performance of UPGD, SGD, and Adam with weight decay, PGD and Shrink \& Perturb on Output-Permuted CIFAR-10. A global first-order utility is used with the two UPGDs.}
\label{labelpermuted-cifar10-upgd-global}
\vspace{-0.2cm}
\end{figure}

We compare SGDW, Shrink \& Perturb, AdamW, UPGD-W, and Non-protecting UPGD-W. All methods are tuned extensively (shown in Appendix \ref{appendix:details-output-permuted-cifar10}). Fig.\ \ref{labelpermuted-cifar10-upgd-global} shows the average online accuracy with the task number over $10$ independent runs. When more tasks are presented, the online accuracy of all methods improves. We suggest that the problem is easier than Output-permuted EMNIST since the number of classes in CIFAR-10 is less than EMNIST. A larger number of classes means that the probability of a class not changing after a permutation is significantly less. 

All methods maintain their plasticity. We notice that PGD performs worse than other methods emphasizing the role of weight decay in improving the performance in this task. In contrast to the previous two tasks, we see that AdamW performs better than SGDW. Similar to the previous tasks, UPGD-W has significantly better performance than other algorithms. This suggests that UPGD-W maintains plasticity and reduces forgetting such that at every task it can improve its representations. We found that UPGD-W performs the best when it has no weight decay in this task. In Appendix \ref{appendix:ablation-study}, we present an ablation study for the effect of weight decay and perturbation on performance in Shrink \& Perturb, UPGD-W, and Non-protecting UPGD-W.


\section{Related Works}
Neural network pruning requires a saliency or importance metric to choose which weights to prune. Typically, after training is complete, the network is pruned using measures such as the weight magnitude (e.g., Han et al.\ 2015, Park et al.\ 2020). Other metrics have been proposed using first-order information (e.g., Mozer \& Smolensky 1988, Hassibi \& Stork 1992, Molchanov et al.\ 2016), second-order information (e.g., LeCun et al.\ 1989, Dong et al.\ 2017), or both (e.g., Tresp et al.\ 1996, Molchanov et al.\ 2019). Molchanov et al.\ (2019) showed that the second-order Taylor's approximation of some true utility, which requires superlinear computation, closely matches the true utility. This result matches our derivations and conclusions. However, we use an efficient second-order Taylor's approximation using HesScale (Elsayed \& Mahmood 2022), making our method computationally inexpensive. Finally, our utility propagation method (derived in Appendix \ref{appendix:utility-propagation}) resembles, in principle, the approach pursued by Karnin (1990) and more recently by Yu et al.\ (2018), which proposes a method to propagate the importance scores from the final layer. However, our method uses a utility based on second-order approximation instead of reconstruction error or first-order approximation.

Evolutionary strategy (Rudolph 1997, Rechenberg 1973) is a derivative-free optimization class of methods that uses ideas inspired by natural evolution. Typically, a population of parameterized functions is created with different initializations, which can be improved using some score of success. Recent methods (e.g., Salimans et al.\ 2017, Such et al.\ 2017) use evolutionary strategies to improve a collection of parameterized policies based on the return coming from each one. The population is created by perturbing the current policy, then each policy is run for an episode to get a return. The new policy parameter update is the average of these perturbed policies' parameters weighted by their return corresponding returns. Evolutionary strategies, in comparison to UPS, search in the solution space using a population of solutions, making them infeasible beyond simulation. UPS, however, uses search in the weight or feature space, making UPS suitable for online learning. 

Feature-wise UPS can be seen as a generalization of the generate-and-test method (Mahmood \& Sutton 2013) for multi-layered networks with arbitrary objective functions. The generate-and-test method works only with networks with single-hidden layers in single-output regression problems and updates features in a conditional-selection fashion (e.g., replaces a feature if its utility is lower than other utilities). The feature utility is the trace of its outgoing weight magnitude. However, it has been shown that weight magnitude is not suitable for other problems, such as classification (Elsayed 2022). On the contrary, feature-wise UPS uses a better notion of utility that enables better search in the feature space and works with arbitrary network structures or objective functions.

Most continual learning methods use sequential tasks with known boundaries. Such a notion is unrealistic to be met in reality. There are a few task-free methods (e.g., Aljundi et al.\ 2019, Lee et al.\ 2020, He et al.\ 2019). However, the current task-free methods are either progressive to accommodate new experiences, require a replay buffer, or contain explicit task inference components. In comparison, UPGD is scalable in memory and computation, making it well-suited for lifelong agents that run for extended periods of time.


\section{Conclusion}
UPGD is a novel approach enabling learning agents to work for extended periods of time, making it well-suited for continual learning. We devised utility-informed learning rules that protect useful weights or features and perturb less useful ones. Such update rules help mitigate the issues encountered by modern representation learning methods in continual learning, namely catastrophic forgetting and decaying plasticity. We performed a series of experiments showing that UPGD helps in maintaining network plasticity and reusing previously learned features. Our results show that UPGD is well-suited for continual learning where the agent requires fast adaptation to an ever-changing world.

\begin{algorithm}[ht]
\caption{Feature-wise UPGD with global scaled utility}\label{alg:upgd-feature-global}
\begin{algorithmic}
\STATE {\bfseries Require:} Neural network $f$ with weights $\{\mW_1,...,\mW_L\}$ and a stream of data $\mathcal{D}$.
\STATE {\bfseries Require:} Step size $\alpha$ and decay rate $\beta \in [0,1)$.
\STATE {\bfseries Require:} Initialize weights $\{\mW_1,...,\mW_{L}\}$ randomly and masks $\{\vg_1,...,\vg_{L-1}\}$ to ones.
\STATE {\bfseries Require:} Initialize time step $t\leftarrow0$; $\eta\leftarrow\infty$;
\FOR{$l$ in $\{L,L-1,...,1\}$} 
\STATE $\vu_{l} \leftarrow \mathbf{0}$; $\hat{\vu}_{l} \leftarrow \mathbf{0}$
\ENDFOR
\FOR{$(\vx, \vy)$ in $\mathcal{D}$}
\STATE $t\leftarrow t+1$
\FOR{$l$ in $\{L-1,...,1\}$}
\STATE $\vf_{l},\vs_{l}\leftarrow$\texttt{GetMaskDerivatives}($f, \vx, \vy, l)$
\STATE $\vm_{l} \leftarrow \sfrac{1}{2}\vs_{l} - \vf_{l}$
\STATE $\vu_{l} \leftarrow \beta\vu_{l} + (1-\beta)\vm_l $
\STATE $\hat{\vu}_{l} \leftarrow \vu_{l}/(1-\beta^{t})$
\IF{$\eta < \texttt{max}(\hat{\vu}_{l})$}
\STATE $\eta \leftarrow \texttt{max}(\hat{\vu}_{l})$
\ENDIF
\ENDFOR
\FOR{$l$ in $\{L,L-1,...,1\}$}
\STATE $\mF_{l},\mS_{l}\leftarrow$\texttt{GetWeightDerivatives}($f, \vx, \vy, l)$
\IF{$l=L$}
\STATE $\mW_{l} \leftarrow \mW_{l} - \alpha\mF_{l}$
\ELSE
\STATE $\Bar{\vu}_{l} \leftarrow \phi{(\hat{\vu}_{l}/\eta)}$
\STATE Sample noise matrix $\boldsymbol{\xi}$
\STATE $\mW_{l} \leftarrow \mW_{l} - \alpha (\mF_{l} + \boldsymbol{\xi})\circ(1-\mathbf{1} \Bar{\vu}_{l}^{\top})$
\ENDIF
\ENDFOR
\ENDFOR
\end{algorithmic}
\end{algorithm}


\section{Broader Impact}
Temporally correlated data cause catastrophic forgetting, which makes reinforcement learning with function approximation challenging and sample inefficient (Fedus et al.\ 2020). The best-performing policy gradient methods (e.g., Mnih et al.\ 2015, Haarnoja et al.\ 2018) use a large replay buffer to reduce the correlation in successive transitions and make them seem independent and identically distributed to the learner. Since UPGD helps against catastrophic forgetting without the need for buffers, it can potentially improve the performance of most reinforcement learning algorithms and ameliorate their scalability.

UPGD can be integrated with step-size adaptation methods (e.g., Schraudolph 1999, Jacobsen et al.\ 2019), making them work well under non-stationarity. Moreover, UPGD can help better study and analyze step-size adaptation methods in isolation of catastrophic forgetting and decaying plasticity.


\section*{Acknowledgement}
We gratefully acknowledge funding from the Canada CIFAR AI Chairs program, the Reinforcement Learning and Artificial Intelligence (RLAI) laboratory, the Alberta Machine Intelligence Institute (Amii), and the Natural Sciences and Engineering Research Council (NSERC) of Canada. We would also like to thank Compute Canada for providing the computational resources needed.


\section*{References}
\hangin Abbas, Z., Zhao, R., Modayil, J., White, A., \& Machado, M.\ C.\ (2023). Loss of Plasticity in Continual Deep Reinforcement Learning. \textit{arXiv preprint arXiv:2303.07507}.\\
\hangin Ash, J., \& Adams, R. P. (2020). On warm-starting neural network training. \textit{Advances in Neural Information Processing Systems}, \textit{33}, 3884-3894.\\
\hangin Aljundi, R., Kelchtermans, K., \& Tuytelaars, T.\ (2019). Task-free continual learning. \textit{Conference on Computer Vision and Pattern Recognition} (pp. 11254-11263).\\
\hangin Aljundi, R., Babiloni, F., Elhoseiny, M., Rohrbach, M., \& Tuytelaars, T.\ (2018). Memory aware synapses: Learning what (not) to forget. \textit{Proceedings of the European Conference on Computer Vision} (pp. 139-154).\\
\hangin Caccia, M., Rodriguez, P., Ostapenko, O., Normandin, F., Lin, M., Page-Caccia, L., ... \& Charlin, L.\ (2020). Online fast adaptation and knowledge accumulation (OSAKA): a new approach to continual learning. \textit{Advances in Neural Information Processing Systems}, \textit{33}, 16532-16545.\\
\hangin Chaudhry, A., Rohrbach, M., Elhoseiny, M., Ajanthan, T., Dokania, P.\ K., Torr, P. H., \& Ranzato, M.\ A.\ (2019). On tiny episodic memories in continual learning. \textit{arXiv preprint arXiv:1902.10486}.\\
\hangin Cohen, G., Afshar, S., Tapson, J., \& Van Schaik, A.\ (2017). EMNIST: Extending MNIST to handwritten letters. \textit{International Joint Conference on Neural Networks} (pp. 2921-2926).\\
\hangin Dohare, S., Hernandez-Garcia, J.\ F., Rahman, P., Sutton, R.\ S., \& Mahmood, A.\ R.\ (2022). \textit{Maintaining Plasticity in Deep Continual Learning}. In preparation.\\
\hangin Dohare, S., Sutton, R.\ S., \& Mahmood, A.\ R. (2021). Continual backprop: Stochastic gradient descent with persistent randomness. \textit{arXiv preprint arXiv:2108.06325}.\\
\hangin Dong, X., Chen, S., \& Pan, S.\ (2017). Learning to prune deep neural networks via layer-wise optimal brain surgeon. \textit{Advances in Neural Information Processing Systems}, \textit{30}.\\
\hangin Elsayed, M., \& Mahmood, A.\ R.\ (2022). HesScale: Scalable Computation of Hessian Diagonals. \textit{arXiv preprint arXiv:2210.11639}.\\
\hangin Elsayed, M.\ (2022). \textit{Investigating Generate and Test for Online Representation Search with Softmax Outputs}. M.Sc.\ thesis, University of Alberta.\\
\hangin Fedus, W., Ghosh, D., Martin, J.\ D., Bellemare, M.\ G., Bengio, Y., \& Larochelle, H.\ (2020). On catastrophic interference in atari 2600 games. \textit{arXiv preprint arXiv:2002.12499}.\\
\hangin French, R.\ M.\ Using semi-distributed representations to overcome catastrophic forgetting in connectionist networks. (1991) \textit{Cognitive Science Society Conference} (pp. 173-178).\\
\hangin Haarnoja, T., Zhou, A., Abbeel, P., \& Levine, S.\ (2018). Soft actor-critic: Off-policy maximum entropy deep reinforcement learning with a stochastic actor. \textit{International Conference on Machine Learning}(pp. 1861-1870).\\
\hangin Hetherington, P.\ A., \& Seidenberg, M.\ S.\ (1989). Is there ‘catastrophic interference’ in connectionist networks? \textit{Conference of the Cognitive Science Society} (pp. 26-33).\\
\hangin Hebb, D.\ O.\ (1949). The organization of behavior: A neuropsychological theory. Wiley.\\
\hangin Han, S., Pool, J., Tran, J., \& Dally, W.\ (2015). Learning both weights and connections for efficient neural network. \textit{Advances in neural information processing systems}, 28.\\
\hangin He, X., Sygnowski, J., Galashov, A., Rusu, A.\ A., Teh, Y.\ W., \& Pascanu, R. (2019). Task agnostic continual learning via meta learning. \textit{arXiv preprint arXiv:1906.05201}.\\
\hangin He, K., Zhang, X., Ren, S., \& Sun, J.\ (2015). Delving deep into rectifiers: Surpassing human-level performance on imagenet classification. \textit{IEEE International Conference on Computer Vision} (pp. 1026-1034).\\
\hangin Hassibi, B., \& Stork, D.\ (1992). Second order derivatives for network pruning: Optimal brain surgeon. \textit{Advances in neural information processing systems}, \textit{5}.\\
\hangin Isele, D., \& Cosgun, A.\ (2018). Selective experience replay for lifelong learning. \textit{AAAI Conference on Artificial Intelligence} (pp. 3302-3309)\\
\hangin Jacobsen, A., Schlegel, M., Linke, C., Degris, T., White, A., \& White, M.\ (2019). Meta-descent for online, continual prediction. \textit{AAAI Conference on Artificial Intelligence} (pp. 3943-3950).\\
\hangin Karnin, E.\ D.\ (1990). A simple procedure for pruning back-propagation trained neural networks. \textit{IEEE transactions on neural networks}, \textit{1}(2), 239-242.\\
\hangin Kingma, D.\ P.\ \& Ba, J.\ (2015). Adam: A method for stochastic optimization. \textit{International Conference on Learning Representations}. \\
\hangin Kirkpatrick, J., Pascanu, R., Rabinowitz, N., Veness, J., Desjardins, G., Rusu, A.\ A., ... \& Hadsell, R.\ (2017). Overcoming catastrophic forgetting in neural networks. \textit{National Academy of Sciences}, \textit{114}(13), 3521-3526.\\
\hangin Konorski J.\ (1948). \textit{Conditioned Reflexes and Neuron Organization}. Cambridge University Press.\\
\hangin Krizhevsky, A.\ (2009) \textit{Learning Multiple Layers of Features from Tiny Images}.\ Ph.D. dissertation, University of Toronto.\\
\hangin Kunin, D., Sagastuy-Brena, J., Ganguli, S., Yamins, D.\ L., \& Tanaka, H. (2020). Neural mechanics: Symmetry and broken conservation laws in deep learning dynamics. \textit{arXiv preprint arXiv:2012.04728}.\\
\hangin Lee, S., Ha, J., Zhang, D., \& Kim, G. (2020). A neural dirichlet process mixture model for task-free continual learning. \textit{arXiv preprint arXiv:2001.00689}.\\
\hangin LeCun, Y., Denker, J., \& Solla, S. (1989). Optimal brain damage. \textit{Advances in neural information processing systems}, \textit{2}.\\
\hangin LeCun, Y., Bottou, L., Bengio, Y., \& Haffner, P.\ (1998). Gradient-based learning applied to document recognition. \textit{Proceedings of the IEEE}, \textit{86}(11), 2278-2324.\\
\hangin Liu, V., Kumaraswamy, R., Le, L., \& White, M.\ (2019). The utility of sparse representations for control in reinforcement learning. \textit{AAAI Conference on Artificial Intelligence} (pp. 4384-4391).\\
\hangin Loshchilov, I., \& Hutter, F.\ (2019). Decoupled weight decay regularization. \textit{International Conference on Learning Representations}.\\
\hangin Maas, A.\ L., Hannun, A.\ Y., \& Ng, A.\ Y.\ (2013). Rectifier nonlinearities improve neural network acoustic models. \textit{ICML Workshop on Deep Learning for Audio, Speech, and Language Processing}.\\
\hangin Mahmood, A.\ R., \& Sutton, R.\ S.\ (2013). Representation search through generate and test. \textit{AAAI Conference on Learning Rich Representations from Low-Level Sensors} (pp. 16-21).\\
\hangin Mozer, M. C., \& Smolensky, P.\ (1988). Skeletonization: A technique for trimming the fat from a network via relevance assessment. Advances in neural information processing systems, 1.\\
\hangin Melchior, J., \& Wiskott, L. (2019). Hebbian-descent. \textit{arXiv preprint arXiv:1905.10585}.\\
\hangin Molchanov, P., Mallya, A., Tyree, S., Frosio, I., \& Kautz, J.\ (2019). Importance estimation for neural network pruning. \textit{IEEE/CVF Conference on Computer Vision and Pattern Recognition} (pp. 11264-11272).\\
\hangin Molchanov, P., Tyree, S., Karras, T., Aila, T., \& Kautz, J.\ (2016). Pruning convolutional neural networks for resource efficient inference. \textit{arXiv preprint arXiv:1611.06440}.\\
\hangin Mnih, V., Kavukcuoglu, K., Silver, D., Rusu, A.\ A., Veness, J., Bellemare, M.\ G., ... \& Hassabis, D.\ (2015). Human-level control through deep reinforcement learning. \textit{Nature}, \textit{518}(7540), 529-533.\\
\hangin McCloskey, M., \& Cohen, N. J.\ (1989). Catastrophic interference in connectionist networks: The sequential learning problem. \textit{Psychology of Learning and Motivation}, \textit{24}, 109–165.\\
\hangin Nair, V., \& Hinton, G.\ E.\ (2010). Rectified linear units improve restricted boltzmann machines. \textit{International Conference on Machine Learning} (pp. 807-814).\\
\hangin Neelakantan, A., Vilnis, L., Le, Q.\ V., Sutskever, I., Kaiser, L., Kurach, K., \& Martens, J. (2015). Adding gradient noise improves learning for very deep networks. \textit{arXiv preprint arXiv:1511.06807}.\\
\hangin Lyle, C., Zheng, Z., Nikishin, E., Pires, B.\ A., Pascanu, R., \& Dabney, W.\ (2023). Understanding plasticity in neural networks. \textit{arXiv preprint arXiv:2303.01486}.\\
\hangin Orvieto, A., Kersting, H., Proske, F., Bach, F., \& Lucchi, A.\ (2022). Anticorrelated noise injection for improved generalization. \textit{arXiv preprint arXiv:2202.02831}.\\
\hangin Park, S., Lee, J., Mo, S., \& Shin, J. (2020). Lookahead: a far-sighted alternative of magnitude-based pruning. \textit{arXiv preprint arXiv:2002.04809}.\\
\hangin Tresp, V., Neuneier, R., \& Zimmermann, H.\ G.\ (1996). Early brain damage. \textit{Advances in Neural Information Processing Systems}, 9.\\
\hangin Rusu, A.\ A., Rabinowitz, N. C., Desjardins, G., Soyer, H., Kirkpatrick, J., Kavukcuoglu, K., ... \& Hadsell, R.\ (2016). Progressive neural networks. \textit{arXiv preprint arXiv:1606.04671}.\\
\hangin Rolnick, D., Ahuja, A., Schwarz, J., Lillicrap, T., \& Wayne, G.\ (2019). Experience replay for continual learning. \textit{Advances in Neural Information Processing Systems}, \textit{32}.\\
\hangin Rudolph, G.\ (1997). \textit{Convergence properties of evolutionary algorithms}. Verlag Dr. Kovač. \\
\hangin Rechenberg, I.\ (1973). Evolutionsstrategie. \textit{Optimierung technischer Systeme nach Prinzipien derbiologischen Evolution}.\\
\hangin Salimans, T., Ho, J., Chen, X., Sidor, S., \& Sutskever, I. (2017). Evolution strategies as a scalable alternative to reinforcement learning. \textit{arXiv preprint arXiv:1703.03864}.\\
\hangin Such, F. P., Madhavan, V., Conti, E., Lehman, J., Stanley, K. O., \& Clune, J. (2017). Deep neuroevolution: Genetic algorithms are a competitive alternative for training deep neural networks for reinforcement learning. \textit{arXiv preprint arXiv:1712.06567}.\\
\hangin Schraudolph, N.\ N.\ (1999) Local gain adaptation in stochastic gradient descent. \textit{International Conference on Artificial Neural Networks} (pp. 569-574).\\
\hangin Schwarz, J., Czarnecki, W., Luketina, J., Grabska-Barwinska, A., Teh, Y.\ W., Pascanu, R., \& Hadsell, R. (2018). Progress \& compress: A scalable framework for continual learning. \textit{International Conference on Machine Learning} (pp. 4528-4537).\\
\hangin Sutton, R.\ S.\ (1986). Two problems with backpropagation and other steepest-descent learning procedures for networks. \textit{Annual Conference of the Cognitive Science Society} (pp.\ 823-832).\\
\hangin Tanaka, H., Kunin, D., Yamins, D. L., \& Ganguli, S. (2020). Pruning neural networks without any data by iteratively conserving synaptic flow. \textit{Advances in Neural Information Processing Systems}, \textit{33}, 6377-6389.\\
\hangin Yu, R., Li, A., Chen, C.\ F., Lai, J. H., Morariu, V.\ I., Han, X., ... \& Davis, L. S.\ (2018). Nisp: Pruning networks using neuron importance score propagation. \textit{IEEE conference on computer vision and pattern recognition} (pp. 9194-9203).\\
\hangin Zhou, M., Liu, T., Li, Y., Lin, D., Zhou, E., \& Zhao, T.\ (2019). Toward understanding the importance of noise in training neural networks. \textit{International Conference on Machine Learning} (pp. 7594-7602).\\



\newpage
\appendix
\onecolumn
\section{Utility Propagation}
\label{appendix:utility-propagation}
The instantaneous utility measure can be used in a recursive formulation, allowing for backward propagation. We can get a recursive formula for the utility equation for connections in a neural network. This property is a result of \cref{thm:utility-propagation}.

\begin{theorem}
\label{thm:utility-propagation}
If the second-order off-diagonal terms in all layers in a neural network except for the last one are zero and all higher-order derivatives are zero, the true weight utility for the weight $ij$ at the layer $l$ can be propagated using the following recursive formulation:
\begin{align*}
U_{l,i,j}(Z) &\doteq f_{l,i,j} + s_{l,i,j} 
\end{align*}
where
\begin{align*}
f_{l,i,j} &\doteq - \sigma^{\prime}(a_{l,i}) \frac{h_{l-1, j}}{h_{l, j} } W_{l,i,j} \sum_{k=1}^{|\va_{l+1}|} \Bigg( f_{l+1,k,j}  \frac{W_{l+1, k,i}}{W_{l+1,k,j}}\Bigg),\\
s_{l,i,j} &\doteq  \frac{h^2_{l-1, j}  W^2_{l, i, j}}{h_{l, j} W_{l+1,i,j}} \sum_{k=1}^{|\va_{l+1}|} \Bigg( s_{l+1,k,j} \frac{W^2_{l+1,k,i} {\sigma^{\prime}}(a_{l,i})^2}{h_{l, j}W_{l+1, i, j}}  + f_{l+1, k,j} W_{l+1,k,i} \sigma^{\prime\prime}(a_{l,i})  \Bigg).
\end{align*}
\end{theorem}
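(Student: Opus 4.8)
The plan is to start from the two-term utility of \cref{thm:so-approximated-utility}, namely $U_{l,i,j}(Z)=f_{l,i,j}+s_{l,i,j}$ with $f_{l,i,j}\doteq-\tfrac{\partial\mathcal{L}}{\partial W_{l,i,j}}W_{l,i,j}$ and $s_{l,i,j}\doteq\tfrac{1}{2}\tfrac{\partial^2\mathcal{L}}{\partial W^2_{l,ij}}W^2_{l,i,j}$, and to rewrite each derivative with respect to a weight as a derivative with respect to the preactivation $a_{l,i}$. Writing $\delta_{l,i}\doteq\partial\mathcal{L}/\partial a_{l,i}$ and $H_{l,i}\doteq\partial^2\mathcal{L}/\partial a^2_{l,i}$, the relation $a_{l,i}=\sum_j W_{l,i,j}h_{l-1,j}$ gives $\partial a_{l,i}/\partial W_{l,i,j}=h_{l-1,j}$, so the chain rule yields $\partial\mathcal{L}/\partial W_{l,i,j}=\delta_{l,i}h_{l-1,j}$ and, since $h_{l-1,j}$ is independent of $W_{l,i,j}$, also $\partial^2\mathcal{L}/\partial W^2_{l,ij}=H_{l,i}h^2_{l-1,j}$. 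Hence $f_{l,i,j}=-\delta_{l,i}h_{l-1,j}W_{l,i,j}$ and $s_{l,i,j}=\tfrac{1}{2}H_{l,i}h^2_{l-1,j}W^2_{l,i,j}$. The whole proof then reduces to propagating $\delta_{l,i}$ and $H_{l,i}$ one layer at a time and re-expressing them through $f_{l+1,\cdot,\cdot}$ and $s_{l+1,\cdot,\cdot}$.

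For the first-order term I would use the standard backpropagation recursion $\delta_{l,i}=\sigma'(a_{l,i})\sum_k W_{l+1,k,i}\,\delta_{l+1,k}$, obtained by differentiating $\mathcal{L}$ through $a_{l+1,k}=\sum_i W_{l+1,k,i}\sigma(a_{l,i})$. The key observation is that the layer-$(l+1)$ first-order utility can be inverted: since $f_{l+1,k,j}=-\delta_{l+1,k}h_{l,j}W_{l+1,k,j}$, and its right-hand side depends on $j$ only through factors that cancel, we have $\delta_{l+1,k}=-f_{l+1,k,j}/(h_{l,j}W_{l+1,k,j})$ for any admissible $j$. Substituting this into the recursion for $\delta_{l,i}$ and then into $f_{l,i,j}=-\delta_{l,i}h_{l-1,j}W_{l,i,j}$ produces the claimed recursion for $f_{l,i,j}$, with the sum over $k$, the prefactor $\sigma'(a_{l,i})\tfrac{h_{l-1,j}}{h_{l,j}}W_{l,i,j}$, and the ratios $W_{l+1,k,i}/W_{l+1,k,j}$ all emerging from this substitution.

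The second-order term is where the assumptions do real work. Differentiating the $\delta$-recursion once more in $a_{l,i}$ and invoking the hypothesis that all off-diagonal second-order terms vanish (so $\partial\delta_{l+1,k}/\partial a_{l+1,k'}=0$ for $k'\neq k$) and that higher-order derivatives vanish, I expect the HesScale-type diagonal recursion $H_{l,i}=\sigma''(a_{l,i})\sum_k W_{l+1,k,i}\delta_{l+1,k}+\sigma'(a_{l,i})^2\sum_k W^2_{l+1,k,i}H_{l+1,k}$. Inverting the layer-$(l+1)$ second-order utility as $H_{l+1,k}=2s_{l+1,k,j}/(h^2_{l,j}W^2_{l+1,k,j})$, together with the inversion for $\delta_{l+1,k}$ from the previous step, and substituting into $s_{l,i,j}=\tfrac12 H_{l,i}h^2_{l-1,j}W^2_{l,i,j}$ yields the stated formula: the $\sigma''$ branch contributes the term carrying $f_{l+1,k,j}$ and the $\sigma'^2$ branch contributes the term carrying $s_{l+1,k,j}$.

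The main obstacle is the careful bookkeeping in the second-order recursion. One must track that differentiating the chain $\delta_{l,i}=\sigma'(a_{l,i})\sum_k W_{l+1,k,i}\delta_{l+1,k}$ generates both a $\sigma''$ term from the outer factor and a $\sigma'^2$ term from propagating through $a_{l+1,k}$, and that the diagonal assumption is precisely what kills the cross terms $\partial\delta_{l+1,k}/\partial a_{l+1,k'}$ that would otherwise couple distinct units $k'\neq k$. A secondary point to verify is that the inversions $\delta_{l+1,k}=-f_{l+1,k,j}/(h_{l,j}W_{l+1,k,j})$ and $H_{l+1,k}=2s_{l+1,k,j}/(h^2_{l,j}W^2_{l+1,k,j})$ are well defined, i.e.\ they require $h_{l,j}\neq0$ and $W_{l+1,k,j}\neq0$, and that the resulting sign and index placement match the statement exactly.
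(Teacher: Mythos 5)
Your proposal is, in substance, the paper's own proof: the paper likewise reduces the utility of Theorem~\ref{thm:so-approximated-utility} to $f_{l,i,j}=-\delta_{l,i}h_{l-1,j}W_{l,i,j}$ and $s_{l,i,j}=\tfrac{1}{2}\widehat{H}_{l,i}h^2_{l-1,j}W^2_{l,i,j}$, propagates $\delta_{l,i}$ by standard backpropagation and $\widehat{H}_{l,i}$ by exactly the HesScale diagonal recursion you write down (these are the paper's Eqs.~\ref{equ:gradient-a}--\ref{equ:hessian-approx-W}), and then recognizes $f_{l+1,k,j}$ and $s_{l+1,k,j}$ inside the sums by multiplying and dividing by $h_{l,j}W_{l+1,k,j}$ and $h^2_{l,j}W^2_{l+1,k,j}$ --- which is precisely your inversion step, carrying the same implicit nonvanishing requirements $h_{l,j}\neq 0$, $W_{l+1,k,j}\neq 0$ that you flag. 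One correction to your final claim, however: executed carefully, the substitution does \emph{not} reproduce the printed statement, because $f_{l+1,k,j}=-\delta_{l+1,k}h_{l,j}W_{l+1,k,j}$ carries a minus sign and the overall $\tfrac{1}{2}$ multiplies \emph{both} branches of the Hessian recursion; one obtains $f_{l,i,j}=+\sigma'(a_{l,i})\tfrac{h_{l-1,j}}{h_{l,j}}W_{l,i,j}\sum_{k} f_{l+1,k,j}\,W_{l+1,k,i}/W_{l+1,k,j}$ and a coefficient $-\tfrac{1}{2}$ on the $f_{l+1,k,j}W_{l+1,k,i}\sigma''(a_{l,i})$ term of $s_{l,i,j}$. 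The opposite sign on $f$, the missing $\tfrac{1}{2}$, and the indices $W_{l+1,i,j}$ in the theorem (which should read $W_{l+1,k,j}$ and sit under the sum over $k$) all trace to slips in the paper's own derivation --- the minus is dropped when $f_{l+1,k,j}$ is identified, and the $\tfrac{1}{2}$ is absorbed into only the $s$-branch --- so trust your algebra rather than the displayed formulas.
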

\begin{proof} 

First, we start by writing the partial derivative of the loss with respect to each weight in terms of earlier partial derivatives in the next layers as follows:
\begin{align}
\frac{\partial \mathcal{L}}{\partial a_{l,i}} &=  \sum_{k=1}^{|\va_{l+1}|} \frac{\partial \mathcal{L}}{\partial a_{l+1,k}} \frac{\partial a_{l+1,k}}{\partial h_{l,i}} \frac{\partial h_{l,i}}{\partial a_{l,i}} = \sum_{k=1}^{|\va_{l+1}|} \frac{\partial \mathcal{L}}{\partial a_{l+1,k}} W_{l+1, k,i} \sigma^{\prime}(a_{l,i}),\label{equ:gradient-a}\\
\frac{\partial \mathcal{L}}{\partial W_{l,i,j}} &= \frac{\partial \mathcal{L}}{\partial a_{l,i}} \frac{\partial a_{l,i}}{\partial W_{l,i,j}} = \frac{\partial \mathcal{L}}{\partial a_{l,i}} h_{l-1,j}. \label{equ:gradient-W}
\end{align}
Next, we do the same with second-order partial derivatives as follows:
\begin{align}
\widehat{\frac{\partial^2 \mathcal{L}}{\partial {a^2_{l,i}}}} &\doteq \sum_{k=1}^{|\va_{l+1}|} \Bigg[ \widehat{\frac{\partial^2 \mathcal{L}}{\partial {a^2_{l+1,k}}}} W^2_{l+1,k,i} {\sigma^{\prime}}(a_{l,i})^2 + \frac{\partial \mathcal{L}}{\partial a_{l+1,k}} W_{l+1,k,i} \sigma^{\prime\prime}(a_{l,i}) \Bigg] \label{equ:hessian-approx-a},\\
\widehat{\frac{\partial^2 \mathcal{L}}{\partial {W^2_{l,i, j}}}} &\doteq \widehat{\frac{\partial^2 \mathcal{L}}{\partial {a^2_{l,i}}}}h^2_{l-1,j}. \label{equ:hessian-approx-W}
\end{align}

From here, we can derive the utility propagation formulation as the sum of two recursive quantities, $f_{l, ij}$ and $s_{l,ij}$. These two quantities represent the first and second-order terms in the Taylor approximation. Using Eq. \ref{equ:gradient-a}, Eq. \ref{equ:gradient-W}, Eq. \ref{equ:hessian-approx-a}, and Eq. \ref{equ:hessian-approx-W}, we can derive the recursive formulation as follows:
\begin{align}
U_{l,i,j}(Z) &\doteq -\frac{\partial \mathcal{L}(\mathcal{W}, Z)}{\partial W_{l,i,j}} W_{l,i,j} + \frac{1}{2}\frac{\partial^2 \mathcal{L}(\mathcal{W}, Z)}{\partial W^2_{l,ij}} W^2_{l,ij} \nonumber\\
&\approx -\frac{\partial \mathcal{L}(\mathcal{W}, Z)}{\partial W_{l,i,j}} W_{l,i,j} + \frac{1}{2}\widehat{\frac{\partial^2 \mathcal{L}(\mathcal{W}, Z)}{\partial W^2_{l,ij}}} W^2_{l,ij} \nonumber\\
&= -\sum_{k=1}^{|\va_{l+1}|} \frac{\partial \mathcal{L}}{\partial a_{l+1,k}} W_{l+1, k,i} \sigma^{\prime}(a_{l,i}) h_{l-1, j} W_{l,i,j} \nonumber \\
& \quad \;+ \frac{1}{2} \sum_{k=1}^{|\va_{l+1}|} \Bigg[ \widehat{\frac{\partial^2 \mathcal{L}}{\partial {a^2_{l+1,k}}}} W^2_{l+1,k,i} {\sigma^{\prime}}(a_{l,i})^2 + \frac{\partial \mathcal{L}}{\partial a_{l+1,k}} W_{l+1,k,i} \sigma^{\prime\prime}(a_{l,i}) \Bigg] h^2_{l-1, j} W^2_{l, i, j} \nonumber\\
&= f_{l, i, j} + s_{l, i,j}.
\end{align}

Now, we can write the first-order part $f_{l, i,j}$ and the second-order part $s_{l, i,j}$ as follows:
\begin{align}
f_{l, i,j} &= -\sum_{k=1}^{|\va_{l+1}|} \Bigg(\frac{\partial \mathcal{L}}{\partial a_{l+1,k}} W_{l+1, k,i} \sigma^{\prime}(a_{l,i}) h_{l-1, j} W_{l,i,j}\Bigg) \label{equ:first-matrix}\\
s_{l, i,j} &=\frac{1}{2} \sum_{k=1}^{|\va_{l+1}|} \Bigg(\widehat{\frac{\partial^2 \mathcal{L}}{\partial {a^2_{l+1,k}}}} W^2_{l+1,k,i} {\sigma^{\prime}}(a_{l,i})^2 + \frac{\partial \mathcal{L}}{\partial a_{l+1,k}} W_{l+1,k,i} \sigma^{\prime\prime}(a_{l,i}) \Bigg)h^2_{l-1, j}  W^2_{l, i, j} \label{equ:sec-matrix}
\end{align}
Using Eq. \ref{equ:first-matrix} and Eq. \ref{equ:sec-matrix}, we can write the recursive formulation for $f_{l,ij}$ and $s_{l,ij}$ as follows:
\begin{align}
f_{l, ij} &= -\sum_{k=1}^{|\va_{l+1}|} \Bigg(\sum_{m=1}^{|\va_{l+2}|} \Big(\frac{\partial \mathcal{L}}{\partial a_{l+2,m}} W_{l+2, m,k} \sigma^{\prime}(a_{l+1,k}) \Big) \frac{h_{l, j} W_{l+1,k,j}}{h_{l, j} W_{l+1,k,j}} W_{l+1, k,i} \sigma^{\prime}(a_{l,i}) h_{l-1, j} W_{l,i,j}\Bigg) \nonumber \\
&=  -\sum_{k=1}^{|\va_{l+1}|} \Bigg( f_{l+1,k,j}  \sigma^{\prime}(a_{l,i})  \frac{h_{l-1, j} W_{l,i,j}}{h_{l, j} W_{l+1,k,j}} W_{l+1, k,i}\Bigg) \nonumber \\
&=  - \sigma^{\prime}(a_{l,i}) \frac{h_{l-1, j}}{h_{l, j} } W_{l,i,j} \sum_{k=1}^{|\va_{l+1}|} \Bigg( f_{l+1,k,j}  \frac{W_{l+1, k,i}}{W_{l+1,k,j}}\Bigg)
\end{align}
\begin{align}
s_{l, i,j} &=\frac{1}{2} \sum_{k=1}^{|\va_{l+1}|} \Bigg(\widehat{\frac{\partial^2 \mathcal{L}}{\partial {a^2_{l+1,k}}}} W^2_{l+1,k,i} {\sigma^{\prime}}(a_{l,i})^2 + \frac{\partial \mathcal{L}}{\partial a_{l+1,k}} W_{l+1,k,i} \sigma^{\prime\prime}(a_{l,i}) \Bigg)h^2_{l-1, j}  W^2_{l, i, j} \nonumber \\
&=\frac{1}{2} \sum_{k=1}^{|\va_{l+1}|} \Bigg(\Big( \sum_{m=1}^{|\va_{l+2}|} \Big[ \widehat{\frac{\partial^2 \mathcal{L}}{\partial {a^2_{l+2,m}}}} W^2_{l+2,m,k} {\sigma^{\prime}}(a_{l+1,k})^2 + \frac{\partial \mathcal{L}}{\partial a_{l+2,m}} W_{l+2,m,k} \sigma^{\prime\prime}(a_{l,k}) \Big]  \Big) W^2_{l+1,k,i} {\sigma^{\prime}}(a_{l,i})^2  \nonumber\\
& \quad + \Big(\sum_{m=1}^{|\va_{l+2}|} \frac{\partial \mathcal{L}}{\partial a_{l+2,m}} W_{l+2, m,k} \sigma^{\prime}(a_{l,m})\Big) W_{l+1,k,i} \sigma^{\prime\prime}(a_{l,i}) \Bigg)h^2_{l-1, j}  W^2_{l, i, j} \nonumber \\
&=\sum_{k=1}^{|\va_{l+1}|} \Bigg(\frac{1}{2} \Big( \sum_{m=1}^{|\va_{l+2}|} \Big[ \widehat{\frac{\partial^2 \mathcal{L}}{\partial {a^2_{l+2,m}}}} W^2_{l+2,m,k} {\sigma^{\prime}}(a_{l+1,k})^2 + \frac{\partial \mathcal{L}}{\partial a_{l+2,m}} W_{l+2,m,k} \sigma^{\prime\prime}(a_{l,k}) \Big]  \Big) \frac{h^2_{l, j}  W^2_{l+1, i, j}}{h^2_{l, j}  W^2_{l+1, i, j}} W^2_{l+1,k,i} {\sigma^{\prime}}(a_{l,i})^2  \nonumber\\
& \quad + \Big(\sum_{m=1}^{|\va_{l+2}|} \frac{\partial \mathcal{L}}{\partial a_{l+2,m}} W_{l+2, m,k} \sigma^{\prime}(a_{l+1,m}) \frac{h_{l, j} W_{l+1,i,j}}{h_{l, j} W_{l+1,i,j}}\Big) W_{l+1,k,i} \sigma^{\prime\prime}(a_{l,i}) \Bigg) h^2_{l-1, j}  W^2_{l, i, j}  \nonumber \\
&= \frac{h^2_{l-1, j}  W^2_{l, i, j}}{h_{l, j} W_{l+1,i,j}} \sum_{k=1}^{|\va_{l+1}|} \Bigg( s_{l+1,k,j} \frac{W^2_{l+1,k,i} {\sigma^{\prime}}(a_{l,i})^2}{h_{l, j}W_{l+1, i, j}}  + f_{l+1, k,j} W_{l+1,k,i} \sigma^{\prime\prime}(a_{l,i})  \Bigg)
\end{align}

\end{proof}

\section{Feature Utility Approximation using Weight utility}
\label{appendix:feature-weight-connection}

Here, we derive the global utility of a feature $p$ at layer $l$, which is represented as the difference between the loss when the feature is removed and the regular loss. Instead of developing a direct feature utility, we derive the utility of a feature based on the utility of the input or output weights. The utility vector corresponding to $l$-th layer features is $\vu_l(Z)$, where $Z=(X, Y)$ denotes the sample, which is given by
\begin{align*}
u_{l,i}(Z) = \mathcal{L}(\mathcal{W}_{\neg[l,i]}, Z) - \mathcal{L}(\mathcal{W} , Z),
\end{align*}
where $\mathcal{W}$ is the set of matrices $\{\mW_1,...,\mW_L\}$, $\mathcal{L}(\mathcal{W}, Z)$ is the sample loss of a network parameterized by $\mathcal{W}$ on sample $Z$, and $\mathcal{W}_{\neg[l,i]}$ is the same as $\mathcal{W}$ except the weight $\mW_{l+1,i,j}$ is set to 0 for all values of $i$.

Note that the second-order Talyor's approximation of this utility depends on the off-diagonal elements of the Hessian matrix at each layer, since more than one weight is removed at once. For our analysis, we drop these elements and derive our feature utility. We expand the difference around the current input weights of the feature $i$ at layer $l$ and evaluate it by setting the weight to zero. The quadratic approximation of $u_{l,i}(Z)$ can be written as
\begin{align}
u_{l,j}(Z) &= \mathcal{L}(\mathcal{W}_{\neg[l,i]}, Z) - \mathcal{L}(\mathcal{W} , Z) \nonumber \\
&= \sum_{i=1}^{|\va_{l+1}|}\Big( -\frac{\partial \mathcal{L}}{\partial W_{l,i,j}} W_{l,i,j} + \frac{1}{2}\frac{\partial^2 \mathcal{L}}{\partial W^2_{l,ij}} W^2_{l+1,i, j} + 2 \sum_{j\neq i}\frac{\partial^2 \mathcal{L}}{\partial W_{l,i,j}\partial W_{l,i,k}} W_{l,i,j}W_{l,i,k} \Big)  \nonumber \\
&\approx \sum_{i=1}^{|\va_{l+1}|} \Big( -\frac{\partial \mathcal{L}}{\partial W_{l,ij}} W_{l,ij} + \frac{1}{2}\frac{\partial^2 \mathcal{L}}{\partial W^2_{l,ij}} W^2_{l,ij} \Big) \nonumber \\
&= \sum_{i=1}^{|\va_{l+1}|} U_{l,i,j}.
\label{equ:approx-feature-utility}
\end{align}

Alternatively, we can derive the utility of the $p$ at layer $l$ by dropping the input weights when the activation function passes through the origin (zero input leads to zero output). This gives rise to the property of \textit{conservation of utility} shown by \cref{thm:conservation-utility}.

\begin{theorem}
\label{thm:conservation-utility}
If the second-order off-diagonals in all layers in a neural network except for the last one are zero, all higher-order derivatives are zero, and an origin-passing activation function is used, the sum of output-weight utilities to a feature equals the sum of input-weight utilities to the same feature.
\begin{align*}
\sum_{i=1}^{|\va_{l+1}|} U_{l,i,j} &= \sum_{i=1}^{|\va_{l}|} U_{l-l,j,i}.
\end{align*}
\end{theorem}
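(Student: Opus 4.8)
The plan is to show that each side of the claimed identity equals one and the same quantity, namely the feature utility obtained by expanding the loss directly in the activation input $a_{l,j}$. Since the activation is origin-passing, zeroing every weight that feeds $a_{l,j}$ drives $a_{l,j}\to 0$ and hence $h_{l,j}=\sigma(a_{l,j})\to 0$, so deleting the feature is equivalent to the single-variable perturbation $a_{l,j}\to 0$. Its diagonal second-order Taylor expansion is
\begin{align*}
u_{l,j} = -\frac{\partial\mathcal{L}}{\partial a_{l,j}}\,a_{l,j} + \frac{1}{2}\,\widehat{\frac{\partial^2\mathcal{L}}{\partial a_{l,j}^2}}\,a_{l,j}^2 ,
\end{align*}
and I would take this as the common target, reducing both the outgoing-weight sum and the incoming-weight sum to it.

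For the output side $\sum_i U_{l+1,i,j}$, I would substitute the chain rules \cref{equ:gradient-W,equ:hessian-approx-W}, i.e. $\partial\mathcal{L}/\partial W_{l+1,i,j} = (\partial\mathcal{L}/\partial a_{l+1,i})\,h_{l,j}$ and $\widehat{\partial^2\mathcal{L}/\partial W_{l+1,i,j}^2} = \widehat{\partial^2\mathcal{L}/\partial a_{l+1,i}^2}\,h_{l,j}^2$, pulling $h_{l,j}$ and $h_{l,j}^2$ outside the sum over $i$. The remaining sums $\sum_i (\partial\mathcal{L}/\partial a_{l+1,i})W_{l+1,i,j}$ and $\sum_i \widehat{\partial^2\mathcal{L}/\partial a_{l+1,i}^2}\,W_{l+1,i,j}^2$ are exactly the backward recursions \cref{equ:gradient-a,equ:hessian-approx-a}; because all higher derivatives vanish ($\sigma^{\prime\prime}=0$) the extra term in \cref{equ:hessian-approx-a} drops, and these sums collapse to $(\partial\mathcal{L}/\partial a_{l,j})/\sigma^{\prime}(a_{l,j})$ and $\widehat{\partial^2\mathcal{L}/\partial a_{l,j}^2}/\sigma^{\prime}(a_{l,j})^2$. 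I would then invoke the degree-one homogeneity of these origin-passing activations, Euler's identity $h_{l,j}=a_{l,j}\,\sigma^{\prime}(a_{l,j})$, to cancel the $\sigma^{\prime}$ factors, turning $h_{l,j}/\sigma^{\prime}(a_{l,j})$ into $a_{l,j}$ and $h_{l,j}^2/\sigma^{\prime}(a_{l,j})^2$ into $a_{l,j}^2$, which reproduces $u_{l,j}$ exactly.

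For the input side, the feature is removed by zeroing the entire incoming vector $\{W_{l,j,k}\}_k$, so the expansion is genuinely multivariate. Using $\partial\mathcal{L}/\partial W_{l,j,k}=(\partial\mathcal{L}/\partial a_{l,j})\,h_{l-1,k}$, the first-order part sums to $-(\partial\mathcal{L}/\partial a_{l,j})\sum_k W_{l,j,k}h_{l-1,k} = -(\partial\mathcal{L}/\partial a_{l,j})\,a_{l,j}$ by the definition of the activation input. The second-order part is where I would be most careful: every entry $\partial^2\mathcal{L}/\partial W_{l,j,k}\partial W_{l,j,k'}$ factors as $\widehat{\partial^2\mathcal{L}/\partial a_{l,j}^2}\,h_{l-1,k}h_{l-1,k'}$, so all of these terms---the diagonal $k=k'$ and the off-diagonal $k\neq k'$ alike---are carried by the \emph{single} diagonal curvature $\widehat{\partial^2\mathcal{L}/\partial a_{l,j}^2}$ and are therefore \emph{not} killed by the hypothesis that off-diagonal activation-Hessian blocks vanish. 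Keeping them reconstructs the full square $\sum_{k,k'}h_{l-1,k}h_{l-1,k'}W_{l,j,k}W_{l,j,k'}=a_{l,j}^2$, again yielding $u_{l,j}$.

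With both weight-based sums reduced to the same $u_{l,j}$, the conservation identity follows immediately. I expect the main obstacle to be precisely the asymmetry in the second-order cross terms: on the output side the pairwise terms are true off-diagonal activation-Hessian entries of layer $l+1$ and vanish by assumption (consistent with \cref{equ:approx-feature-utility}), whereas on the input side the analogous pairwise terms are not off-diagonal at all---they all descend from the one diagonal entry $\widehat{\partial^2\mathcal{L}/\partial a_{l,j}^2}$ and must be retained. Recognizing this, and verifying that origin-passing homogeneity ($h=a\,\sigma^{\prime}$ together with $\sigma^{\prime\prime}=0$) is exactly what forces the output contraction to land on $a_{l,j}^2$ rather than on $\sum_k h_{l-1,k}^2 W_{l,j,k}^2$, is the crux that makes the two directions coincide.
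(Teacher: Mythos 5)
Your high-level strategy---funnelling both weight-utility sums into a single common feature utility---is in fact the paper's strategy (Appendix \ref{appendix:feature-weight-connection}), but the paper takes the common quantity to be the \emph{true} loss change from removing the feature: origin-passing is used only to guarantee that zeroing the incoming weights (which forces $a_{l,j}=0$ and hence $h_{l,j}=\sigma(0)=0$) performs the same removal as zeroing the outgoing weights, and the hypotheses (vanishing higher-order and off-diagonal terms) make both truncated expansions exact, so the two sums coincide without ever matching terms across the two expansions. Your execution instead matches term-by-term, and at the crux invokes ``origin-passing $\Rightarrow$ degree-one homogeneity $\Rightarrow$ $h_{l,j}=a_{l,j}\sigma^{\prime}(a_{l,j})$.'' That implication is false: $\tanh$ passes through the origin but is not homogeneous ($\tanh(a)\neq a(1-\tanh^{2}(a))$), and the theorem---and the paper's experiments---explicitly cover Tanh. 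Your fallback reading that ``all higher-order derivatives are zero'' entails $\sigma^{\prime\prime}=0$ also conflicts with the paper's own usage of that hypothesis: \cref{thm:utility-propagation} assumes it verbatim yet retains $\sigma^{\prime\prime}(a_{l,i})$ in its recursion, so the hypothesis truncates the loss expansion, not the activation. Under the theorem's actual assumptions the first-order pieces $-h_{l,j}\,\partial\mathcal{L}/\partial h_{l,j}$ and $-a_{l,j}\,\partial\mathcal{L}/\partial a_{l,j}$ genuinely differ; only the totals of the two exact expansions agree, which is why the paper's route through the common true quantity works while term-matching does not. (Your division by $\sigma^{\prime}(a_{l,j})$ also breaks at ReLU dead units, a smaller issue.)

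On the input side there is a second, structural mismatch. By retaining the cross terms you compute the full multivariate expansion in the incoming weights, which indeed equals $-a_{l,j}\,\partial\mathcal{L}/\partial a_{l,j}+\tfrac{1}{2}(\partial^{2}\mathcal{L}/\partial a_{l,j}^{2})\,a_{l,j}^{2}$---but that is not the theorem's right-hand side. The right-hand side is a sum of the \emph{diagonal-only} weight utilities of \cref{thm:so-approximated-utility}, whose second-order part is $\tfrac{1}{2}(\partial^{2}\mathcal{L}/\partial a_{l,j}^{2})\sum_{k}h_{l-1,k}^{2}W_{l,j,k}^{2}$, not $\tfrac{1}{2}(\partial^{2}\mathcal{L}/\partial a_{l,j}^{2})\,a_{l,j}^{2}$. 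The paper closes this by fiat: in \cref{equ:approx-feature-utility} it drops \emph{all} second-order cross terms of the weight-Hessian blocks, input side included, under its off-diagonal hypothesis. Your observation that the input-side cross terms $\partial^{2}\mathcal{L}/\partial W_{l,j,k}\partial W_{l,j,k^{\prime}}=(\partial^{2}\mathcal{L}/\partial a_{l,j}^{2})\,h_{l-1,k}h_{l-1,k^{\prime}}$ ride on the single \emph{diagonal} curvature---so that assuming them away while keeping that curvature is awkward---is a sharp and legitimate critique of the paper's assumption; but as a proof of the stated identity it opens a gap rather than closing one, since with those terms kept your two sides reduce to different quantities, and the claimed conservation fails unless the cross terms are assumed zero exactly as the paper assumes.
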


\begin{proof} 
From Eq.\ \ref{equ:approx-feature-utility}, we can write the utility of the feature $p$ at layer $l$ by dropping the input weights when the activation function passes through the origin (zero input leads to zero output) as follows:
\begin{align}
u_{l,j}(Z) &\approx \sum_{i=1}^{|\va_{l}|} U_{l-l,j,i}(Z). \nonumber
\end{align}
In addition, we can write the utility of the feature $p$ at layer $l$ by dropping the output weights as follows:
\begin{align}
u_{l,j}(Z) &\approx \sum_{i=1}^{|\va_{l}|} U_{l,i,j}(Z). \nonumber
\end{align}

Therefore, we can write the following equality:
\begin{align}
 \sum_{i=1}^{|\va_{l+1}|} U_{l,i,j} &= \sum_{i=1}^{|\va_{l}|} U_{l-l,j,i}. \nonumber
\end{align}
\end{proof}

This theorem shows the property of the conservation of instantaneous utility. The sum of utilities of the outgoing weights to a feature equals to the sum of utilities of the incoming weights to the same feature when origin-passing activation functions are used and the off-diagonal elements are dropped.

This conservation law resembles the one by Tanaka et al.\ (2020). Such a conservation law comes from a corresponding symmetry, namely the rescale symmetry (Kunin et al.\ 2020)

\clearpage
\section{Utility-based Perturbed Gradient Descent Algorithms with Local Utilities}
\begin{minipage}{0.5\textwidth}
\begin{algorithm}[H]
\caption{Weight-wise UPGD with local scaled utility}\label{alg:upgd-weight-layerwise}
\begin{algorithmic}
\STATE {\bfseries Require:} Neural network $f$ with weights $\{\mW_1,...,\mW_L\}$ and a stream of data $\mathcal{D}$
\STATE {\bfseries Require:} Step size $\alpha$ and decay rate $\beta \in [0,1)$
\STATE {\bfseries Require:} Initialize $\{\mW_1,...,\mW_L\}$
\STATE Initialize time step $t\leftarrow0$
\FOR{$l$ in $\{L,L-1,...,1\}$} 
\STATE $\mU_{l} \leftarrow \mathbf{0}$
\ENDFOR
\FOR{$(\vx, y)$ in $\mathcal{D}$}
\STATE $t\leftarrow t+1$
\FOR{$l$ in $\{L,L-1,...,1\}$}
\STATE $\mF_{l},\mS_{l}\leftarrow$\texttt{GetWeightDerivatives}($f, \vx, \vy, l)$
\STATE $\mM_{l} \leftarrow \sfrac{1}{2}\mS_{l}\circ\mW^2_{l} - \mF_{l}\circ\mW $
\STATE $\mU_{l} \leftarrow \beta\mU_{l} + (1-\beta)\mM_l $
\STATE $\hat{\mU}_{l} \leftarrow \mU_{l}/(1-\beta^{t})$
\STATE Sample noise matrix $\boldsymbol{\xi}$.
\STATE $\Bar{\mU}_{l} \leftarrow \phi{(\hat{\mU_{l}}}\mD)$ \quad \texttt{// where $D_{ii}=1/|\mU_{i,:,l}|$}
\STATE \texttt{\quad \quad \quad \quad \quad \quad \quad// and $D_{ij}=0, \forall i\neq j$}
\STATE $\mW_{l} \leftarrow \mW_{l} - \alpha (\mF_{l} + \boldsymbol{\xi})\circ(1-\Bar{\mU}_{l})$
\ENDFOR
\ENDFOR
\STATE 
\STATE 
\STATE 
\STATE 
\STATE 
\end{algorithmic}
\end{algorithm}
\end{minipage}
\hfill
\begin{minipage}{0.5\textwidth}
\begin{algorithm}[H]
\caption{Feature-wise UPGD with local scaled utility}\label{alg:upgd-feature-layerwise}
\begin{algorithmic}
\STATE {\bfseries Require:} Neural network $f$ with weights $\{\mW_1,...,\mW_L\}$ and a stream of data $\mathcal{D}$
\STATE {\bfseries Require:} Step size $\alpha$ and decay rate $\beta \in [0,1)$
\STATE {\bfseries Require:} Initialize weights $\{\mW_1,...,\mW_{L}\}$ randomly and masks $\{\vg_1,...,\vg_{L-1}\}$ to ones
\STATE {\bfseries Require:} Initialize time step $t\leftarrow 0$
\FOR{$l$ in $\{L,L-1,...,1\}$} 
\STATE $\vu_{l} \leftarrow \mathbf{0}$
\ENDFOR
\FOR{$(\vx, y)$ in $\mathcal{D}$}
\STATE $t\leftarrow t+1$
\FOR{$l$ in $\{L,L-1,...,1\}$}
\STATE $\mF_{l},\mS_{l}\leftarrow$\texttt{GetWeightDerivatives}($f, \vx, \vy, l)$
\IF{$l=L$}
\STATE $\mW_{l} \leftarrow \mW_{l} - \alpha\mF_{l}$
\ELSE
\STATE $\vf_{l},\vs_{l}\leftarrow$\texttt{GetMaskDerivatives}($f, \vx, \vy, l)$
\STATE $\vm_{l} \leftarrow \sfrac{1}{2}\vs_{l} - \vf_{l}$
\STATE $\vu_{l} \leftarrow \beta\vu_{l} + (1-\beta)\vm_l $
\STATE $\hat{\vu}_{l} \leftarrow \vu_{l}/(1-\beta^{t})$
\STATE $\Bar{\vu}_{l} \leftarrow \phi{(\hat{\vu}_{l}/|\hat{\vu}_{l}|)}$
\STATE Sample noise matrix $\boldsymbol{\xi}$.
\STATE $\mW_{l} \leftarrow \mW_{l} - \alpha (\mF_{l} + \boldsymbol{\xi})\circ(1-\mathbf{1} \Bar{\vu}_{l}^{\top})$
\ENDIF
\ENDFOR
\ENDFOR
\end{algorithmic}
\end{algorithm}
\end{minipage}


\section{Experimental Details}
\label{appendix:experimental-details}
\subsection{Toy Problem}
\label{appendix:details-toy}
The learners use a network of two hidden layers containing $300$ and $150$ units with identity activations, respectively. A thorough hyperparameter search is conducted. The search space for step size is $\{10^{-i}\}_{i=1}^{4}$, for the utility trace decay $b_u$ is $\{0.0, 0.9, 0.99, 0.999\}$, for the noise standard deviation is $\{10^{-i}\}_{i=0}^{4}$, and for the weight decay factor $\lambda$ is $\{0.1, 0.01, 0.001\}$. The utility traces are computed using exponential moving averages given by $\tilde{U}_t = \beta_u \tilde{U}_{t-1} + (1-\beta_u) U_{t}$, where $\tilde{U}_t$ is the utility trace at time $t$ and $U_{t}$ is the instantaneous utility at time $t$. Each learner is trained for $1$ million time steps. The results are averaged over $20$ independent runs.

\subsection{Stationary MNIST}
\label{appendix:details-stationary-mnist}
The learners use a network of two hidden layers containing $300$ and $150$ units with ReLU activations, respectively. A thorough hyperparameter search is conducted. For the search experiment using UPS, search space for the standard deviation of noise $\sigma$ is $\{10^{-i}\}_{i=1}^{5}$, the step size $\alpha$ is fixed for a value of  $1.0$, and the utility trace decay $b_u$ is $\{0.0, 0.9, 0.99, 0.999, 0.9999\}$. In the experiments using UPGD, the search space for step size $\alpha$ is $\{10^{-i}\}_{i=1}^{4}$, the utility trace decay $b_u$ is $\{0.0, 0.9, 0.99, 0.999\}$, the standard deviation of the noise is $\{10^{-i}\}_{i=0}^{4}$, and the weight decay factor $\lambda$ is $\{0.1, 0.01, 0.001\}$ for Shrink \& Perturb. The utility traces are computed using exponential moving averages given by $\tilde{U}_t = \beta_u \tilde{U}_{t-1} + (1-\beta_u) U_{t}$, where $\tilde{U}_t$ is the utility trace at time $t$ and $U_{t}$ is the instantaneous utility at time $t$. Each learner is trained for $1$ million time steps. The results are averaged over $20$ independent runs.

\subsection{Input-permuted MNSIT and Output-permuted EMNIST}
\label{appendix:details-input-permuted-mnist}
\label{appendix:details-output-permuted-emnist}
The learners use a network of two hidden layers containing $300$ and $150$ units with ReLU activations, respectively. A thorough hyperparameter search is conducted. The search space for step size $\alpha$ is $\{0.01, 0.001, 0.0001\}$, for the utility trace decay $b_u$ is $\{0.999, 0.9999\}$, for the standard deviation of the noise is $\{0.01, 0.1, 1.0\}$, and for the weight decay factor $\lambda$ is $\{0.0, 0.1, 0.01, 0.001, 0.0001\}$. The utility traces are computed using exponential moving averages given by $\tilde{U}_t = \beta_u \tilde{U}_{t-1} + (1-\beta_u) U_{t}$, where $\tilde{U}_t$ is the utility trace at time $t$ and $U_{t}$ is the instantaneous utility at time $t$. For AdamW, we used a search space for $\beta_1$ of $\{0.0, 0.9\}$, for $\beta_2$ of $\{0.99, 0.999, 0.9999\}$, and for $\epsilon$ of $\{10^{-4}, 10^{-8}, 10^{-16}\}$. Each learner is trained for $1$ million time steps. The results are averaged over $20$ independent runs.

\subsection{Output-permuted CIFAR-10}
\label{appendix:details-output-permuted-cifar10}
The learners use a network with two convolutional layers with max-pooling followed by two fully connected layers with ReLU activations. The first convolutional layer uses a kernel of $5$ and outputs $6$ filters, whereas the second convolutional layer uses a kernel of $5$ and outputs $16$ filters. The max-pooling layers use a kernel of $2$. The data is flattened after the second max-pooling layer and fed to a fully connected network with two hidden layers containing $120$ and $84$ units, respectively. A thorough hyperparameter search is conducted. The search space for step size $\alpha$ is $\{0.01, 0.001, 0.0001\}$, for the utility trace decay $b_u$ is $\{0.999, 0.9999\}$, for the standard deviation of the noise is $\{0.01, 0.1, 1.0\}$, and for the weight decay factor $\lambda$ is $\{0.0, 0.1, 0.01, 0.001, 0.0001\}$. The utility traces are computed using exponential moving averages given by $\tilde{U}_t = \beta_u \tilde{U}_{t-1} + (1-\beta_u) U_{t}$, where $\tilde{U}_t$ is the utility trace at time $t$ and $U_{t}$ is the instantaneous utility at time $t$. For AdamW, we used a search space for $\beta_1$ of $\{0.0, 0.9\}$, for $\beta_2$ of $\{0.99, 0.999, 0.9999\}$, and for $\epsilon$ of $\{10^{-4}, 10^{-8}, 10^{-16}\}$. Each learner is trained for $1$ million time steps. The results are averaged over $10$ independent runs.

\section{The HesScale Algorithm (Elsayed \& Mahmood 2022)}
\label{appendix:hesscale}
\begin{algorithm}
\caption{The HesScale Algorithm in Classification (Elsayed \& Mahmood 2022)}\label{alg:hesscale}
\begin{algorithmic}
\STATE {\bfseries Require:} Neural network $f$ and a layer number $l$
\STATE {\bfseries Require:} $\widehat{\frac{\partial \mathcal{L}}{\partial a_{l+1,i}}}$ and $\widehat{\frac{\partial^2 \mathcal{L}}{\partial a_{l+1,i,j}^2}}$, unless $l=L$
\STATE {\bfseries Require:} Input-output pair $(\vx, y)$.
\STATE Compute preference vector $\va_L \leftarrow f(\vx)$ and target one-hot-encoded vector $\vp\leftarrow\texttt{onehot}(y)$.
\STATE Compute the predicted probability vector $\vq\leftarrow\vsigma(\va_L)$.
\STATE Compute the error  $\mathcal{L}(\vp, \vq)$.
\IF{$l=L$} 
    \STATE Compute $\frac{\partial \mathcal{L}}{\partial \va_{L}} \leftarrow \vq -\vp$
    \STATE Compute $\frac{\partial \mathcal{L}}{\partial \mW_{L}}$ using Eq.\ \ref{equ:gradient-W} 
    \STATE $\widehat{\frac{\partial^2 \mathcal{L}}{\partial \va_{L}^2}} \leftarrow \vq-\vq\circ\vq$
    \STATE Compute $\widehat{\frac{\partial^2 \mathcal{L}}{\partial \mW_{L}^2}}$ using Eq.\ \ref{equ:hessian-approx-W}
\ELSIF{$l\neq L$}
    \STATE Compute $\frac{\partial \mathcal{L}}{\partial \va_{l}}$ and $\sfrac{\partial \mathcal{L}}{\partial \mW_{l}}$ using Eq. \ref{equ:gradient-a} and Eq. \ref{equ:gradient-W}
    \STATE Compute $\widehat{\frac{\partial^2 \mathcal{L}}{\partial \va_{l}^2}}$ and $\widehat{\frac{\partial^2 \mathcal{L}}{\partial \mW_{l}^2}}$ using Eq.\ \ref{equ:hessian-approx-a} and Eq.\ \ref{equ:hessian-approx-W}
\ENDIF
\STATE {\bfseries Return} $\frac{\partial \mathcal{L}}{\partial \mW_{l}}$, $\widehat{\frac{\partial^2 \mathcal{L}}{\partial \mW_{l}^2}}$, $\frac{\partial \mathcal{L}}{\partial \va_{l}}$, and $\widehat{\frac{\partial^2 \mathcal{L}}{\partial \va_{l}^2}}$
\end{algorithmic}
\end{algorithm}

\section{Additional Results}

\subsection{Utility-based Perturbed Search on Stationary MNIST using uncorrelated noise}
\label{appendix:additional-mnist-ups}

We repeat the stationary MNIST experiment on Utility-based Perturbed Search (UPS) but using uncorrelated noise. We report the results in Fig.\ \ref{fig:ups-uncorrelated} with uncorrelated noise. The UPS is unable to learn using uncorrelated noise in the weight space, but it is still able to learn in the feature space. This may suggest that learning in the weight space is harder since it is larger than the feature space, and the anti-correlated noise is more effective in larger spaces.

\begin{figure}[ht]
\centering
\subfigure[Weight-wise]{
    \includegraphics[width=0.40\textwidth]{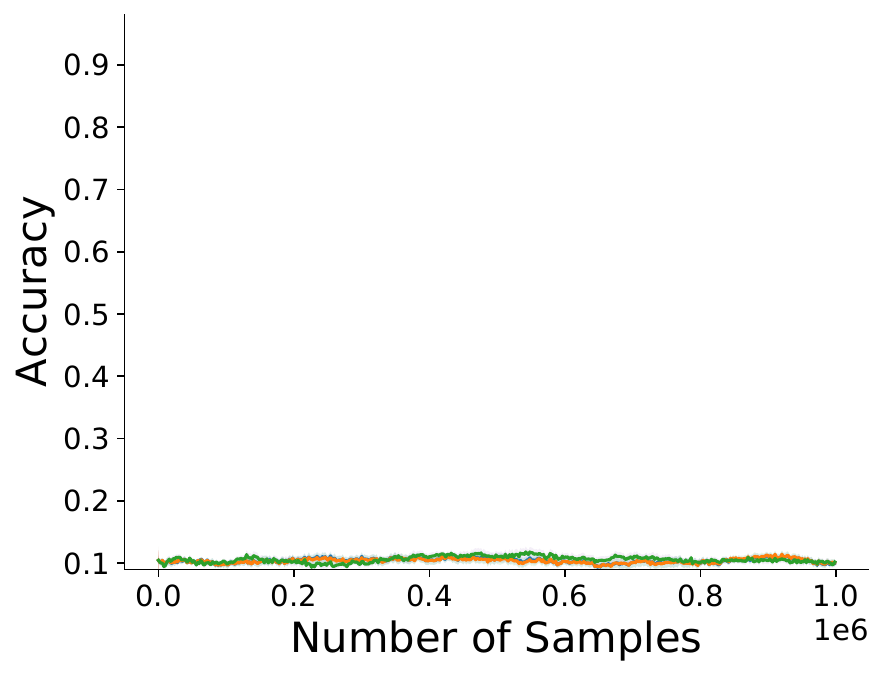}
}
\subfigure[Feature-wise]{
    \includegraphics[width=0.40\textwidth]{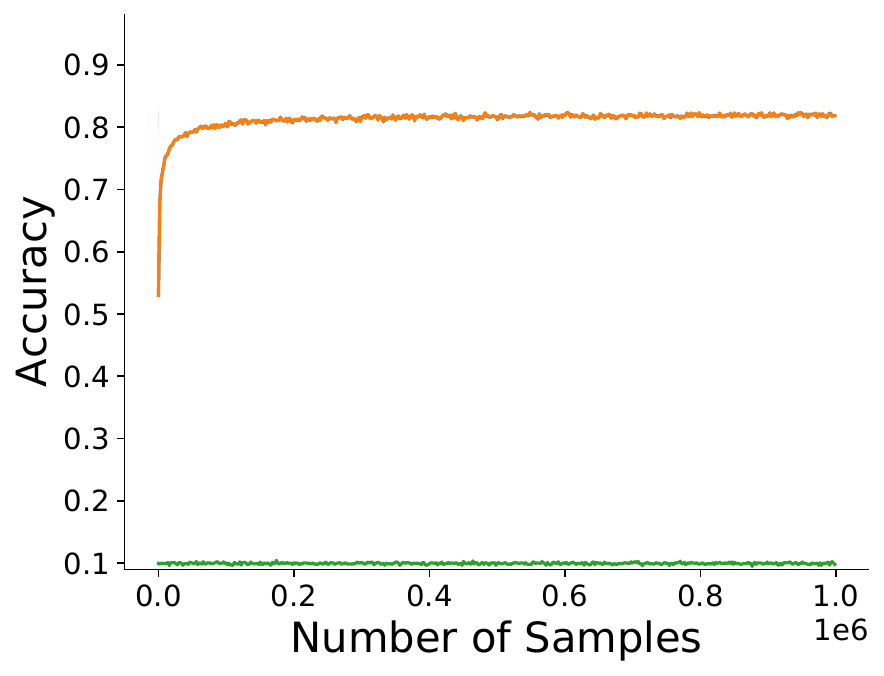}
}

\vspace{0.2cm}
\includegraphics[width=0.55\columnwidth]{figures/Legends/utility-mnist.pdf}
\vspace{-0.2cm}
\caption{Utility-based Perturbed Search with Uncorrelated Noise}
\label{fig:ups-uncorrelated}
\end{figure}


\subsection{UPGD on Stationary MNIST}
\label{appendix:additional-mnist-upgd}
We repeat the stationary MNIST experiment but with utility-based gradient descent. A desirable property of continual learning systems is that they should not asymptotically impose any extra performance reduction, which can be studied in a stationary task such as MNIST. We report the results in Fig.\ \ref{fig:mnist-upgd-all} with uncorrelated noise. We notice that UPGD improves performance over SGD since it uses search in addition to the gradient information.

\begin{figure}[H]
\centering
\subfigure[Weight-wise with Global Utility]{
    \includegraphics[width=0.32\textwidth]{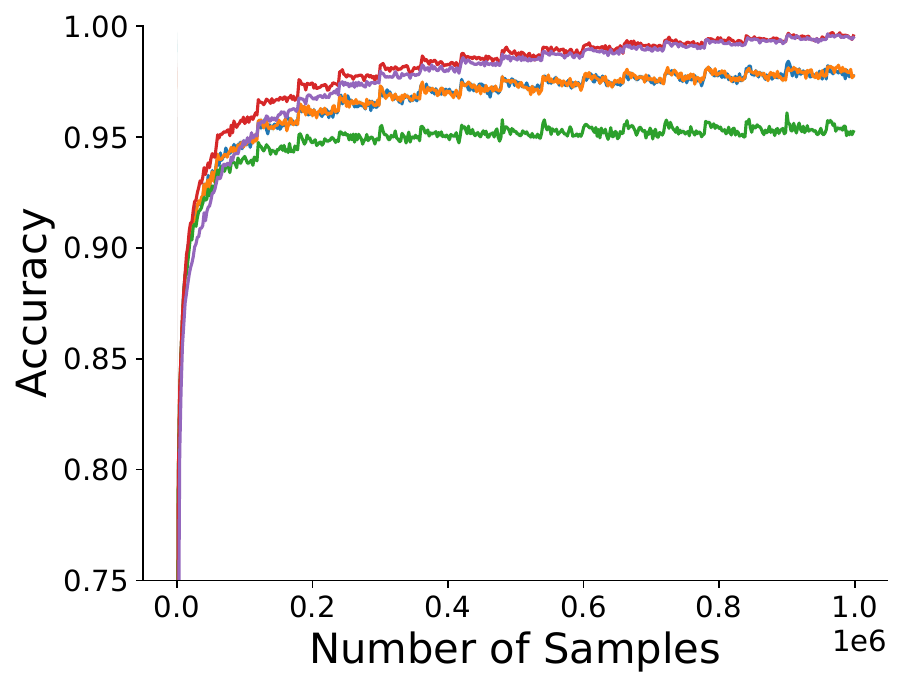}
}
\subfigure[Feature-wise with Global Utility]{
    \includegraphics[width=0.32\textwidth]{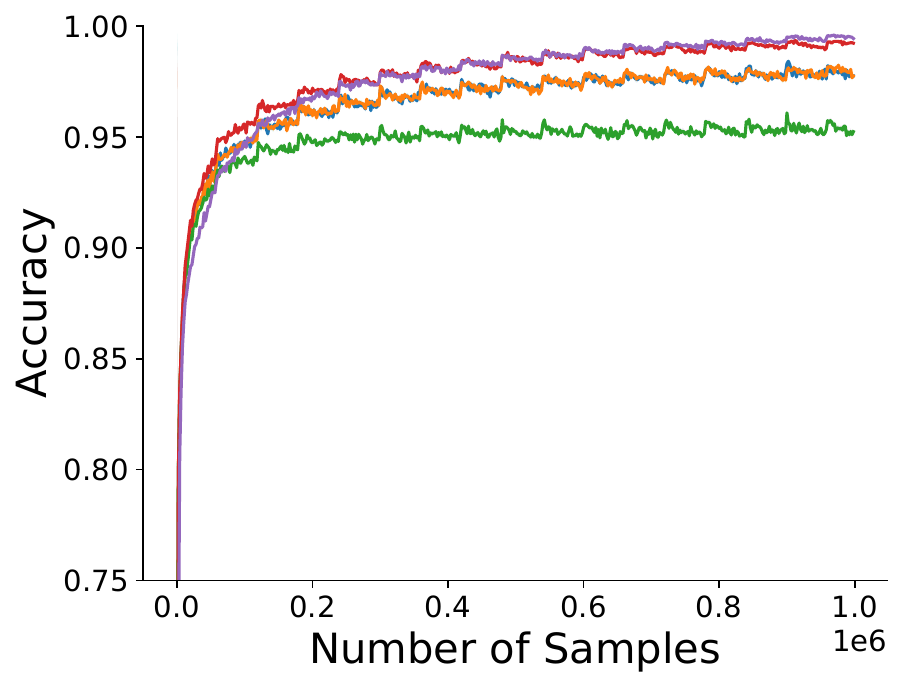}
}
\hfill

\subfigure[Weight-wise with Local Utility]{
    \includegraphics[width=0.32\textwidth]{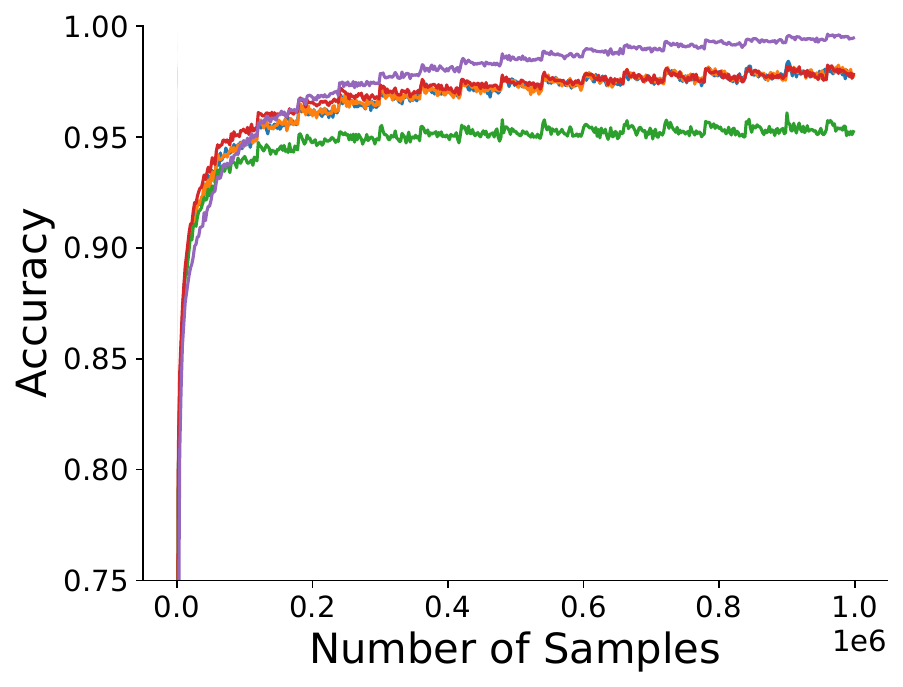}
}
\subfigure[Feature-wise with Local Utility]{
    \includegraphics[width=0.32\textwidth]{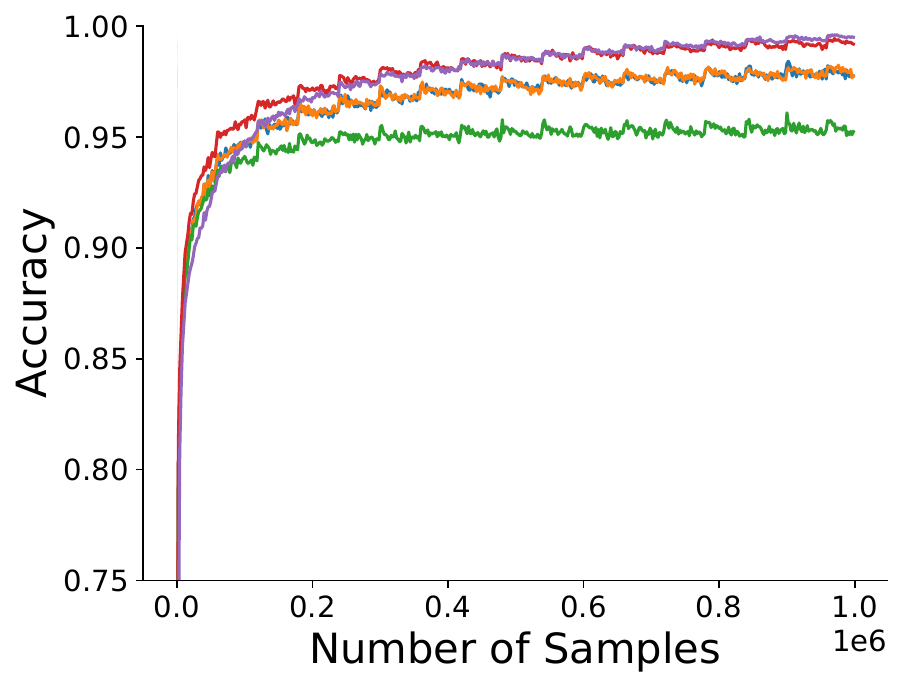}
}

\vspace{0.2cm}
\includegraphics[width=0.6\columnwidth]{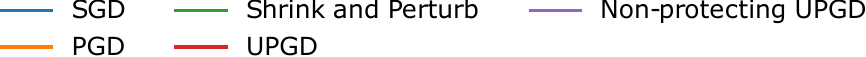}
\vspace{-0.2cm}
\caption{Utility-based Perturbed Gradient Descent with first-order approximated utilities using uncorrelated noise on stationary MNIST.}
\label{fig:mnist-upgd-all}
\end{figure}

\subsection{The Quality of the Approximated Utility}
\label{appendix:utility-additional}
Here, we provide more results for the quality of the approximated utility. We use the Spearman correlation to calculate the similarity between the ordering of the approximated utility against the true utility using different activation functions. Fig.\ \ref{fig:utility-additional} shows the Spearman correlation at every time step for the global and local cases.

\begin{figure}[ht]
\centering
\subfigure[Global weight utility (ReLU)]{
    \includegraphics[width=0.23\textwidth]{figures/ApproxUtility/weight/relu/global_correlations.pdf}
}
\subfigure[Local weight utility  (ReLU)]{
    \includegraphics[width=0.23\textwidth]{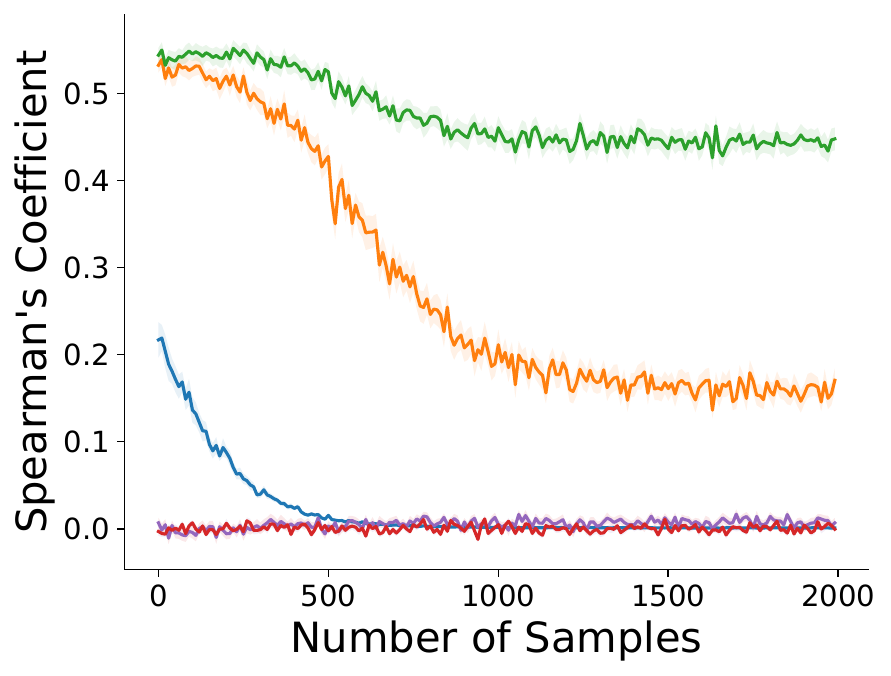}
}
\subfigure[Global feature utility  (ReLU)]{
    \includegraphics[width=0.23\textwidth]{figures/ApproxUtility/feature/relu/global_correlations.pdf}
}
\subfigure[Local feature utility  (ReLU)]{
    \includegraphics[width=0.23\textwidth]{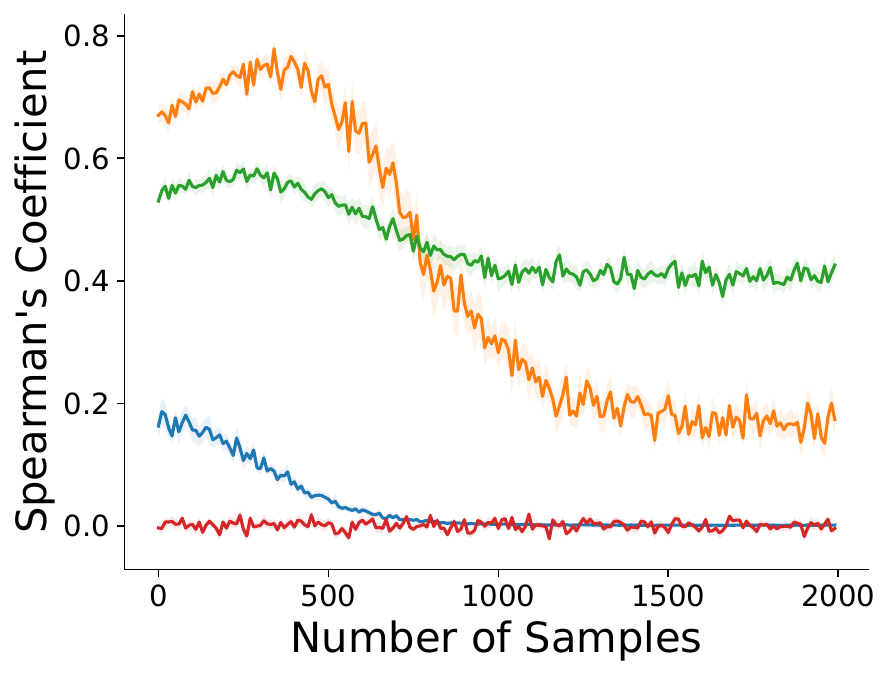}
}

\hfill

\subfigure[Global weight utility (LeakyReLU)]{
    \includegraphics[width=0.23\textwidth]{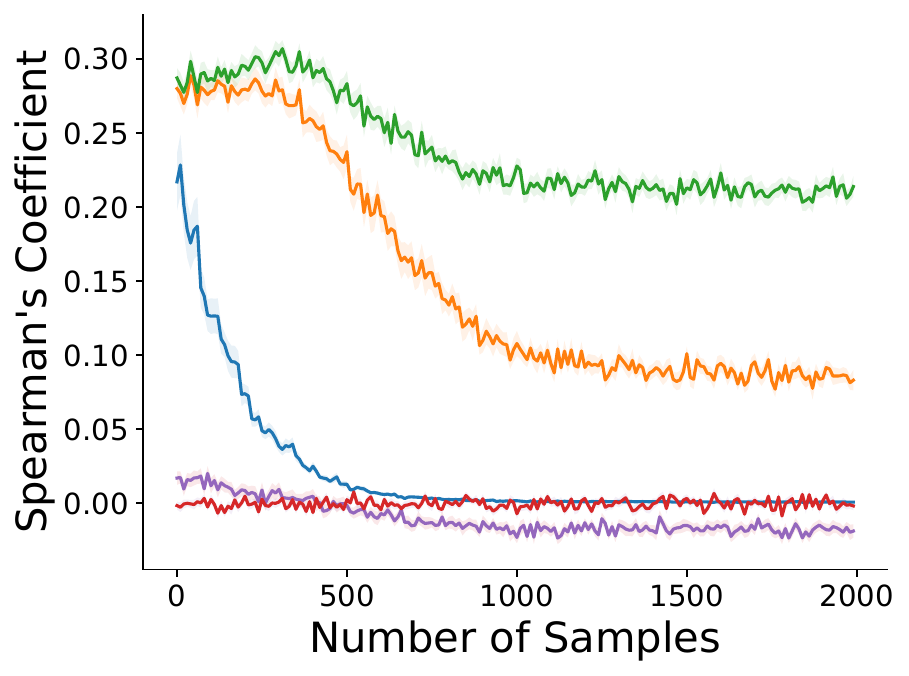}
}
\subfigure[Local weight utility (LeakyReLU)]{
    \includegraphics[width=0.23\textwidth]{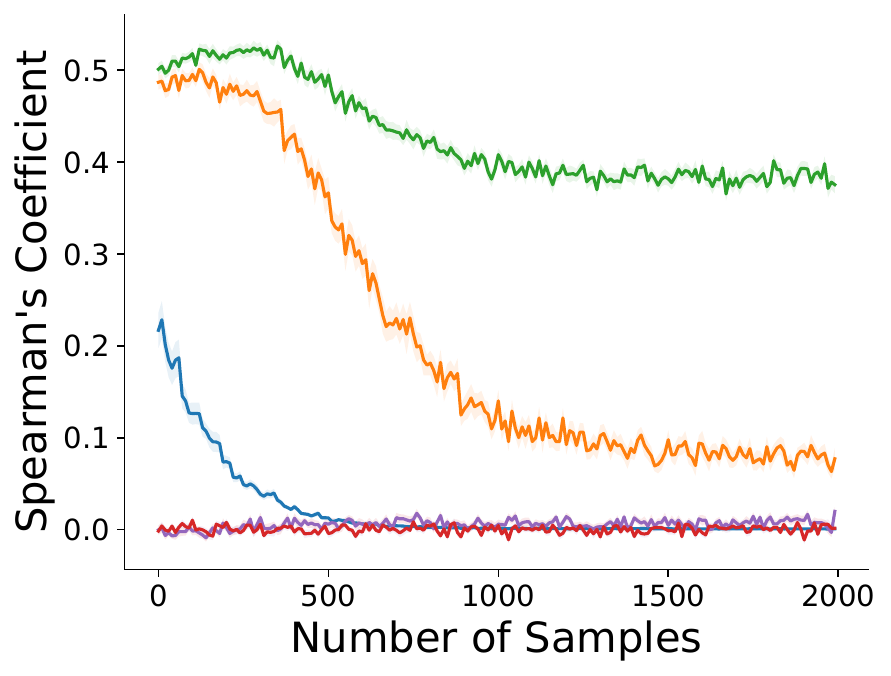}
}
\subfigure[Global feature utility (LeakyReLU)]{
    \includegraphics[width=0.23\textwidth]{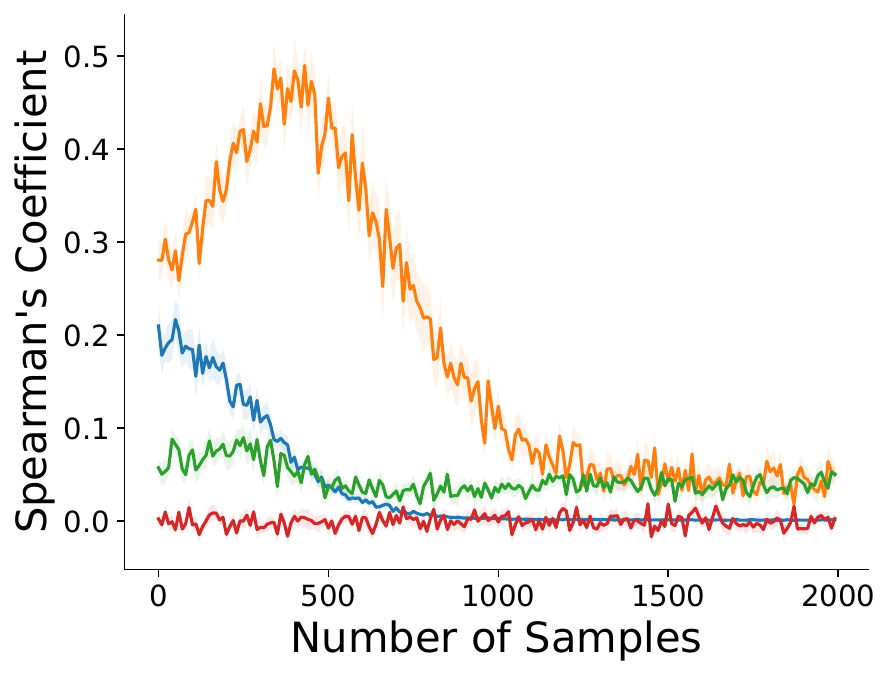}
}
\subfigure[Local feature utility (LeakyReLU)]{
    \includegraphics[width=0.23\textwidth]{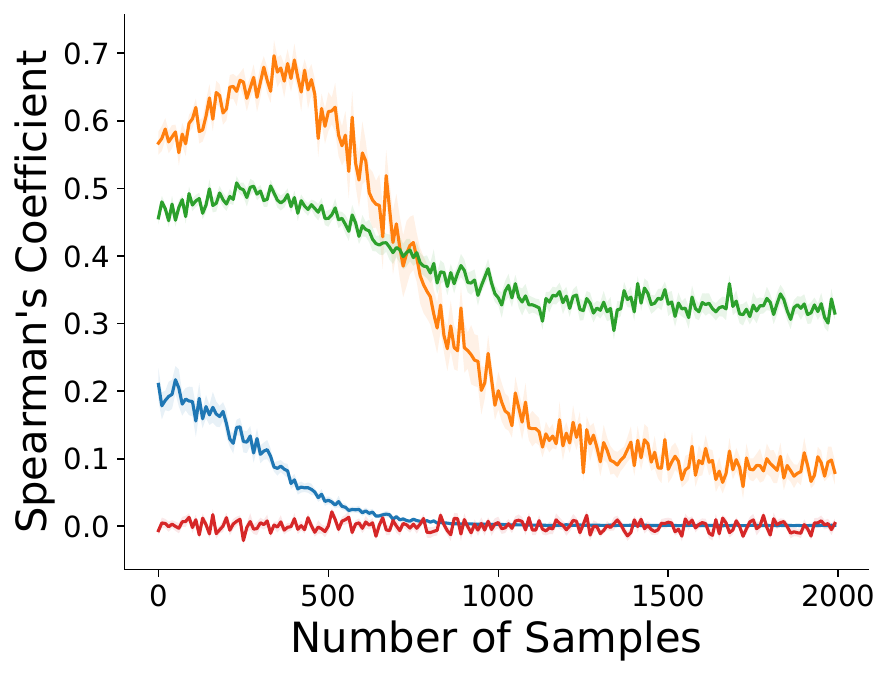}
}

\hfill

\subfigure[Global weight utility (Tanh)]{
    \includegraphics[width=0.23\textwidth]{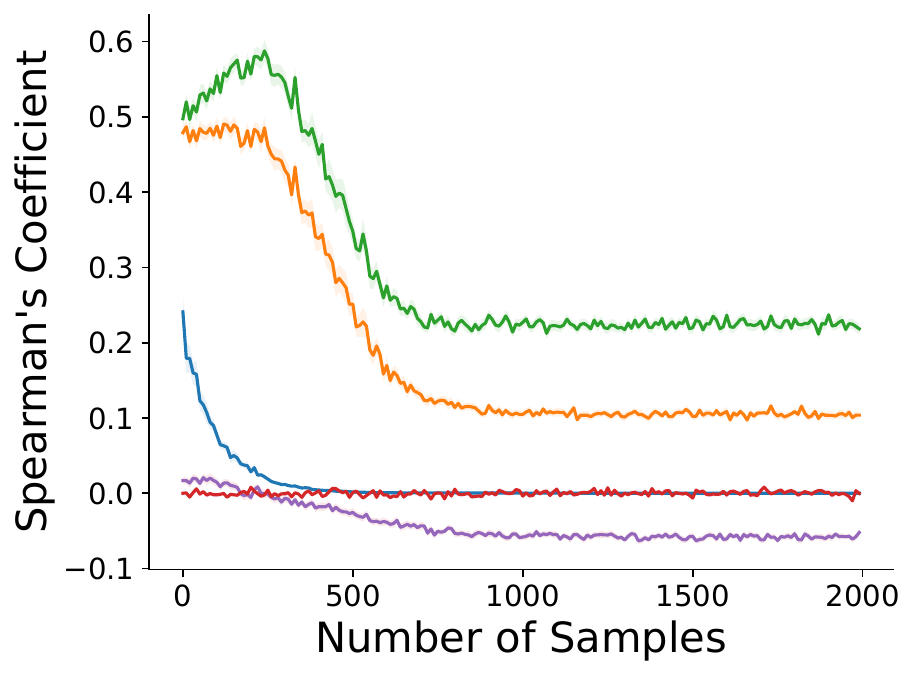}
}
\subfigure[Local weight utility (Tanh)]{
    \includegraphics[width=0.23\textwidth]{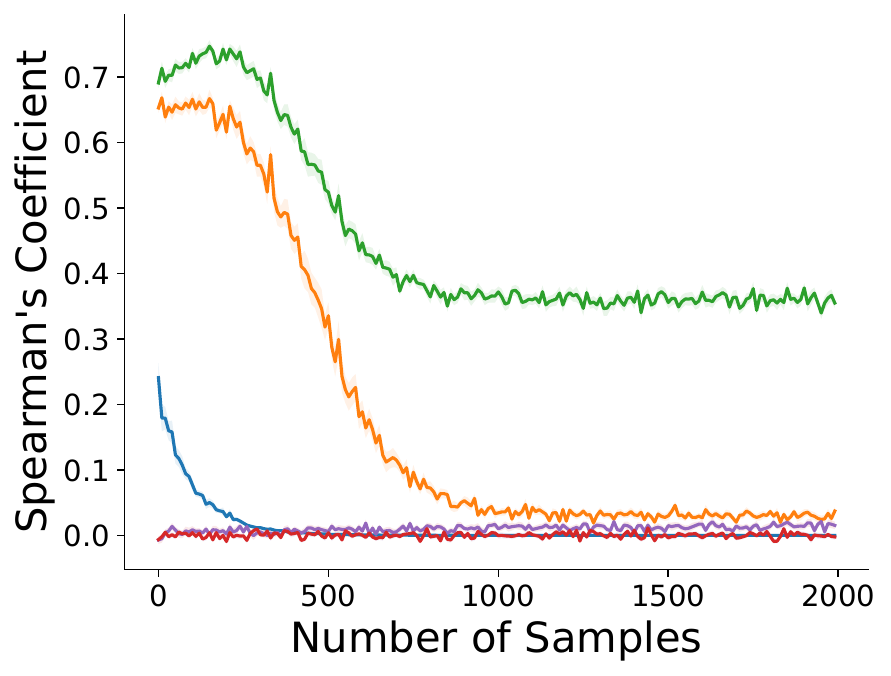}
}
\subfigure[Global feature utility (Tanh)]{
    \includegraphics[width=0.23\textwidth]{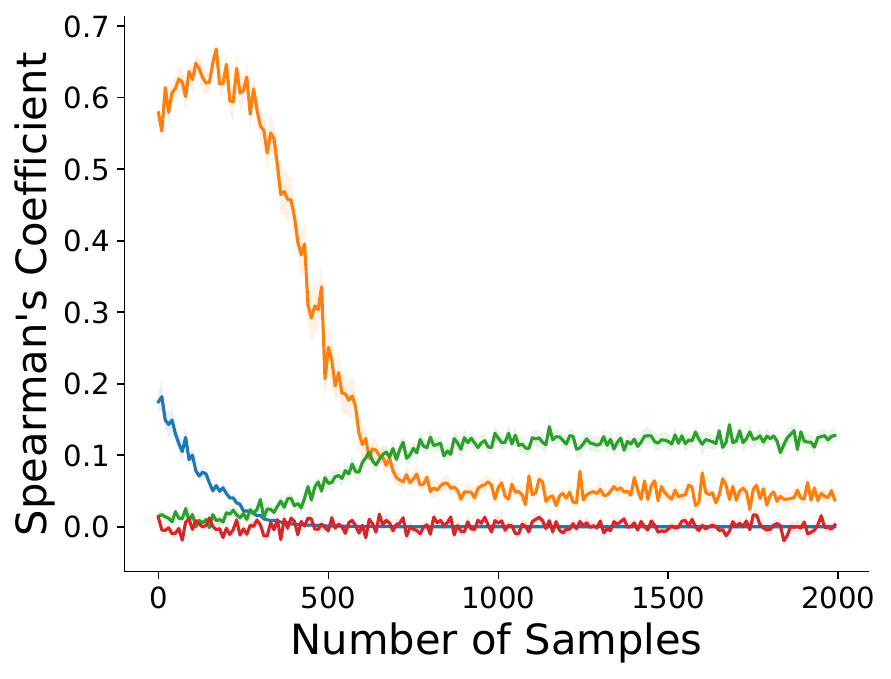}
}
\subfigure[Local feature utility (Tanh)]{
    \includegraphics[width=0.23\textwidth]{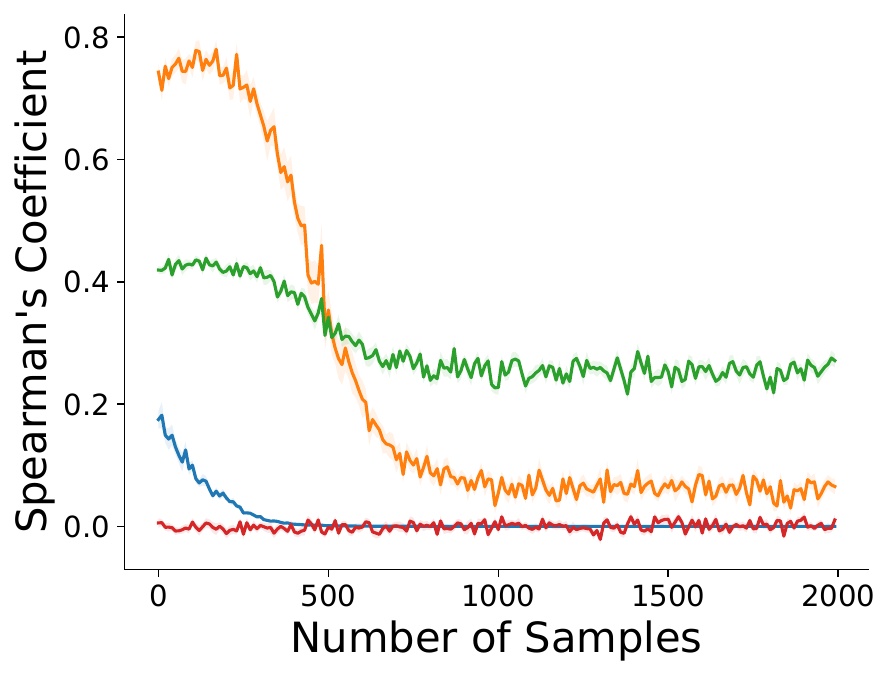}
}

\vspace{0.2cm}
\includegraphics[width=0.65\columnwidth]{figures/Legends/utility.pdf}
\vspace{-0.2cm}
\caption{Spearman Correlation between the true weight/feature utility and approximated weight/feature utilities with different activations}
\label{fig:utility-additional}
\end{figure}


\subsection{UPGD with Second-order approximated Utility on the Toy Problem using Second-order Approximated Utilities}
\label{appendix:so-utility-toy-problem}
We repeat the experiment on the toy problem but with second-order approximated utility. Fig.\ \ref{fig:toy-problem-changing-inputs-all} shows the results on the first variation of the problem, and Fig.\ \ref{fig:toy-problem-changing-outputs-all} shows the results on the second variation of the problem. We note that the second-order approximated utility improves performance over the first-order approximated utility in this problem.

\begin{figure}[ht]
\centering
\subfigure[Weight-wise with Global Utility]{
    \includegraphics[width=0.28\textwidth]{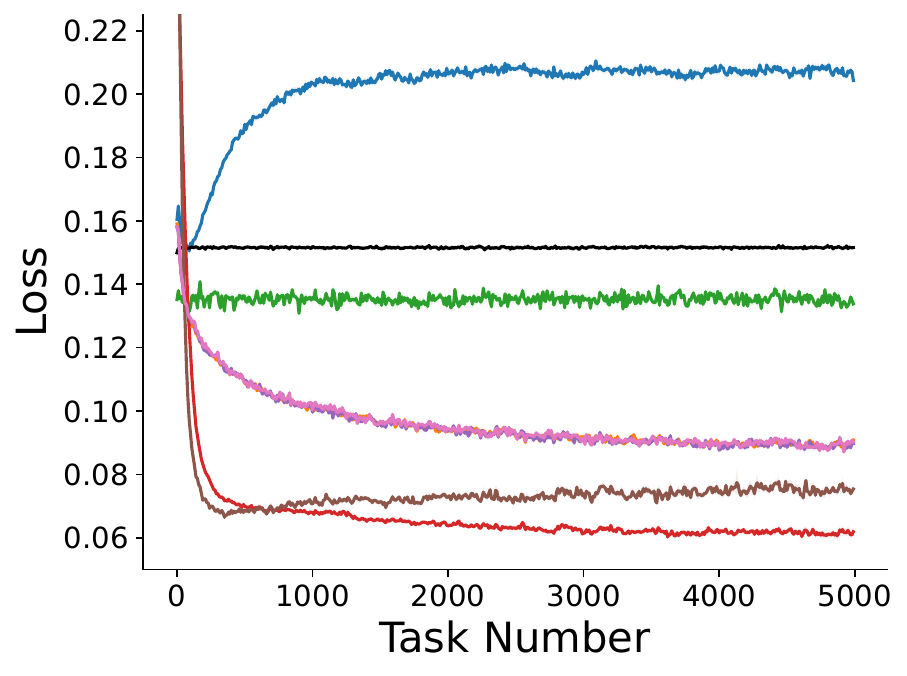}
}
\subfigure[Feature-wise with Global Utility]{
    \includegraphics[width=0.28\textwidth]{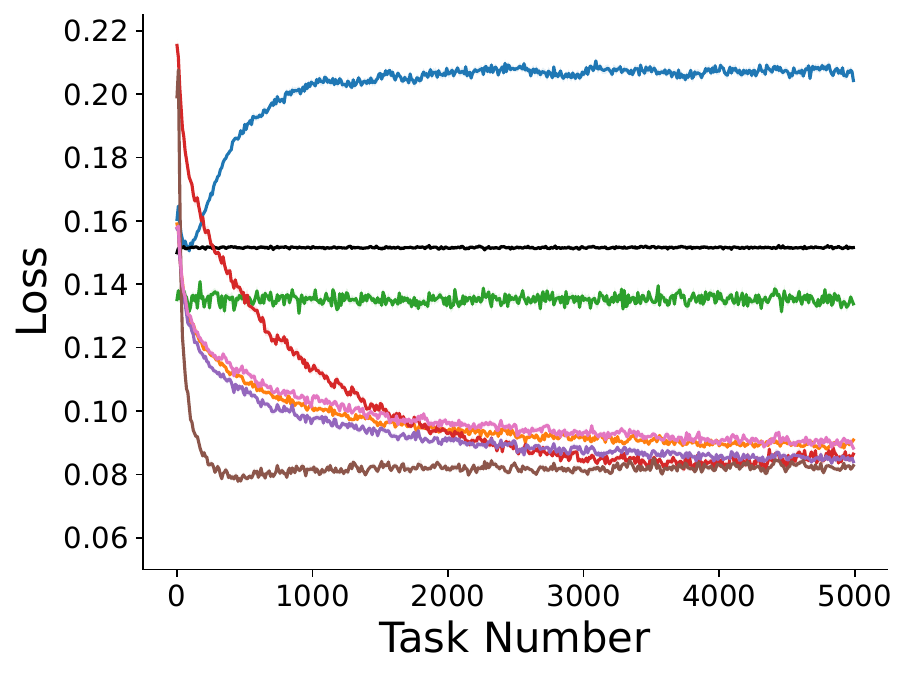}
}

\hfill

\subfigure[Weight-wise with Local Utility]{
    \includegraphics[width=0.28\textwidth]{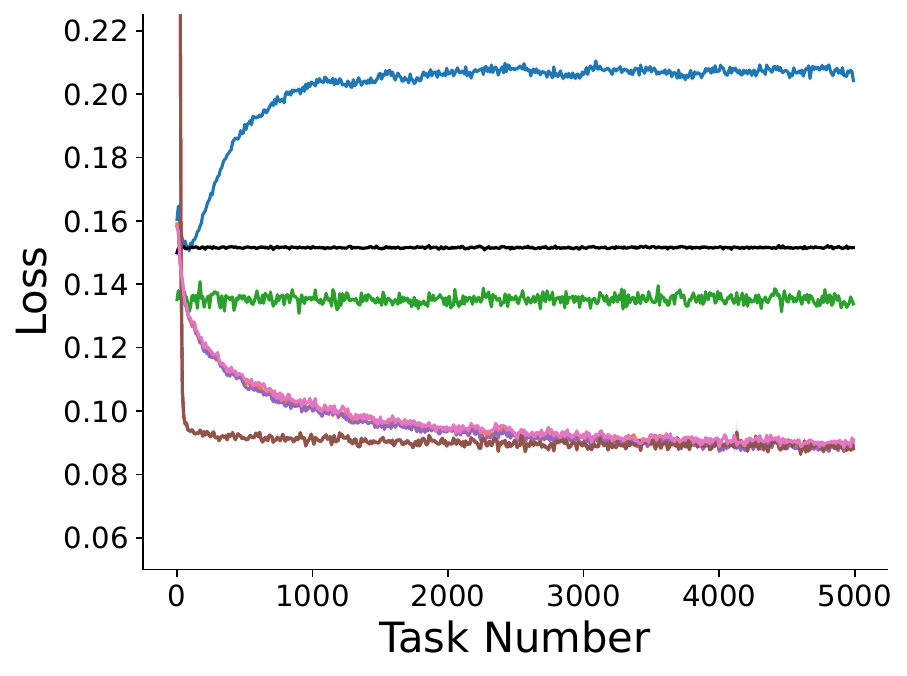}
}
\subfigure[Feature-wise with Local Utility]{
    \includegraphics[width=0.28\textwidth]{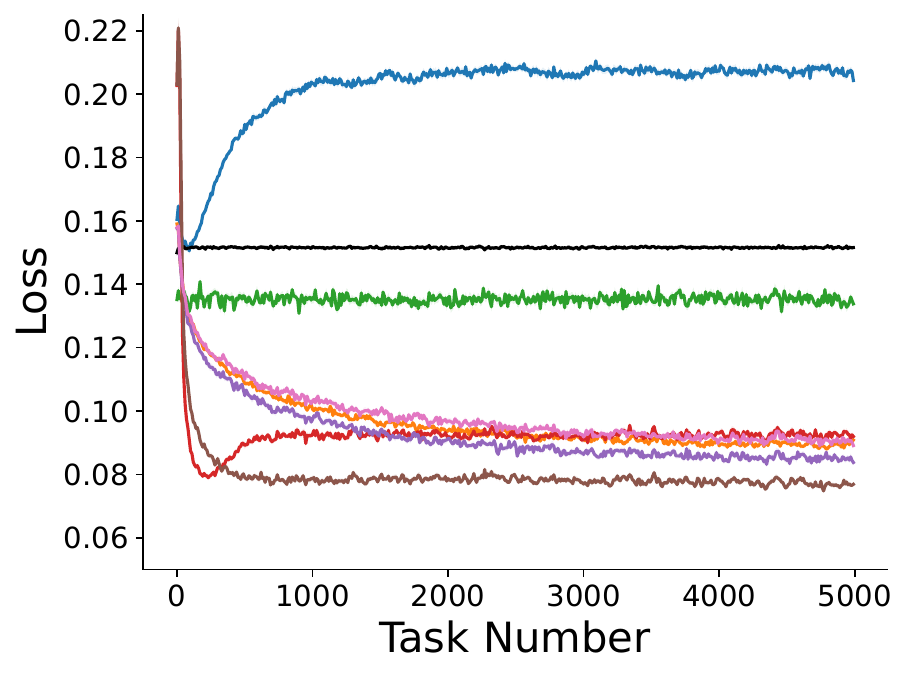}
}

\vspace{0.2cm}
\includegraphics[width=0.9\columnwidth]{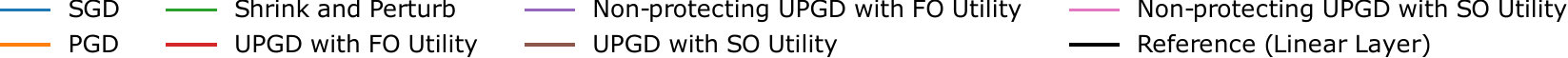}
\vspace{-0.2cm}
\caption{Utility-based Perturbed Gradient Descent with first-order and second-order approximated utilities using uncorrelated noise on the toy problem with changing inputs.}
\label{fig:toy-problem-changing-inputs-all}
\end{figure}

\begin{figure}[ht]
\centering
\subfigure[Weight-wise with Global Utility]{
    \includegraphics[width=0.28\textwidth]{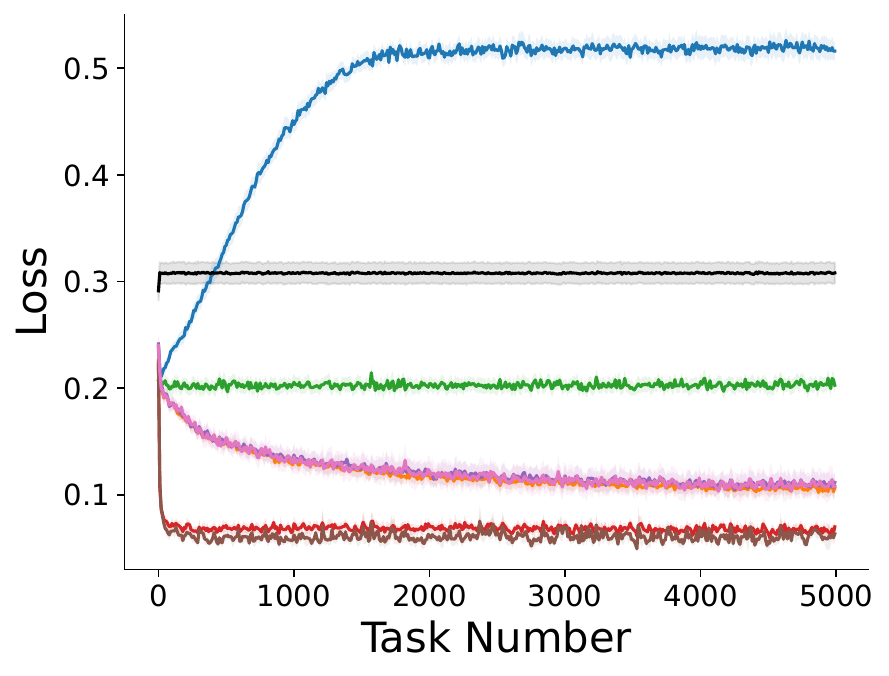}
}
\subfigure[Feature-wise with Global Utility]{
    \includegraphics[width=0.28\textwidth]{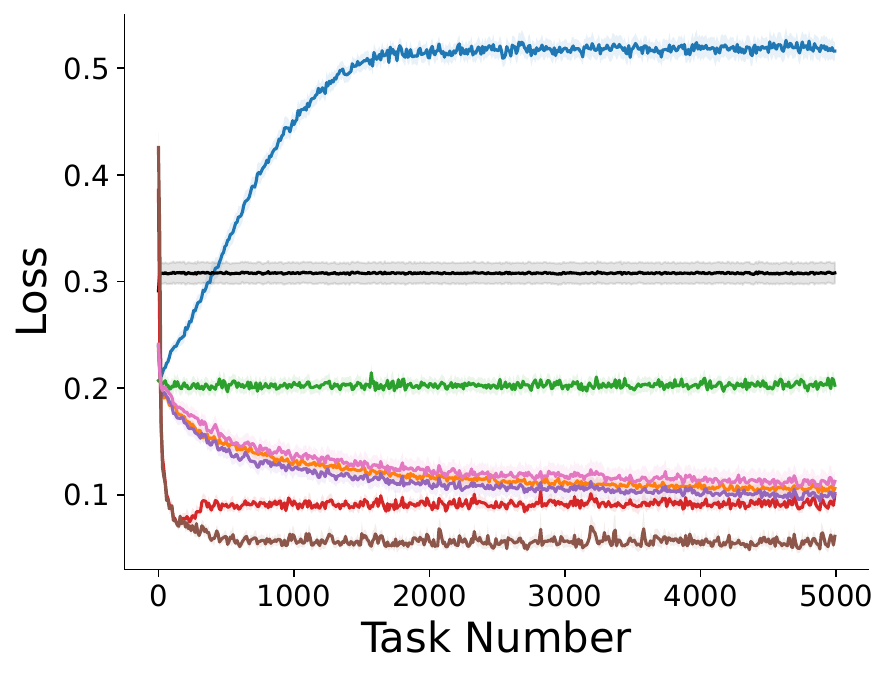}
}

\hfill

\subfigure[Weight-wise with Local Utility]{
    \includegraphics[width=0.28\textwidth]{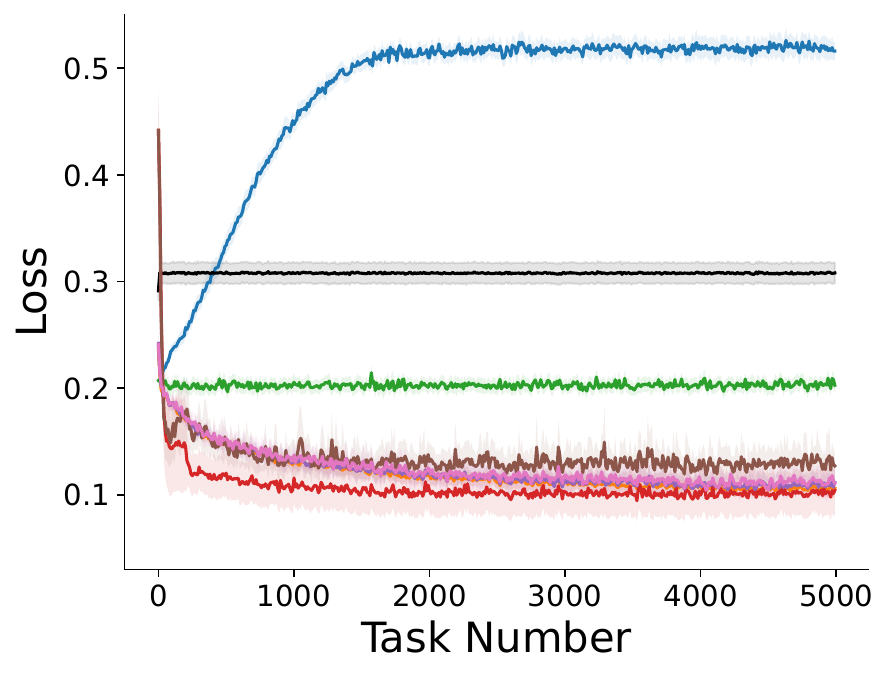}
}
\subfigure[Feature-wise with Local Utility]{
    \includegraphics[width=0.28\textwidth]{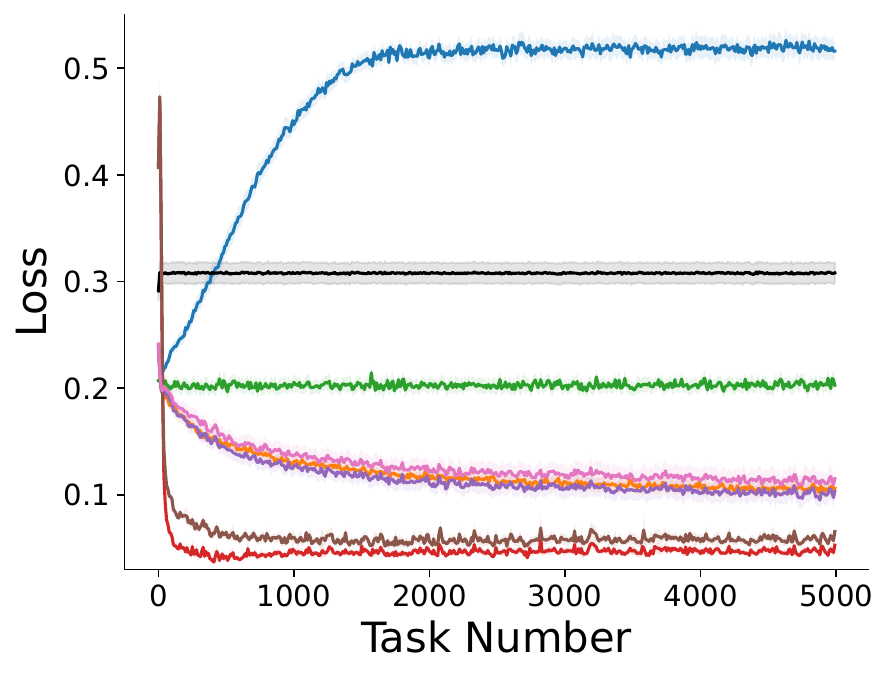}
}

\vspace{0.2cm}
\includegraphics[width=0.9\columnwidth]{figures/Legends/extended-utility.pdf}
\vspace{-0.2cm}
\caption{Utility-based Perturbed Gradient Descent with first-order and second-order approximated utilities using uncorrelated noise on the toy problem with changing outputs.}
\label{fig:toy-problem-changing-outputs-all}
\end{figure}

\subsection{UPGD on the Input-permuted MNIST}
\label{appendix:additional-input-permuted-mnist}
We repeat the experiment on the input-permuted MNIST but with feature-wise approximated utility. In addition, we show the results using the local approximated utility for both weight-wise and feature-wise UPGD in Fig.\ \ref{fig:input-permuted-all}.

\begin{figure}[ht]
\centering
\subfigure[Weight-wise with Global Utility]{
    \includegraphics[width=0.28\textwidth]{figures/InputPermutedMNIST/weight_global.pdf}
}
\subfigure[Feature-wise with Global Utility]{
    \includegraphics[width=0.28\textwidth]{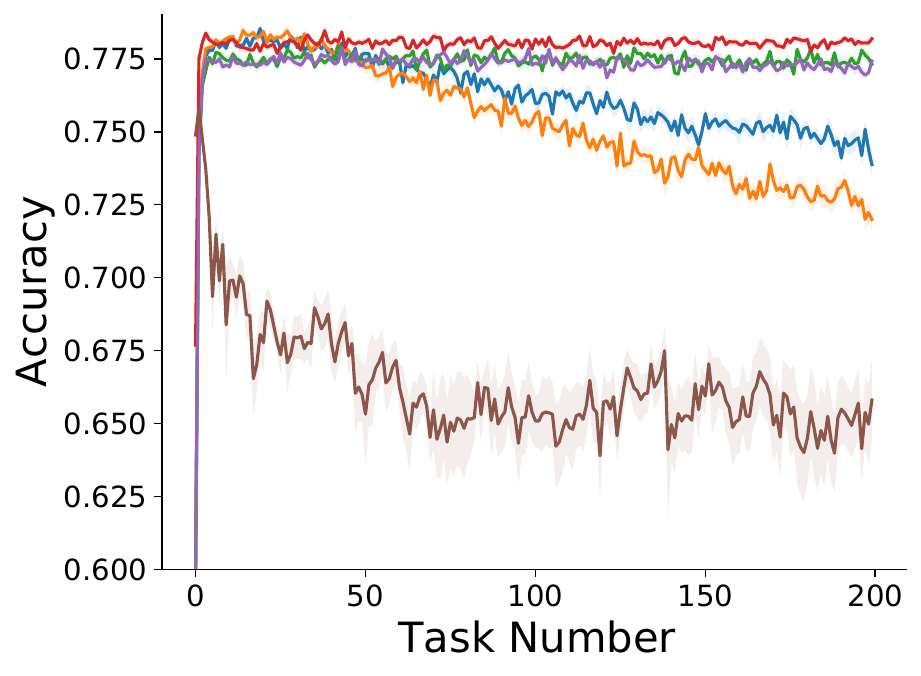}
}

\hfill

\subfigure[Weight-wise with Local Utility]{
    \includegraphics[width=0.28\textwidth]{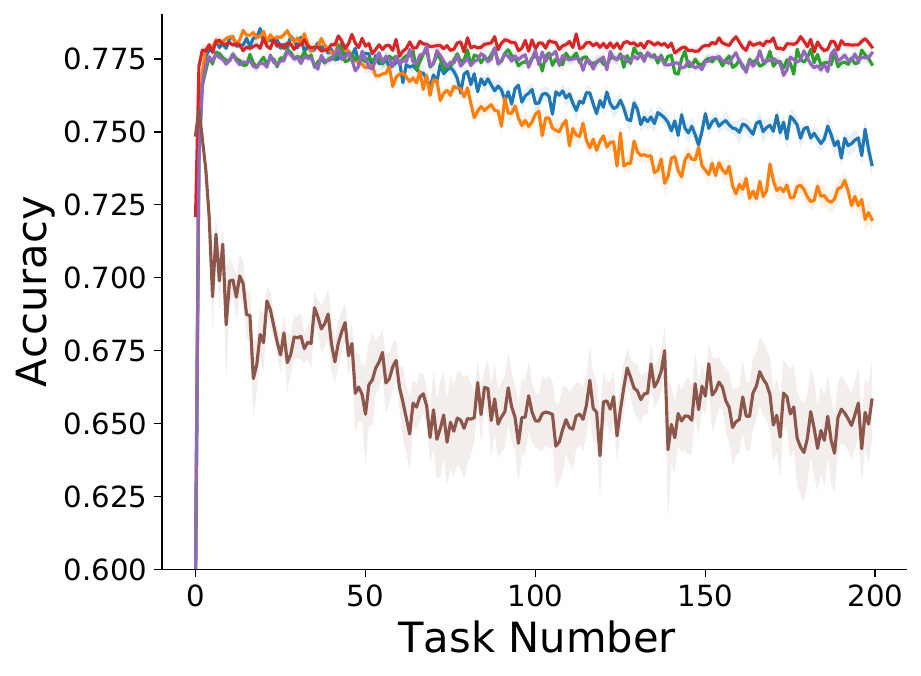}
}
\subfigure[Feature-wise with Local Utility]{
    \includegraphics[width=0.28\textwidth]{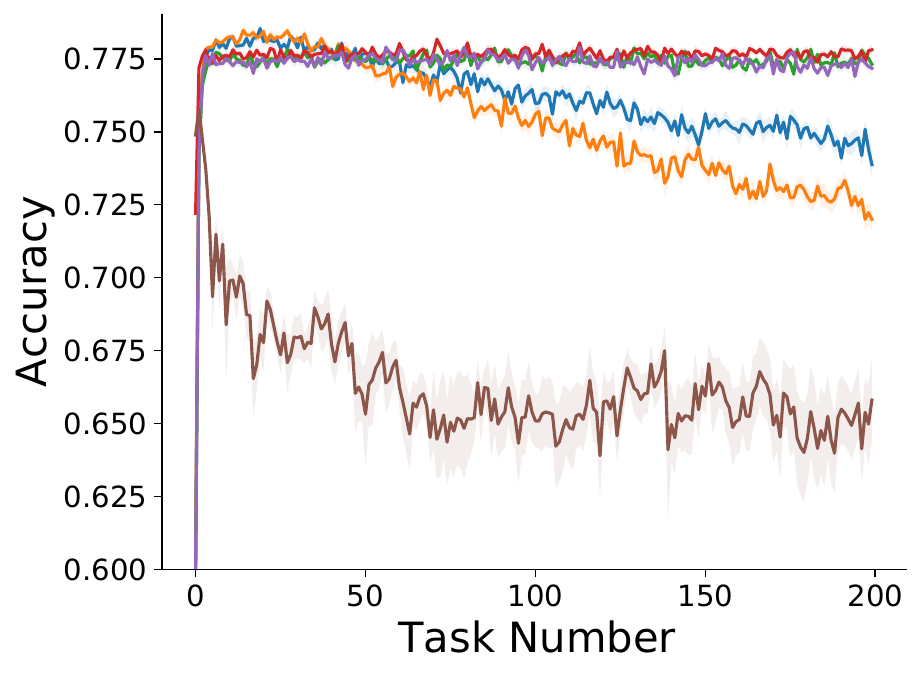}
}

\vspace{0.2cm}
\includegraphics[width=0.5\columnwidth]{figures/Legends/test_legend.pdf}
\vspace{-0.2cm}
\caption{Performance of UPGD, SGD, and Adam with weight decay, PGD and Shrink \& Perturb on Input-Permuted MNIST. A first-order utility is used with the two UPGDs.}
\label{fig:input-permuted-all}
\end{figure}

\subsection{UPGD on Output-permuted EMNIST}
\label{appendix:additional-output-permuted-emnist}
We repeat the experiment on the input-permuted MNIST but with feature-wise approximated utility. In addition, we show the results using the local approximated utility for both weight-wise and feature-wise UPGD in Fig.\ \ref{fig:output-permuted-emnist-all}.

\begin{figure}[ht]
\centering
\subfigure[Weight-wise with Global Utility]{
    \includegraphics[width=0.28\textwidth]{figures/LabelPermutedEMNIST/weight_global.pdf}
}
\subfigure[Feature-wise with Global Utility]{
    \includegraphics[width=0.28\textwidth]{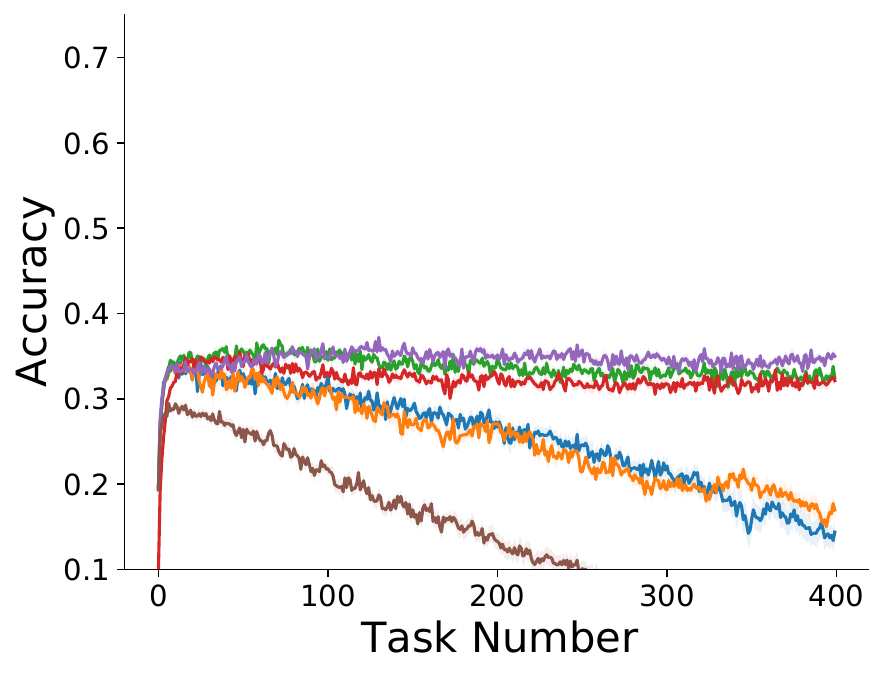}
}

\hfill

\subfigure[Weight-wise with Local Utility]{
    \includegraphics[width=0.28\textwidth]{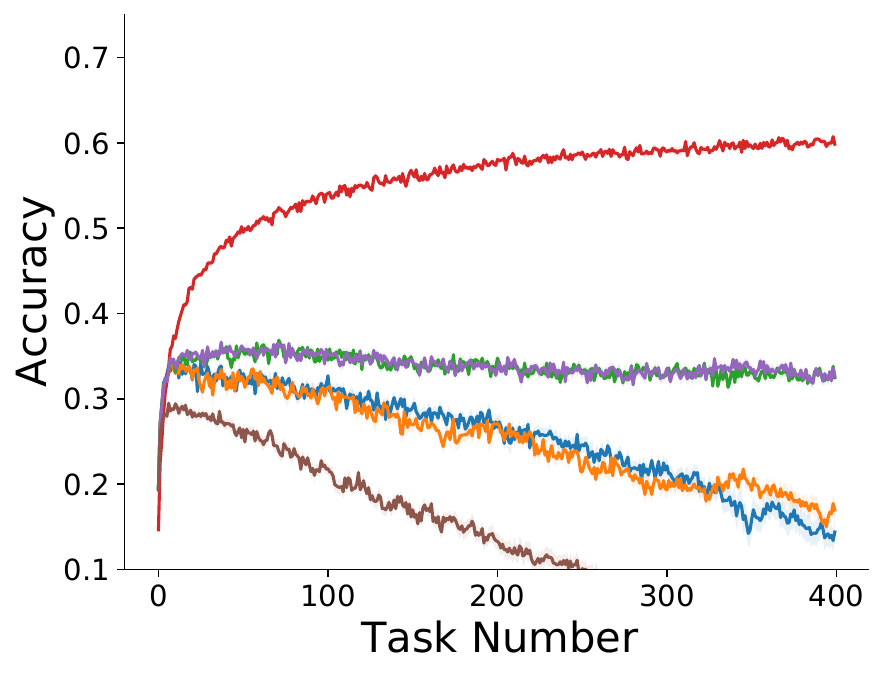}
}
\subfigure[Feature-wise with Local Utility]{
    \includegraphics[width=0.28\textwidth]{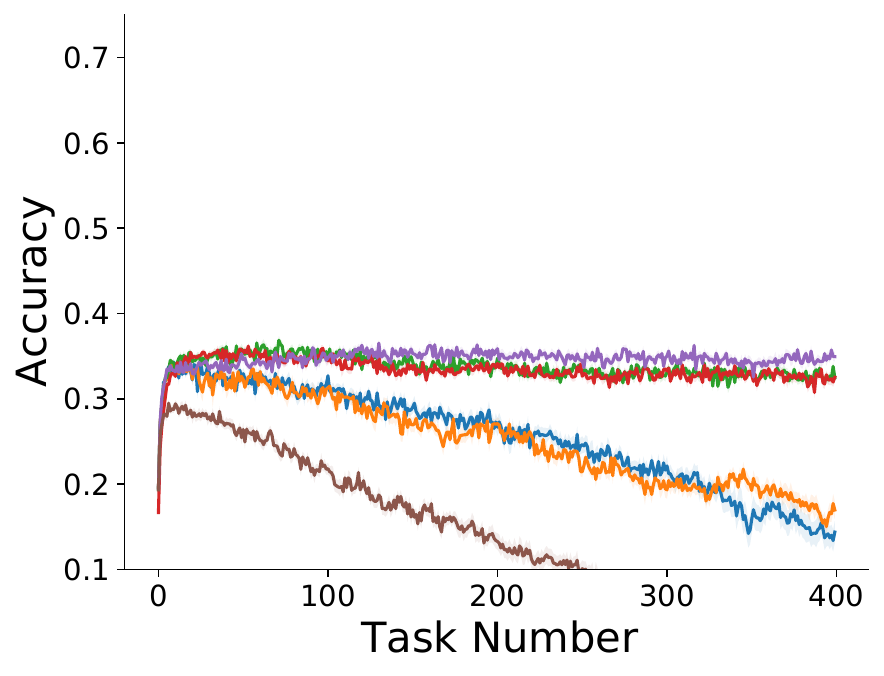}
}

\vspace{0.2cm}
\includegraphics[width=0.5\columnwidth]{figures/Legends/test_legend.pdf}
\vspace{-0.2cm}
\caption{Performance of UPGD, SGD, and Adam with weight decay, PGD and Shrink \& Perturb on Output-Permuted EMNIST. A first-order utility is used with the two UPGDs.}
\label{fig:output-permuted-emnist-all}
\end{figure}

\subsection{UPGD on Output-permuted CIFAR-10}
\label{appendix:additional-output-permuted-cifar10}
We repeat the experiment on the Output-permuted CIFAR-10 but with the local approximated utility. Fig.\ \ref{fig:output-permuted-cifar10-all} shows the results using local and global approximated utilities for UPGD and Non-protecting UPGD. 

\begin{figure}[ht]
\centering
\subfigure[Weight-wise with Global Utility]{
    \includegraphics[width=0.28\textwidth]{figures/LabelPermutedCIFAR10/weight_global.pdf}
}
\subfigure[Weight-wise with Local Utility]{
    \includegraphics[width=0.28\textwidth]{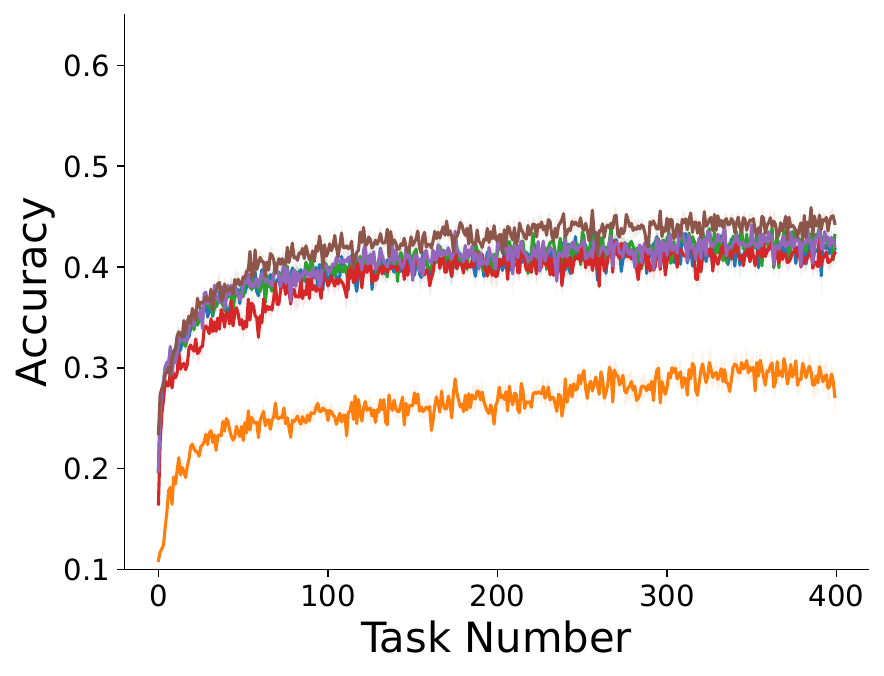}
}

\vspace{0.2cm}
\includegraphics[width=0.5\columnwidth]{figures/Legends/test_legend.pdf}
\vspace{-0.2cm}
\caption{Performance of UPGD, SGD, and Adam with weight decay, PGD and Shrink \& Perturb on Output-Permuted CIFAR-10. A first-order utility is used with the two UPGDs.}
\label{fig:output-permuted-cifar10-all}
\end{figure}


\section{Ablation Study on the Effect of Weight Decay and Weight Perturbation}
\label{appendix:ablation-study}
We conduct a short ablation study on the effect of weight decay and weight perturbation on performance. Fig.\ \ref{fig:ablation-study} shows that compared algorithms with weight decay (first row) and without weight decay (second row). Note that we removed Shrink \& Perturb from the second row and used PGD since Shrink \& Perturb is PGD with weight decay. We notice the significant effect of weight decay on most algorithms. However, UPGD can learn quite well without weight decay. We also notice that only using weight decay without perturbation (e.g., SGDW and AdamW) does not help maintain plasticity in all tasks. We conclude that both weight decay and weight perturbation are important in continual online learning.

\begin{figure}[ht]
\centering
\subfigure[MNIST (weight decay)]{
    \includegraphics[width=0.28\textwidth]{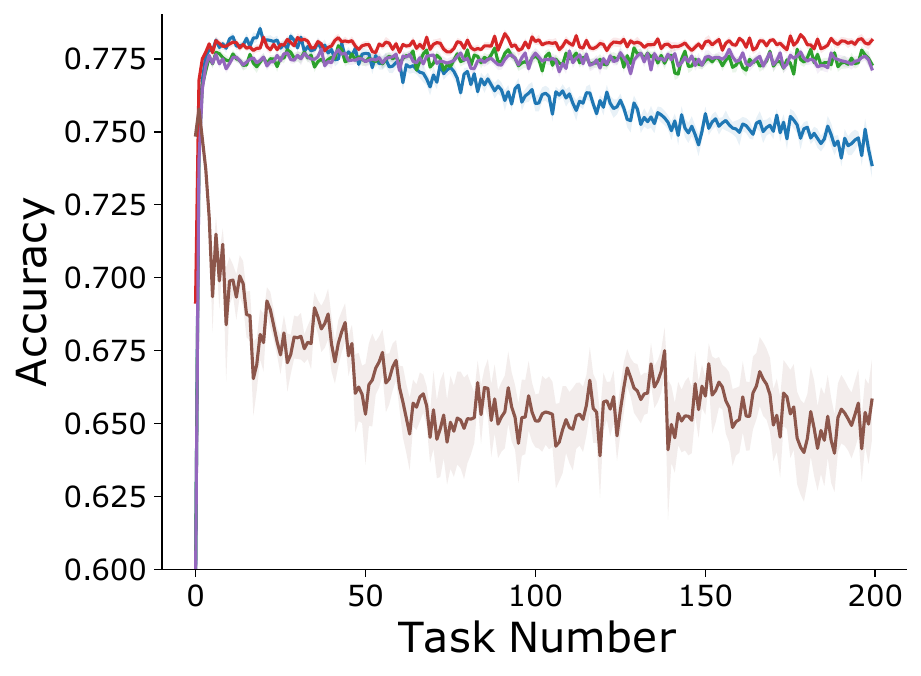}
}
\subfigure[EMNIST (weight decay)]{
    \includegraphics[width=0.28\textwidth]{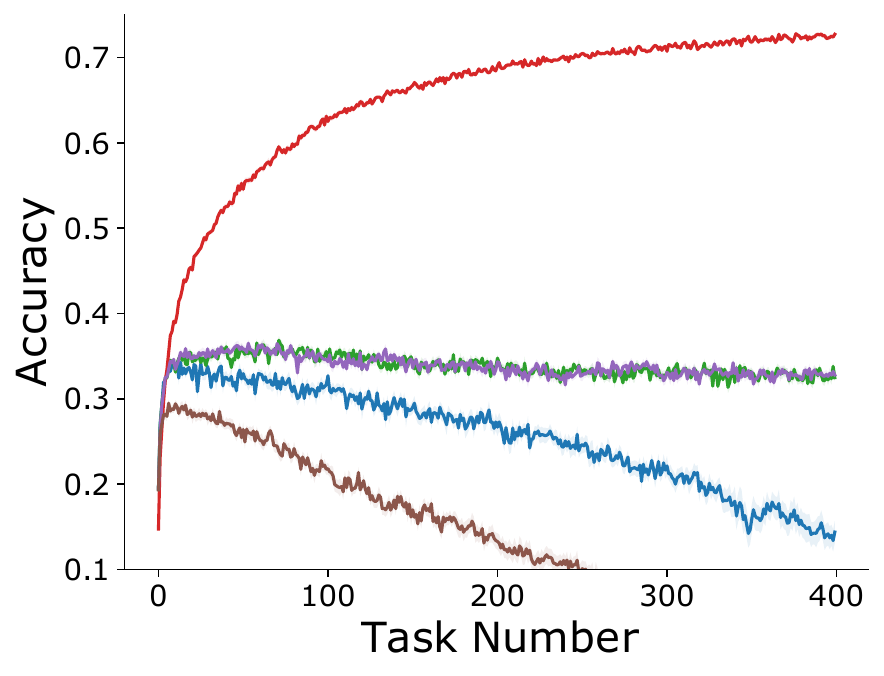}
}
\subfigure[CIFAR-10 (weight decay)]{
    \includegraphics[width=0.28\textwidth]{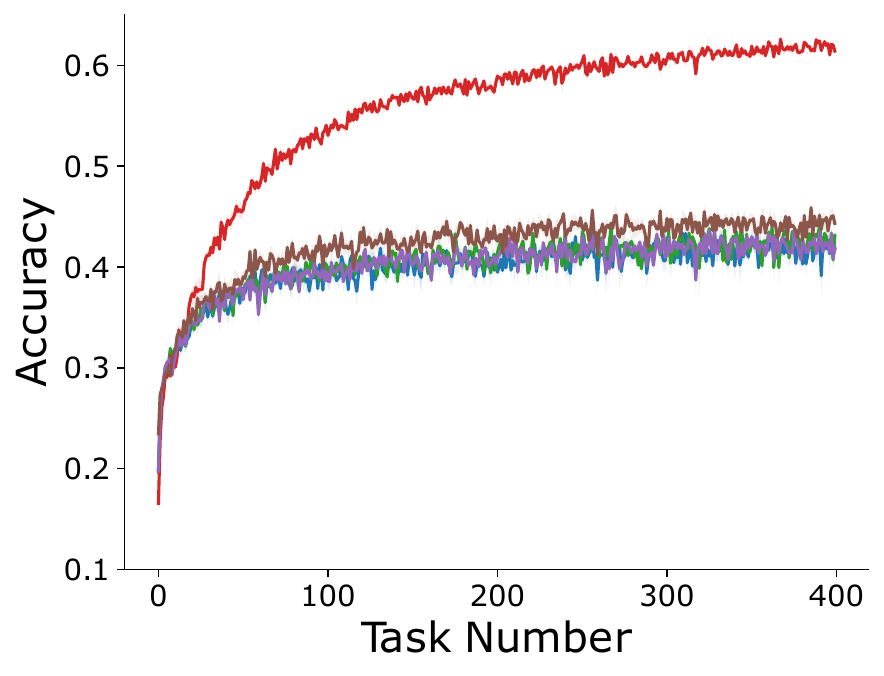}
}

\hfill

\subfigure[MNIST (no decay)]{
    \includegraphics[width=0.28\textwidth]{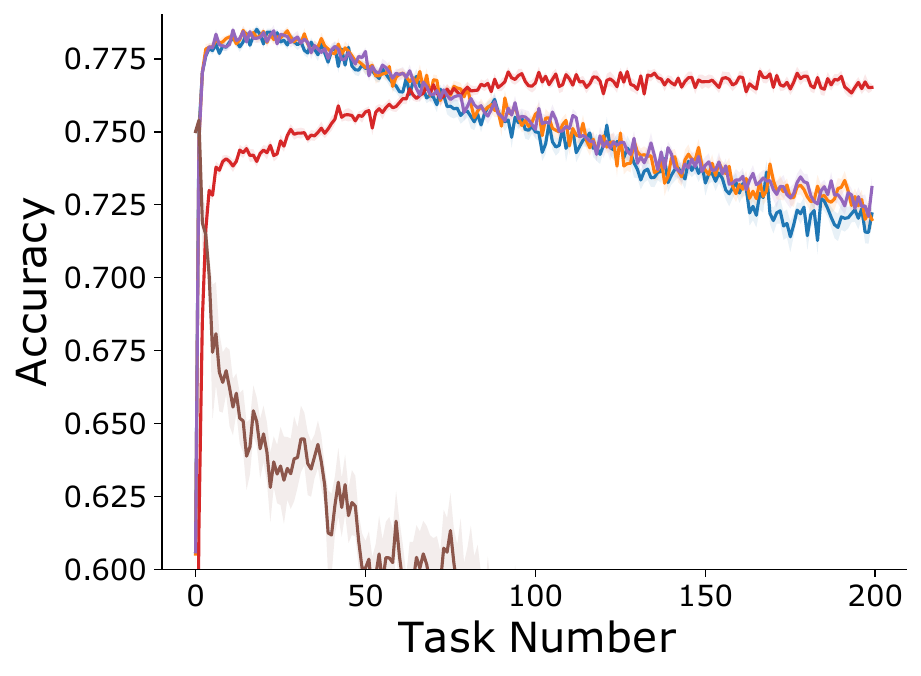}
}
\subfigure[EMNIST (no decay)]{
    \includegraphics[width=0.28\textwidth]{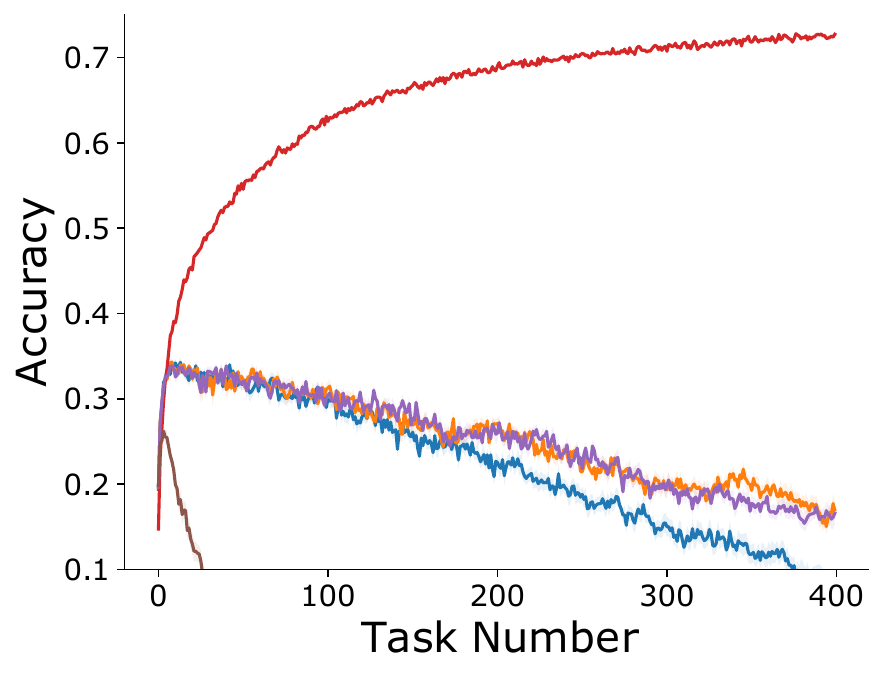}
}
\subfigure[CIFAR-10 (no decay)]{
    \includegraphics[width=0.28\textwidth]{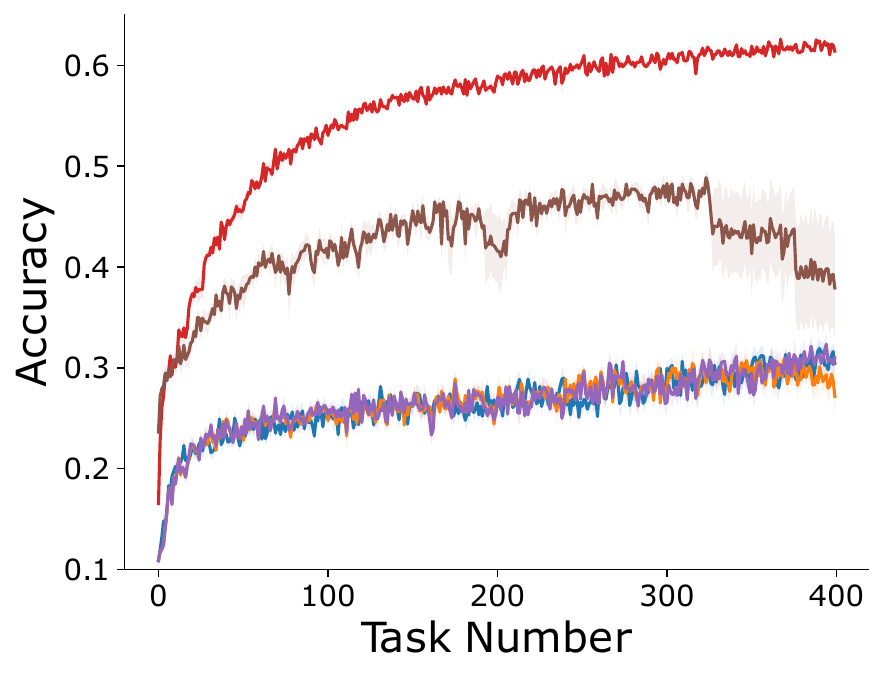}
}
\vspace{0.2cm}
\includegraphics[width=0.5\columnwidth]{figures/Legends/test_legend.pdf}
\vspace{-0.2cm}
\caption{Effect of weight perturbation against weight decay with weight-wise UPGD. Performance of UPGD, SGDW, AdamW, PGD, and Shrink \& Perturb on Input-Permuted MNIST, Output-Permuted EMNIST, and Output-Permuted CIFAR10. A global weight-wise first-order utility is used with the two UPGDs.}
\label{fig:ablation-study}
\end{figure}



\end{document}